\def\eqref#1{equation~\ref{#1}}
\def\1{\bm{1}}
\DeclareMathAlphabet{\mathsfit}{\encodingdefault}{\sfdefault}{m}{sl}
\SetMathAlphabet{\mathsfit}{bold}{\encodingdefault}{\sfdefault}{bx}{n}
\def\gD{{\mathcal{D}}}
\def\gL{{\mathcal{L}}}
\def\gP{{\mathcal{P}}}
\def\gR{{\mathcal{R}}}
\def\gS{{\mathcal{S}}}
\def\gT{{\mathcal{T}}}
\DeclareMathOperator*{\argmin}{arg\,min}
\newtheorem{theorem}{Theorem}
\newtheorem{proposition}{Proposition} 
\newtheorem{lemma}{Lemma} 
\newtheorem{assum}{Assumption}
\title{On the Convergence Theory for Hessian-Free Bilevel Algorithms}
\author{Daouda Sow$^{1}$, Kaiyi Ji$^2$, Yingbin Liang$^{1}$\\
$^{1}$Department of ECE, The Ohio State University\\ 
$^{2}$Department of EECS, University of Michigan, Ann Arbor\\
\texttt{sow.53@osu.edu, kaiyiji@umich.edu, liang889@osu.edu} \\ 

}
\begin{document}

\maketitle

\begin{abstract}

\noindent Bilevel optimization has arisen as a powerful tool in modern machine learning. However, due to the nested structure of bilevel optimization, even gradient-based methods require second-order derivative approximations via Jacobian- or/and Hessian-vector computations, which can be costly and unscalable in practice. Recently, Hessian-free bilevel schemes have been proposed to resolve this issue, where the general idea is to use zeroth- or first-order methods to approximate the full hypergradient of the bilevel problem. However, we empirically observe that such approximation can lead to large variance and unstable training, but estimating only the response Jacobian matrix as a partial component of the hypergradient turns out to be extremely effective. To this end, we propose a new Hessian-free method, which adopts the zeroth-order-like method to approximate the response Jacobian matrix via taking difference between two optimization paths. Theoretically, we provide the convergence rate analysis for the proposed algorithms, where our key challenge is to characterize the approximation and smoothness properties of the trajectory-dependent estimator, which can be of independent interest. This is the first known convergence rate result for this type of Hessian-free bilevel algorithms. Experimentally, we demonstrate that the proposed algorithms outperform baseline bilevel optimizers on various bilevel problems. Particularly, in our experiment on few-shot meta-learning with ResNet-12 network over the miniImageNet dataset, we show that our algorithm outperforms baseline meta-learning algorithms, while other baseline bilevel optimizers do not solve such meta-learning problems within a comparable time frame. 

\end{abstract}

\section{Introduction}






Bilevel optimization has recently arisen as a powerful tool to capture various modern machine learning problems, including meta-learning \citep{bertinetto2018meta,franceschi2018bilevel,rajeswaran2019meta,ji2020convergence,liu2021investigating}, hyperparamater optimization \citep{franceschi2018bilevel,shaban2019truncated}, neural architecture search \citep{liu2018darts,zhang2021idarts}, signal processing~\cite{flamary2014learning}, etc. Bilevel optimization generally takes the following mathematical form:
\begin{align}\label{prob_deter}
& \min_{x\in\mathbb{R}^{p}} \Phi(x):=f(x, y^*(x)), \quad \mbox{s.t.} \quad  y^*(x)= \argmin_{y\in\mathbb{R}^{d}} g(x,y),
\end{align}
where the outer and inner objectives $f: \mathbb{R}^p \times \mathbb{R}^d \rightarrow \mathbb{R}$ and $g: \mathbb{R}^p \times \mathbb{R}^d \rightarrow \mathbb{R}$ are both continuously differentiable with respect to (w.r.t.) the inner and outer variables $x \in \mathbb{R}^p$ and $y \in \mathbb{R}^d$. 

Gradient-based methods have served as a popular tool for solving bilevel optimization problems. Two types of approaches have been widely used: the iterative differentiation (ITD) method \citep{domke2012opt,franceschi2017forward,shaban2019truncated} and the approximate iterative differentiation (AID) method \citep{domke2012opt,pedregosa2016hyperparameter,lorraine2020ho}. 
Due to the bilevel structure of the problem, even such gradient-based methods typically involve {\em second-order} matrix computations, because 
the gradient of $\Phi(x)$ in \cref{prob_deter} (which is called {\em hypergradient}) involves the optimal solution of the inner function. Although the ITD and AID methods often adopt Jacobian- or/and Hessian-vector implementations, the computation can still be very costly in practice for high-dimensional problems with neural networks. 
 

To overcome the computational challenge of current gradient-based methods, a variety of Hessian-free bilevel algorithms have been proposed. For example, popular approaches such as FOMAML~\cite{finn2017model,liu2018darts} ignores the calculations of any second-order derivatives. However, such a trick has no guaranteed performance and may suffer from inferior test performance~\cite{antoniou2018train,fallah2020convergence}. In addition, when the outer-level function $f(x,y)$ depends only on $y$ variable, e.g., in some hyperparameter optimization applications~\cite{franceschi2018bilevel}, it can be shown from  \cref{eq:truehg} that the hypergradient $\nabla\Phi(x)$ vanishes if we eliminate all second-order directives. More recently, several zeroth-order methods~\cite{binzero} have been proposed to approximate the full hypergradient $\nabla\Phi(x)$. In particular, ES-MAML~\cite{song2019maml} and HOZOG~\cite{binzero} use zeroth-order methods to approximate the {\bf full} hypergraident based on the objective function evaluations.
\begin{wrapfigure}{r}{0.5\textwidth}
\includegraphics[width=.24\columnwidth, height=.2\columnwidth]{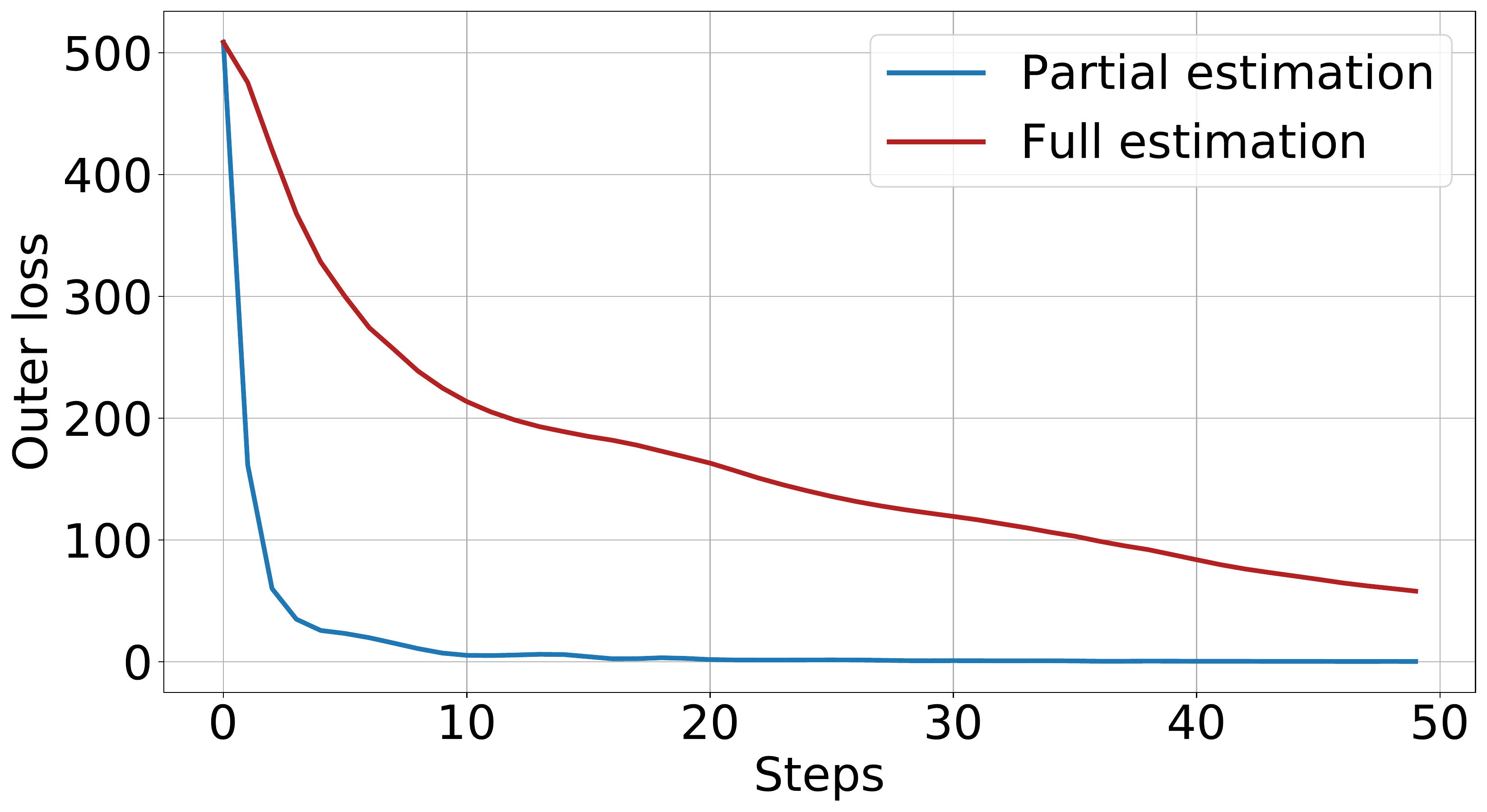} 
\includegraphics[width=.24\columnwidth, height=.2\columnwidth]{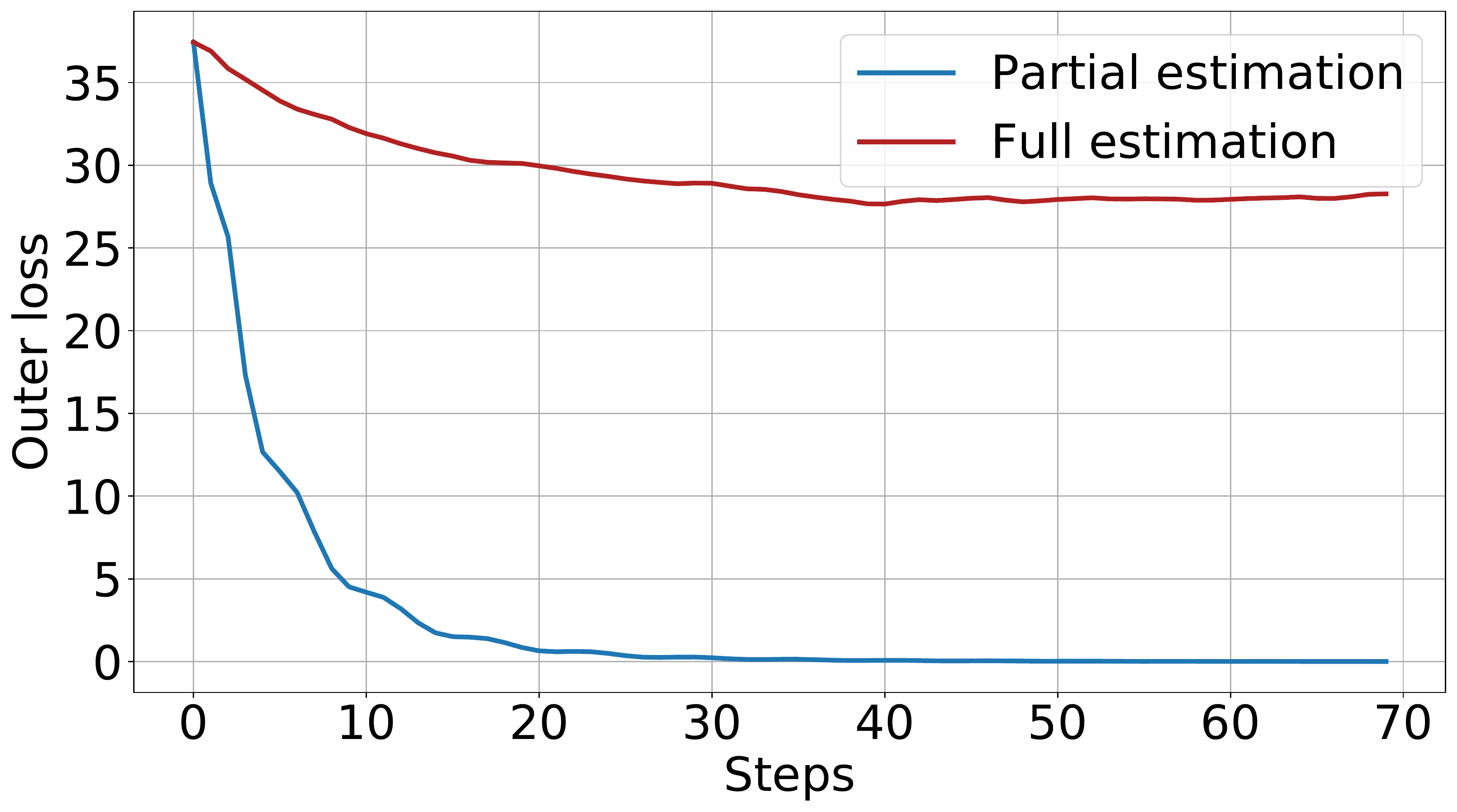}
\caption{Hyper-representation (HR) with
linear (left)/2-layer net (right) embedding model.}
\label{fig:partial_vs_full}
\end{wrapfigure}
However, as demonstrated in \Cref{fig:partial_vs_full} (also see our further experiments in~\Cref{sec:exp}), we empirically observe that such full hypergradient estimation can encounter a large variance and inferior performance, whereas a new zeroth-order-like method PZOBO with {\bf partial} hypergradient estimation (as we propose below) performs significantly better.
\begin{list}{$\bullet$}{\topsep=0ex \leftmargin=0.3in \rightmargin=0.in \itemsep =0.0in}
\item Encouraged by such an interesting observation, we propose a simple but effective Hessian-free method named PZOBO, which stands for partial zeroth-order-like  bilevel optimizer. PZOBO uses a zeroth-order-like approach to approximate only the response Jacobian $\frac{\partial y^*(x)}{\partial x}$ (which is the major computational bottleneck of hypergradient) based on the difference between two gradient-based optimization paths. The remaining terms in hypergradient can simply be calculated by their analytical forms. In this way, PZOBO avoids the large variance of the zeroth-order method for estimating the entire hypergradient, but still enjoys its Hessian-free advantage. 
We further show that PZOBO admits an easy extension to the large-scale stochastic setting by simply taking small-batch gradients without introducing any complex sub-procedure such as the Neumann Series (NS) type of construction in \cite{ji2021bilevel}. Experimentally, PZOBO and its stochastic version PZOBO-S achieve superior performance compared to the current baseline bilevel optimizers, and such an improvement is robust  across various bilevel problems. In particular, on the few-shot meta-learning problem over a ResNet-12 network, PZOBO outperforms the state-of-the-art optimization-based meta learners (not necessarily bilevel optimizers), while other bilevel optimizers do not scale to solve such meta-learning problems within a comparable time frame.    
\end{list}

Furthermore, our zeroth-order-like response Jacobian estimator in PZOBO takes the difference between two optimization-based trajectories, whereas the vanilla zeroth-order method uses the function value difference at two close points for the approximation. Such difference complicates the analysis of PZOBO from three perspectives. (a) Conventional zeroth-order analysis often requires some Lipshitzness properties (e.g., smoothness) of the objective function, whereas they may not hold for the optimization-based output of our response Jacobian estimator. (b) It is unclear if the trajectory-based Hypergradient estimator has bounded error at each iteration, which is critical in the convergence rate analysis for bilevel optimization~\cite{ghadimi2018approximation,ji2021bilevel,hong2020two}). (c) Such characterizations are more challenging for stochastic settings because the randomness along the entire trajectory needs to be considered. 
\begin{list}{$\bullet$}{\topsep=0ex \leftmargin=0.3in \rightmargin=0.in \itemsep =0.0in}
\item Theoretically, we provide the convergence rate guarantee for both PZOBO and PZOBO-S. To the best of our knowledge, this is the first known non-asymptotic performance guarantee for Hessian-free bilevel algorithms via zeroth-order-like approximation. Technically, in contrast to the conventional analysis of the zeroth-order estimation on the smoothed blackbox function values, we develop tools to characterize the bias, variance, smoothness and boundedness of the proposed response Jacobian estimator via a recursive analysis over the gradient-based optimization path, which can be of independent interest for zeroth-order estimation  based bilevel optimization.          
\end{list}

\subsection{Related Work} 




\textbf{Bilevel optimization.} Bilevel optimization has been studied for decades since~\cite{bracken1973mathematical}. A variety of bilevel optimization algorithms were then developed, including constraint-based approaches~\citep{hansen1992new,shi2005extended,moore2010bilevel}, approximate implicit differentiation (AID) approaches  \citep{domke2012opt,pedregosa2016hyperparameter,gould2016differentiating,liao2018opt,lorraine2020ho} and iterative differentiation (ITD) approaches~\citep{domke2012opt,maclaurin2015gradient,franceschi2017forward,finn2017model,shaban2019truncated,rajeswaran2019meta,liu2020generic}. These methods often suffer from expensive computations of second-order information (e.g., Hessian-vector products). Hessian-free algorithms were recently proposed based on interior-point method~\cite{liu2021value} and zeroth-order approximation~\cite{binzero,song2019maml}. \cite{liu2021towards} proposed an initialization auxiliary method to deal with nonconvex inner problem. 
This paper proposes a more efficient Hessian-free approach via exploiting the benign structure of the hypergradient and a zeroth-order-like response Jacobian estimation. 
Recently, the convergence rate has been established for gradient-based (Hessian involved) bilevel algorithms \citep{grazzi2020bo,ji2021bo,rajeswaran2019meta,ji2021lower}. This paper provides a  new convergence analysis for the proposed zeroth-order-like bilevel approach. 


\noindent \textbf{Stochastic bilevel optimization.} \cite{ghadimi2018approximation,ji2021bo,hong2020two} proposed stochastic gradient descent (SGD) type bilevel optimization algorithms by employing Neumann Series for Hessian-inverse-vector approximation. Recent works \citep{khanduri2021momentum, khanduri2021near, chen2021single, guo2021stochastic, guo2021randomized, yang2021provably} then leveraged momentum-based variance reduction to further reduce the computational complexity of existing SGD-type bilevel optimizers from $\mathcal{O}(\epsilon^{-2})$ to $\mathcal{O}(\epsilon^{-1.5})$. In this paper, we propose a stochastic Hessian-free method, which eliminates the computation of second-order information required by all aforementioned stochastic methods. 

\noindent \textbf{Bilevel optimization applications.} Bilevel optimization has been employed in various applications such as  few-shot meta-learning~\citep{snell2017prototypical,franceschi2018bilevel,rajeswaran2019meta,zugner2019adversarial,ji2020convergence,ji2020multi}, hyperparameter optimization~\citep{franceschi2017forward,mackay2018self,shaban2019truncated}, neural architecture search~\citep{liu2018darts,zhang2021idarts}, etc. For example, \cite{mackay2018self} models the response function itself as a neural network (where each layer involves an affine transformation of hyperparameters) using the Self-Tuning Networks (STNs). An improved and more stable version of STNs was further proposed in \cite{deltaSTNs}, which focused on accurately approximating the response Jacobian rather than the response function itself.  
This paper demonstrates the superior performance of the proposed Hessian-free bilevel optimizer with guaranteed performance in meta-learning and hyperparameter optimization. 


\noindent \textbf{Zeroth-order methods and applications.} Zeroth-order optimization methods have been studied for a long time. For example, \cite{nesterov2017es} proposed an effective zeroth-order gradient estimator via Gaussian smoothing, which was further extended to the stochastic setting by \cite{ghadimi2013stochastic}. Such a technique has exhibited great effectiveness in various applications including meta-reinforcement learning~\citep{song2019maml}, hyperparameter optimization~\citep{binzero}, adversarial machine learning~\citep{ji2019improved,liu2018zeroth}, minimax optimization~\citep{liu2020min,xu2020gradient}, etc. This paper proposes a novel Jacobian estimator for accelerating bilevel optimization based on an idea similar to zeroth-order estimation via the difference between two optimization paths.

\vspace{-2mm}
\section{Proposed Algorithms}\label{sec:propalg}
\vspace{-2mm}
\subsection{Hypergradients}\label{sechg}
The key step in popular gradient-based bilevel optimizers is the estimation of the hypergradient (i.e., the gradient of the  objective with respect to the outer variable $x$), which takes the following form: 
\begin{align} 
\nabla \Phi(x) = \nabla_x f(x, y^*(x)) + \mathcal{J}_{*}(x)^{\top} \nabla_y f(x, y^*(x)) \label{eq:truehg} 
\end{align} 
where the Jacobian matrix $\mathcal{J}_{*}(x) = \frac{\partial y^*(x)}{\partial x} \in \mathbb{R}^{d \times p}$. Following \cite{lorraine2020ho}, it can be seen that $\nabla \Phi(x)$ contains two components: the direct gradient $\nabla_x f(x, y^*(x))$ and the indirect gradient $\mathcal{J}_{*}(x)^{\top} \nabla_y f(x, y^*(x))$. The direct component can be efficiently computed using the existing automatic differentiation techniques. The indirect component, however, is computationally much more complex, because $\mathcal{J}_*(x)$ takes the form of $\mathcal{J}_*(x) = -\left[\nabla_{y}^{2} g\left(x, y^{*}(x)\right)\right]^{-1} \nabla_{x} \nabla_{y} g\left(x, y^{*}(x)\right)$ (if $\nabla_{y}^{2} g\left(x, y^{*}(x)\right)$ is invertible), which contains the Hessian inverse and the second-order mixed derivative. Some approaches mitigate the issue by designing Jacobian-vector and Hessian-vector products 
\citep{pedregosa2016hyperparameter, franceschi2017forward, grazzi2020bo} to replace second-order computations. 
But the computation is still costly for high-dimensional bilevel problems such as those with neural network variables.
We next introduce the zeroth-order approach which is at the core for designing efficient Hessian-free bilevel optimizers.

\subsection{Zeroth-Order Approximation}
Zeroth-order approximation is a powerful technique to estimate the gradient of a function based on function values, when it is not feasible (such as in black-box problems) or computationally costly to evaluate the gradient. 
The idea of the zeroth-order method in \cite{nesterov2017es} is to approximate the gradient of a general black-box function $h: \mathbb{R}^n \rightarrow \mathbb{R}$ using the following oracle based only on the function values 
\begin{equation} \label{eq:nesgrad}
  \widehat \nabla h(x;u) = \frac{h(x+\mu u) - h(x)} {\mu} u
\end{equation}
where $u \in \mathbb{R}^n$ is a Gaussian random vector and $\mu > 0$ is the smoothing parameter. Such an oracle can be shown to be an unbiased estimator of the gradient of the smoothed function  $\mathbb{E}_u \left[h(x+\mu u)\right]$. 





\subsection{Proposed Bilevel Optimizers} 

Our key idea is to exploit the analytical structure of the hypergradient in \cref{eq:truehg}, where the derivatives $\nabla_x f(x, y^*(x))$ and $\nabla_y f(x, y^*(x))$ can be computed efficiently and accurately, and hence use the zero-order estimator similar to \cref{eq:nesgrad} to estimate only the Jacobian $\mathcal{J}_{*}(x)$, which is the major term posing computational difficulty. In this way, our estimation of the hypergradient can be much more accurate and reliable.
%
In particular, our Jacobian estimator contains two ingredients: \textbf{(i)} for a given $x$, apply an algorithm to solve the inner optimization problem and use the output as an approximation of $y^*(x)$; for example, the output $y^N(x)$ of $N$ gradient descent steps of the inner problem can serve as an estimate for $y^*(x)$. Then $\mathcal{J}_N(x)= \frac{\partial y^N(x)}{\partial x}$ serves as an estimate of $\mathcal{J}_*(x)$; and \textbf{(ii)} construct the zeroth-order-like Jacobian estimator $\hat{\mathcal{J}}_{N}\left(x;u\right)\in R^{d\times p} $ for $\mathcal{J}_N(x)$ as
\begin{align}
\hat{\mathcal{J}}_{N}\left(x;u\right) =\frac{y^N(x+\mu u) - y^N(x)} {\mu} u^{\top} \label{eq:zoj}
\end{align}
where $u \in \mathbb{R}^p$ is a Gaussian vector with independent and identically distributed (i.i.d.)~entries. Then for any vector $v \in \mathbb{R}^d$, the Jacobian-vector product can be efficiently computed using only vector-vector dot product
${\hat{\mathcal{J}}}_{N}(x;u)^\top v = \left\langle\delta(x;u), v\right\rangle u$, where
$\delta(x;u) = \frac{y^N(x+\mu u) - y^N(x)} {\mu} \in \mathbb{R}^d$.
\begin{algorithm}[t]
	\caption{Partial Zeroth-Order-like Bilevel Optimizer (PZOBO)}
	\small
	\label{alg:zojd}
	\begin{algorithmic}[1]
		\STATE {\bfseries Input:} lower- and upper-level stepsizes $\alpha, \beta >0$, initializations $x_0 \in \mathbb{R}^p$ and $y_0 \in \mathbb{R}^d$, inner and outer iterations numbers $K$ and $N$, and number of Gaussian vectors $Q$. 
		\FOR{$k=0,1,2,...,K$}
		\STATE{Set $y^0_k = y_0, \; y^0_{k,j} = y_0,\; j=1,...,Q$}
		\FOR{$t=1,2,...,N$}
		\STATE{Update $y^t_k = y^{t-1}_k-\alpha \nabla_y g(x_k,y^{t-1}_k)$}
		\ENDFOR
		\FOR{$j=1,...,Q$}
		\STATE{Generate $u_{k,j} = \mathcal{N}(0, I) \in \mathbb{R}^p$} 
		\FOR{$t=1,2,...,N$}
		\vspace{0.05cm}
		\STATE{Update $y^t_{k,j} = y^{t-1}_{k,j}-\alpha \nabla_y g\big(x_k + \mu u_{k,j},y^{t-1}_{k,j}\big)$}
		\ENDFOR
		\STATE{Compute $\delta_j = \frac{y^N_{k,j} - y^N_k}{\mu}$}
		\ENDFOR
		\STATE{Compute $\widehat \nabla \Phi(x_k) = \nabla_x f(x_k, y^N_k) + \frac{1}{Q} \sum_{j=1}^{Q} \left\langle\delta_j, \nabla_y f(x_k, y^N_k)\right\rangle u_{k,j}$}
                  
                 \STATE{Update $x_{k+1} = x_k - \beta \widehat \nabla \Phi(x_k) $}
		\ENDFOR
	\end{algorithmic}
	\end{algorithm}

\noindent \textbf{PZOBO: partial zeroth-order based bilevel optimizer.} For the bilevel problem in \cref{prob_deter}, we design an optimizer (see \Cref{alg:zojd}) using the Jacobian estimator in \cref{eq:zoj}, which we call as the PZOBO algorithm. Clearly, the zeroth-order estimator is used only for estimating partial hypergradient.
At each step $k$ of the algorithm, PZOBO runs an $N$-step full GD to approximate $y_k^N(x_k)$. PZOBO then samples $Q$ Gaussian vectors {\small$\{u_{k,j} \in \mathcal{N}(0, I), j=1,..., Q\}$}, and for each  sample $u_{k,j}$, runs an $N$-step full GD to approximate $y_k^N(x_k+\mu u_{k,j})$, and then computes the Jacobian estimator $\hat{\mathcal{J}}_N(x; u_{k,j})$ as in \cref{eq:zoj}. Then the sample average over the $Q$ estimators 
is used for constructing the following hypergradient estimator for updating the outer variable $x$.
\begin{align} \label{eq:hgest}
\textstyle \widehat \nabla \Phi(x_k)&\textstyle = \nabla_x f(x_k, y^N_k) + \frac{1}{Q} \sum_{j=1}^{Q} \hat{\mathcal{J}}^T_N(x_k; u_{k,j}) \nabla_y f(x_k, y^N_k) \nonumber \\
&\textstyle = \nabla_x f(x_k, y^N_k) + \frac{1}{Q} \sum_{j=1}^{Q} \left\langle\delta(x_k; u_{k,j}), \nabla_y f(x_k, y^N_k)\right\rangle u_{k,j}.
\end{align}
In our experiments (see \Cref{sec:exp}), we choose a small constant-level $Q=\mathcal{O}(1)$ (e.g., $Q=1$ in most applications with neural nets) due to a much better performance. Our convergence guarantee holds for this case, as shown later.

Computationally, in contrast to the existing AID and ITD bilevel optimizers~\cite{pedregosa2016hyperparameter,franceschi2018bilevel,grazzi2020bo} that contains the complex Hessian- and/or Jacobian-vector product computations, PZOBO 
has only gradient computations and becomes Hessian-free, and hence is  much more efficient as shown in our experiments. 

\noindent \textbf{PZOBO-S: stochastic PZOBO.}
In machine learning applications, the loss functions $f,g$ in \cref{prob_deter} often take finite-sum forms over given data $\gD_{n,m}=\{\xi_i,\zeta_j,i=1,...,n,j=1,...,m\}$ as below.
\begin{align}\label{eq:prob_stoc}
f(x,y) & \textstyle = \frac{1}{n}\sum_{i=1}^nF(x,y;\xi_i), \quad g(x,y)  \textstyle =\frac{1}{m}\sum_{i=1}^mG(x,y;\zeta_i)
\end{align}
where the sample sizes $n$ and $m$ are typically very large. 
For such a large-scale scenario, 
we design a \textbf{stochastic} PZOBO bilevel optimizer (see \Cref{alg:zojs} in \Cref{app:algesjs}), which we call as PZOBO-S.

Differently from \Cref{alg:zojd}, which applies GD updates to find $y^N(x_k)$, PZOBO-S uses $N$ stochastic gradient descent (SGD) steps to find {\small$\{Y_k^N, Y_{k,1}^N, ..., Y_{k,Q}^N\}$} to the inner problem, each with the outer variable set to be $x_k + \mu u_{k,j}$. 
Note that all SGD runs follow the same batch sampling path $\{\gS_0,...,\gS_{N-1}\}$. 
The Jacobian estimator $\hat{\mathcal{J}}_N(x_k; u_{k,j})$ can then be computed as in \cref{eq:zoj}. At the outer level, PZOBO-S  samples a new batch $\gD_F$ independently from the inner batches $\{\gS_0, ..., \gS_{N-1}\}$ to evaluate the stochastic gradients {\small$\nabla_x F(x_k, Y^N_k; \gD_F)$} and {\small$\nabla_y F(x_k, Y^N_k; \gD_F)$}. The hypergradient $\widehat \nabla \Phi(x_k)$ is then estimated as
\begin{align}
\vspace{-0.1cm}
& \textstyle \widehat \nabla \Phi(x_k) = 
\nabla_x F(x_k, Y^N_k;\gD_F) + \frac{1}{Q}\sum_{j=1}^{Q} \left\langle\delta(x_k; u_{k,j}), \nabla_y F(x_k, Y^N_k; \gD_F)\right\rangle u_{k,j}. \label{eq:hgeststoc}
\end{align}
	
\section{Main Theroetical Results}\label{sec:convanal}
\subsection{Technical Assumptions} \label{subsec:ass}


In this paper, we consider the following types of objective functions.
\begin{assum}\label{assum:scvx}
The inner function $g(x,y)$ is $\mu_g$-strongly convex with respect to $y$ and the outer function $f(x,y)$ is possibly nonconvex w.r.t.\ $x$. For the finite-sum case, the same assumption holds for functions $G(x,y;\zeta)$ and $ F(x,y;\zeta)$
\end{assum} 
The above assumption on $f,g$ has also been adopted in~\cite{ghadimi2018approximation, ji2021bo,yang2021provably}. 
In fact, many bilevel machine learning problems satisfy this assumption. For example, in few-shot meta-learning, the task-specific parameters are likely the weights of the last classification layer so that the resulting bilevel problem has a strongly-convex inner problem \citep{raghu2019rapid, liu2021value}.  
\begin{assum}\label{assum:inner}
Let $w=(x,y)$. 
The gradient $\nabla g(w)$ is $L_g$-Lipschitz continuous, i.e., for any $w_1, w_2,\, \|\nabla g(w_1) - \nabla g(w_2)\| \leq L_g \|w_1 - w_2 \|$; further, the derivatives $\nabla_y^2 g(w)$ and $\nabla_x \nabla_y g(w)$ are $\rho$- and $\tau$-Lipschitz continuous, i.e, $\|\nabla_y^2 g(w_1) - \nabla_y^2 g(w_2)\|_F \leq \rho \|w_1 - w_2 \|$ and $\|\nabla_x \nabla_y g(w_1) - \nabla_x \nabla_y g(w_2)\|_F \leq \tau \|w_1 - w_2 \|$.
The same assumptions hold for $G(w;\zeta)$ in the finite-sum case.
\end{assum}
\begin{assum}\label{assum:outer}
Let $w = (x,y)$. The objective $f(w)$ and its gradient $\nabla f(w)$ are $M$- and $L_f$-Lipschitz continuous, i.e., for any $ w_1, w_2$, 
    $\big|f(w_1) - f(w_2)\big| \leq M \big\|w_1 - w_2 \big\|, \big\|\nabla f(w_1) - \nabla f(w_2)\big\| \leq L_f \big\|w_1 - w_2 \big\|, $
which hold for $ F(w;\xi)$ in the finite-sum case. 
\end{assum}
\begin{assum}\label{assum:bvar}
For the finite-sum case, the gradient $\nabla G(w;\zeta)$ has bounded variance condition, i.e.,  $\mathbb{E}_{\zeta}\|\nabla G(w ; \zeta)-\nabla g(w)\|^{2} \leq \sigma^{2}$ for some constant $\sigma \geq 0$.
\end{assum}

\subsection{Convergence Analysis for PZOBO}

Differently from the standard zeroth-order analysis for a blackbox function, here we develop new techniques to analyze the zeroth-order-like Jacobian estimator that depends on the entire inner optimization trajectory, which is unique in bilevel optimization. 
%
%
%
We first establish the following important proposition which characterizes the Lipshitzness property of the approximate Jacobian matrix $\mathcal{J}_N(x) = \frac{\partial y^N(x)}{\partial x}$.
\begin{proposition}\label{prop:jnlip}
Suppose that Assumptions \ref{assum:scvx} and \ref{assum:inner} hold. Let $L_{\mathcal{J}} = \left(1+\frac{L}{\mu_g}\right)\left(\frac{\tau}{\mu_g} + \frac{\rho L}{\mu_g^2}\right)$, with $L = \max \{L_f, L_g\}$. 
Then, the Jacobian $\mathcal{J}_N(x)$ is Lipschitz continuous with constant $L_{\mathcal{J}}$: 
\begin{align*} 
\big\|\mathcal{J}_N(x_1) - \mathcal{J}_N(x_2)\big\|_F \leq L_{\mathcal{J}} \big\|x_1 - x_2\big\| \quad \forall x_1 , x_2 \in \mathbb{R}^p.
\end{align*} 
\end{proposition}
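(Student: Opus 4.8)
The plan is to exploit the explicit recursive definition of $y^N(x)$ as the output of $N$ gradient-descent steps, $y^t(x) = y^{t-1}(x) - \alpha\nabla_y g(x, y^{t-1}(x))$ with $y^0(x) \equiv y_0$ fixed, and to differentiate this recursion in $x$. Writing $\mathcal{J}_t(x) = \partial y^t(x)/\partial x$ and applying the chain rule gives the linear matrix recursion
\begin{align*}
\mathcal{J}_t(x) = \big(I - \alpha\nabla_y^2 g(x, y^{t-1}(x))\big)\mathcal{J}_{t-1}(x) - \alpha\nabla_x\nabla_y g(x, y^{t-1}(x)), \qquad \mathcal{J}_0(x) = 0.
\end{align*}
All required properties of $\mathcal{J}_N$ are then read off from this recursion by induction on $t$, with the strong convexity of $g$ supplying a geometric contraction at every step.

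First I would establish a uniform \emph{boundedness} bound, which is a prerequisite for the Lipschitz estimate. Under Assumptions~\ref{assum:scvx} and~\ref{assum:inner}, $\nabla_y^2 g$ has eigenvalues in $[\mu_g, L_g]$, so for $\alpha \le 1/L_g$ the contraction factor satisfies $\|I - \alpha\nabla_y^2 g(x,y^{t-1}(x))\| \le 1 - \alpha\mu_g$, while $\|\nabla_x\nabla_y g\| \le L_g$. Taking norms in the recursion yields $\|\mathcal{J}_t(x)\|_F \le (1-\alpha\mu_g)\|\mathcal{J}_{t-1}(x)\|_F + \alpha L_g$, and summing the resulting geometric series from $\mathcal{J}_0 = 0$ gives the $N$-uniform bound $\|\mathcal{J}_t(x)\|_F \le L_g/\mu_g \le L/\mu_g$. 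A useful byproduct is that $y^t(\cdot)$ is itself $(L/\mu_g)$-Lipschitz, so that the inner iterates at two outer points stay close: $\|(x_1,y^{t-1}(x_1)) - (x_2,y^{t-1}(x_2))\| \le \sqrt{1 + (L/\mu_g)^2}\,\|x_1 - x_2\| \le (1 + L/\mu_g)\|x_1 - x_2\|$, which is exactly where the leading factor $(1 + L/\mu_g)$ in $L_{\mathcal{J}}$ originates.

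With these in hand, I would bound the Lipschitz modulus $D_t := \sup_{x_1 \ne x_2}\|\mathcal{J}_t(x_1) - \mathcal{J}_t(x_2)\|_F / \|x_1 - x_2\|$. Subtracting the recursion at $x_1$ and $x_2$ and inserting the telescoping terms $\pm A_t(x_1)\mathcal{J}_{t-1}(x_2)$, where $A_t(x) = I - \alpha\nabla_y^2 g(x,y^{t-1}(x))$, one gets three contributions: (i) the contraction $(1-\alpha\mu_g)\|\mathcal{J}_{t-1}(x_1) - \mathcal{J}_{t-1}(x_2)\|_F$; (ii) the perturbation of $A_t$, controlled by the $\rho$-Lipschitzness of $\nabla_y^2 g$ together with the boundedness $\|\mathcal{J}_{t-1}\| \le L/\mu_g$; and (iii) the perturbation of $\alpha\nabla_x\nabla_y g$, controlled by its $\tau$-Lipschitzness. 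Converting all Lipschitz bounds into multiples of $\|x_1 - x_2\|$ via the joint closeness estimate above produces the scalar recursion $D_t \le (1-\alpha\mu_g)D_{t-1} + \alpha(1 + L/\mu_g)(\tau + \rho L/\mu_g)$, and solving it from $D_0 = 0$ gives $D_N \le \frac{1}{\mu_g}(1 + L/\mu_g)(\tau + \rho L/\mu_g) = L_{\mathcal{J}}$, uniformly in $N$.

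The main obstacle is not any single estimate but the coupling between boundedness and Lipschitzness: the Lipschitz bound in step~(ii) needs both the uniform norm bound on $\mathcal{J}_{t-1}$ and the Lipschitz continuity of the trajectory $y^{t-1}(\cdot)$, each of which must itself be proved by the same contraction argument and shown to be independent of $N$. The crucial structural point is that strong convexity makes every per-step map a strict contraction with factor $1 - \alpha\mu_g < 1$; without it the geometric sums would grow with $N$ and no $N$-uniform $L_{\mathcal{J}}$ could exist. Care is also needed to apply mixed matrix-norm inequalities correctly (e.g.\ $\|AB\|_F \le \|A\|\,\|B\|_F$) so that the spectral contraction of $A_t$ transfers cleanly to the Frobenius-norm differences.
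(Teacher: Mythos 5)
Your proof is correct and yields exactly the constant $L_{\mathcal{J}}$ claimed, but it organizes the argument differently from the paper. The paper first unrolls the differentiated GD recursion into the explicit sum $\mathcal{J}_N(x) = -\alpha\sum_{t=0}^{N-1}\nabla_x\nabla_y g\big(x,y^t(x)\big)\prod_{m=t+1}^{N-1}\big(I-\alpha\nabla_y^2 g(x,y^m(x))\big)$, then bounds $\|\mathcal{J}_N(x_1)-\mathcal{J}_N(x_2)\|_F$ by splitting each summand into a difference of the mixed-derivative factors and a difference of the matrix products $A_t(x_1)-A_t(x_2)$; the latter requires its own telescoping estimate over the product, and the final summation uses $\sum_m m(1-\alpha\mu_g)^{m-1}\le 1/(\alpha\mu_g)^2$. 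You instead stay at the level of the one-step recursion and run a single induction on the Lipschitz modulus $D_t$, obtaining $D_t\le(1-\alpha\mu_g)D_{t-1}+\alpha(1+L/\mu_g)(\tau+\rho L/\mu_g)$ and summing one geometric series. The ingredients are identical in both routes --- the contraction factor $1-\alpha\mu_g$ from strong convexity (both implicitly need $\alpha\le 1/L$), the uniform bound $\|\mathcal{J}_t\|_F\le L/\mu_g$ of \Cref{lemma:jnbound}, and the $(1+L/\mu_g)$ trajectory-closeness factor --- but your decomposition avoids the separate estimate on differences of length-$(N-t-1)$ matrix products and the weighted geometric sum, so it is the more economical bookkeeping; the paper's unrolled formula has the side benefit that it is reused elsewhere (e.g.\ in the proof of \Cref{lemma:jnbound} itself). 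You are also right to flag the ordering issue: the boundedness and trajectory-Lipschitz facts must be established for all $t\le N$ before the $D_t$ recursion can be closed, which is exactly how the paper sequences its lemmas.
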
 
We next show that the variance of hypergradient estimation can be bounded. The characterization of the estimation bias can be found in \Cref{lemma:deterbias} in the appendix.  
\begin{proposition}\label{prop:esterr} 
Suppose that Assumptions \ref{assum:scvx}, \ref{assum:inner}, and \ref{assum:outer} hold. The hypergradient estimation variance can be upper-bounded as
\begin{align*}
\textstyle \mathbb{E} &\big\|\widehat \nabla \Phi(x_k) - \nabla \Phi(x_k)\big\|^2 \leq\mathcal{O}\left((1-\alpha\mu_g)^N + \frac{p}{Q} + \mu^2dp^3 + \frac{\mu^2dp^4}{Q}\right)
\end{align*}
where $\mathbb{E}[\cdot]$ is conditioned on $x_k$ and taken over the Gaussian vectors $\{u_{k,j}: j=1,...,Q\}$. 
\end{proposition}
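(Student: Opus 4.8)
The plan is to split the hypergradient estimation error into its direct and indirect components, matching the two summands of $\nabla\Phi(x_k)$ in \cref{eq:truehg} and of $\widehat\nabla\Phi(x_k)$ in \cref{eq:hgest}, and to bound each in mean square. Writing $v_N = \nabla_y f(x_k, y^N_k)$, $v_* = \nabla_y f(x_k, y^*(x_k))$, and $T_j = \langle\delta(x_k;u_{k,j}), v_N\rangle u_{k,j}$, I would first use $\|a+b\|^2 \le 2\|a\|^2 + 2\|b\|^2$ to separate the direct error $\nabla_x f(x_k, y^N_k) - \nabla_x f(x_k, y^*(x_k))$ from the indirect error $\frac{1}{Q}\sum_{j=1}^Q T_j - \mathcal{J}_*(x_k)^\top v_*$. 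The direct error is the easy part: by $L_f$-Lipschitzness of $\nabla f$ (\Cref{assum:outer}) it is at most $L_f\|y^N_k - y^*(x_k)\|$, and strong convexity plus smoothness of $g$ (\Cref{assum:scvx,assum:inner}) give the standard linear contraction $\|y^N_k - y^*(x_k)\|^2 \le (1-\alpha\mu_g)^N\|y_0 - y^*(x_k)\|^2$ for $\alpha \le 1/L_g$, producing the $(1-\alpha\mu_g)^N$ term.

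For the indirect error I would exploit that, conditioned on $x_k$, the vectors $\{u_{k,j}\}$ are i.i.d.\ and hence so are the $\{T_j\}$, which licenses the exact bias--variance decomposition $\mathbb{E}\|\frac1Q\sum_j T_j - \mathcal{J}_*^\top v_*\|^2 = \|\mathbb{E} T_1 - \mathcal{J}_*^\top v_*\|^2 + \frac1Q\mathrm{Var}(T_1)$. The key structural fact, obtained by applying the Nesterov--Spokoiny smoothing calculus row-by-row to the vector map $y^N(\cdot)$, is that $\mathbb{E} T_1 = \mathcal{J}_{N,\mu}(x_k)^\top v_N$, where $\mathcal{J}_{N,\mu}$ is the Jacobian of the Gaussian-smoothed trajectory $y^N_\mu(x)=\mathbb{E}_u[y^N(x+\mu u)]$. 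I would then split the bias into three pieces: $(\mathcal{J}_{N,\mu}-\mathcal{J}_N)^\top v_N$, $(\mathcal{J}_N - \mathcal{J}_*)^\top v_N$, and $\mathcal{J}_*^\top(v_N - v_*)$. The first is the zeroth-order smoothing bias, which a per-component Nesterov--Spokoiny estimate controlled by the Lipschitz constant $L_{\mathcal{J}}$ of \Cref{prop:jnlip} bounds by $\mathcal{O}(\mu\sqrt{d}\,p^{3/2})$ in Frobenius norm (the $\sqrt d$ coming from summing the $d$ component biases), so after squaring and using $\|v_N\|\le M$ it yields the $\mu^2 d p^3$ term; the remaining two pieces are again governed by $\|y^N_k - y^*\|$ together with the Jacobian approximation estimate of \Cref{lemma:deterbias}, and so fold into $(1-\alpha\mu_g)^N$.

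The variance term is where the $\frac1Q$ savings and the remaining dimension factors appear. I would bound $\mathrm{Var}(T_1) \le \mathbb{E}\|T_1\|^2 = \mathbb{E}[\langle\delta(x_k;u),v_N\rangle^2\|u\|^2]$ and expand $\delta(x_k;u) = \mathcal{J}_N(x_k)u + r(u)$ by a second-order Taylor expansion of $y^N$, where \Cref{prop:jnlip} guarantees $\|r(u)\| = \mathcal{O}(\mu\|u\|^2)$ with a dimension-dependent constant. Substituting and using the Gaussian moment identities $\mathbb{E}[\langle a,u\rangle^2\|u\|^2] = (p+2)\|a\|^2$ and $\mathbb{E}\|u\|^{2k}=\mathcal{O}(p^k)$, the leading term contributes $\mathcal{O}(p)$ (since $\|\mathcal{J}_N^\top v_N\|=\mathcal{O}(1)$ by the boundedness of the Jacobian and of $\nabla_y f$), while the remainder and cross terms contribute the higher-order $\mu^2$ factors; dividing by $Q$ then produces the $p/Q$ and $\mu^2 d p^4/Q$ terms. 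Assembling the direct error, the bias, and the variance through the triangle and Young inequalities yields the claimed bound.

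The hard part will be step three: the classical zeroth-order variance analysis assumes Lipschitz (or Lipschitz-gradient) blackbox values, but here the ``function'' $y^N(\cdot)$ is the output of an $N$-step optimization trajectory, so those properties cannot be invoked off the shelf. I first have to establish that $y^N$ is itself twice-differentiable with a uniformly bounded and Lipschitz Jacobian---precisely the content of \Cref{prop:jnlip}, obtained by a recursion along the GD path---in order to justify the Taylor remainder bound $\|r(u)\|=\mathcal{O}(\mu\|u\|^2)$ and the boundedness $\|\mathcal{J}_N\|=\mathcal{O}(1)$ uniformly in $N$. Secondly, carefully tracking how the dimension factors $p$ and $d$ enter the various Gaussian moments---so that the leading $p/Q$ term stays \emph{linear} in $p$ while the $\mu$-order terms pick up the higher powers---is the delicate bookkeeping that fixes the final rate; a loose application of Cauchy--Schwarz at this stage would inflate the dominant $p/Q$ term and lose the sharpness of the estimate.
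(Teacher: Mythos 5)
Your proposal follows essentially the same route as the paper's proof: the same direct/indirect split, the same three-way decomposition of the Jacobian estimation error around the smoothed Jacobian $\mathcal{J}_\mu$ and the trajectory Jacobian $\mathcal{J}_N$, and the same reliance on \Cref{prop:jnlip} to make the Nesterov--Spokoiny estimates applicable to the optimization-path output $y^N$, with each term of the final bound attributed to the same source. The only difference is cosmetic: the paper bounds $\mathbb{E}\big\|\hat{\mathcal{J}}_N-\mathcal{J}_*\big\|_F^2$ at the matrix level and multiplies by $\|\nabla_y f\|^2\leq M^2$, whereas you keep the inner-product form $\langle\delta,v_N\rangle u$ and evaluate the Gaussian moments exactly, which produces the same $p/Q$ leading term (and an even slightly smaller $\mu^2$-remainder).
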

\Cref{prop:esterr} upper bounds the hypergradient estimation variance, which mainly arises due to the estimation variance of the Jacobian matrix $\mathcal{J}_*$ by the proposed estimator $\mathcal{J}_N$, via three types of variances: (a) the approximation variance between $\mathcal{J}_N$ and $\mathcal{J}_*$ via inner-loop gradient descent, which decreases exponentially w.r.t.\ the number $N$ of inner iterations due to the strong convexity of the inner objective; (b) the variance between our estimator and the Jacobian $\mathcal{J}_{\mu}$ of the smoothed output $\mathbb{E}_u y^N(x_k + \mu u)$, which decreases sublinearly w.r.t. the batch size $Q$, and (c) the variance between the Jacobian $\mathcal{J}_N$ and $\mathcal{J}_{\mu}$, which can be controlled by the smoothness parameter $\mu$. 


By using the smoothness property in \Cref{prop:jnlip} and the upper bound in \Cref{prop:esterr}, we provide the following characterization of the convergence rate for PZOBO.
\begin{theorem}[Convergence of PZOBO] \label{theorem1}
Suppose that Assumptions \ref{assum:scvx}, \ref{assum:inner}, and \ref{assum:outer} hold. Choose the inner- and outer-loop stepsizes respectively as $\alpha \leq \frac{1}{L}$ and $\beta = \mathcal{O}(\frac{1}{\sqrt{K}})$. 
Further, set $Q=\mathcal{O}(1)$ and $\mu = \mathcal{O}\Big(\frac{1}{\sqrt{Kdp^3}}\Big)$.  
%
Then, the iterates $x_k$ for $k = 0, ..., K-1$ of PZOBO satisfy 
\begin{align*}
\textstyle \frac{1-\frac{1}{\sqrt{K}}}{K}\sum_{k=0}^{K-1} \mathbb{E}\big\| \nabla\Phi(x_{k})\big\|^2 
\leq \mathcal{O}\left(\frac{p}{\sqrt{K}} + (1-\alpha\mu_g)^{N}\right) 
\end{align*}
\end{theorem}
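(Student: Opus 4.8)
The plan is to treat PZOBO as inexact gradient descent on the smooth nonconvex objective $\Phi$ and to run the standard descent argument, using \Cref{prop:jnlip} and \Cref{prop:esterr} to control the bias and the mean-squared error of the hypergradient estimator. First I would establish that $\Phi$ is $L_\Phi$-smooth for a constant depending only on the problem parameters. Letting $N\to\infty$ in \Cref{prop:jnlip} shows that the exact response Jacobian $\mathcal{J}_{*}$ is $L_{\mathcal{J}}$-Lipschitz; combining this with the Lipschitzness of $\nabla f$ (\Cref{assum:outer}), the bound $\|\nabla_y f\|\le M$, the boundedness $\|\mathcal{J}_{*}\|\le L_g/\mu_g$, and the Lipschitzness of $y^*$ that follows from strong convexity (\Cref{assum:scvx}), the hypergradient $\nabla\Phi$ in \cref{eq:truehg} is Lipschitz, i.e.\ $\Phi$ is $L_\Phi$-smooth. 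The descent lemma then gives, for the update $x_{k+1}=x_k-\beta\widehat\nabla\Phi(x_k)$,
\begin{align*}
\mathbb{E}\Phi(x_{k+1}) \le \Phi(x_k) - \beta\,\mathbb{E}\langle\nabla\Phi(x_k),\widehat\nabla\Phi(x_k)\rangle + \tfrac{L_\Phi\beta^2}{2}\,\mathbb{E}\big\|\widehat\nabla\Phi(x_k)\big\|^2,
\end{align*}
where $\mathbb{E}[\cdot]$ is conditioned on $x_k$ and taken over the Gaussian vectors $\{u_{k,j}\}$.

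Next I would perform a bias--variance decomposition. Writing $b_k=\mathbb{E}[\widehat\nabla\Phi(x_k)]-\nabla\Phi(x_k)$ for the bias and $\Delta_k=\mathbb{E}\|\widehat\nabla\Phi(x_k)-\nabla\Phi(x_k)\|^2$ for the mean-squared error, one has $\mathbb{E}\langle\nabla\Phi(x_k),\widehat\nabla\Phi(x_k)\rangle=\|\nabla\Phi(x_k)\|^2+\langle\nabla\Phi(x_k),b_k\rangle$ and $\mathbb{E}\|\widehat\nabla\Phi(x_k)\|^2=\|\nabla\Phi(x_k)\|^2+2\langle\nabla\Phi(x_k),b_k\rangle+\Delta_k$. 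Here \Cref{prop:esterr} bounds $\Delta_k$, while \Cref{lemma:deterbias} bounds $\|b_k\|$, which contains only the inner-loop term $(1-\alpha\mu_g)^N$ and the $\mu$-smoothing term but not the $p/Q$ variance. Substituting and grouping, the coefficient of $\|\nabla\Phi(x_k)\|^2$ is $\beta-\tfrac{L_\Phi\beta^2}{2}$, and the cross term $\langle\nabla\Phi(x_k),b_k\rangle$ is handled by a weighted Young's inequality with a parameter of order $1/\sqrt{K}$, so that it subtracts only a lower-order multiple of $\|\nabla\Phi(x_k)\|^2$ while adding a $\sqrt{K}\,\|b_k\|^2$ term.

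Finally I would telescope. Summing the one-step inequality over $k=0,\dots,K-1$, using that $\Phi$ is bounded below by some $\Phi^*$, and dividing by $\beta K$ yields
\begin{align*}
\Big(1-\tfrac{L_\Phi\beta}{2}\Big)\frac{1}{K}\sum_{k=0}^{K-1}\mathbb{E}\big\|\nabla\Phi(x_k)\big\|^2 \le \frac{\Phi(x_0)-\Phi^*}{\beta K} + \frac{1}{K}\sum_{k=0}^{K-1}\Big(\tfrac{L_\Phi\beta}{2}\Delta_k + \sqrt{K}\,\|b_k\|^2\Big),
\end{align*}
up to the lower-order cross-term correction. Choosing $\beta=\Theta(1/\sqrt{K})$ makes the leading coefficient $1-\Theta(1/\sqrt{K})$ (the stated $1-1/\sqrt{K}$ after absorbing the constant into $\beta$) and turns the first term into $\mathcal{O}(1/\sqrt{K})$. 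Plugging the bound of \Cref{prop:esterr} with $Q=\mathcal{O}(1)$ gives $\tfrac{L_\Phi\beta}{2}\Delta_k=\mathcal{O}\big(\tfrac{1}{\sqrt{K}}\big((1-\alpha\mu_g)^N+p+\mu^2dp^3+\mu^2dp^4\big)\big)$, and the choice $\mu=\mathcal{O}(1/\sqrt{Kdp^3})$ renders every $\mu$-dependent contribution (including $\sqrt{K}\,\|b_k\|^2$) of order $\mathcal{O}(1/\sqrt{K})$ or smaller. The surviving variance term $p/Q$ yields $\mathcal{O}(p/\sqrt{K})$ and the inner-loop term yields $\mathcal{O}((1-\alpha\mu_g)^N)$, giving the claimed rate.

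The main obstacle is not the descent recursion itself—which is routine once \Cref{prop:jnlip} and \Cref{prop:esterr} are available—but the careful bookkeeping that keeps all three error sources subordinate to the leading $\mathcal{O}(p/\sqrt{K})$ and $(1-\alpha\mu_g)^N$ rates. In particular, one must ensure the $\mu$-smoothing bias does not corrupt the descent direction: the polynomial dependence on $d$ and $p$ in both $\Delta_k$ and $\|b_k\|^2$ forces the quantitatively delicate choice $\mu=\mathcal{O}(1/\sqrt{Kdp^3})$, and the cross-term handling must avoid spending a constant fraction of $\|\nabla\Phi(x_k)\|^2$ so that the clean coefficient $1-1/\sqrt{K}$ survives. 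Verifying that this single $\mu$ simultaneously tames the $\mu^2dp^3$, $\mu^2dp^4/Q$, and $\sqrt{K}\,\|b_k\|^2$ terms is the crux of the parameter-balancing step.
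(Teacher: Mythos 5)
Your overall architecture matches the paper's: $L_\Phi$-smoothness of $\Phi$, a descent-lemma recursion driven by the bias bound (\Cref{lemma:deterbias}) and the variance bound (\Cref{prop:esterr}), telescoping, and the parameter choices $\beta=\Theta(1/\sqrt{K})$, $Q=\mathcal{O}(1)$, $\mu=\mathcal{O}(1/\sqrt{Kdp^3})$. The one place where you genuinely diverge is the treatment of the cross term $\langle\nabla\Phi(x_k),b_k\rangle$, and that divergence creates a quantitative gap. By applying Young's inequality with parameter $\Theta(1/\sqrt{K})$ you pay a penalty of order $\sqrt{K}\,\|b_k\|^2$. Since $\|b_k\|^2=\mathcal{O}\big((1-\alpha\mu_g)^{N}+\mu^2dp^3\big)$, the $\mu$-dependent piece is indeed absorbed into $\mathcal{O}(1/\sqrt{K})$ by your choice of $\mu$, but the inner-loop piece becomes $\sqrt{K}\,(1-\alpha\mu_g)^{N}$, which is \emph{not} $\mathcal{O}\big((1-\alpha\mu_g)^{N}\big)$ for fixed $N$ and growing $K$; your final bound would read $\mathcal{O}\big(p/\sqrt{K}+\sqrt{K}(1-\alpha\mu_g)^{N}\big)$ unless you additionally assume $N\gtrsim\log K$, which the theorem does not.

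The paper sidesteps this by never squaring the bias: it first observes that Assumption \ref{assum:outer} ($f$ is $M$-Lipschitz) together with the explicit form of the hypergradient gives the uniform bound $\|\nabla\Phi(x)\|\le M_\Phi=\big(1+\tfrac{L_g}{\mu_g}\big)M$ (\cref{hyperGboud}), and then bounds the cross term by Cauchy--Schwarz as $\beta\,\|\nabla\Phi(x_k)\|\,\|b_k\|\le\beta M_\Phi\,\mathcal{D}_{bias}$. After dividing by $\beta K$ this contributes $M_\Phi\mathcal{D}_{bias}=\mathcal{O}\big((1-\alpha\mu_g)^{N/2}+1/\sqrt{K}\big)$ with no $\sqrt{K}$ amplification, and the full coefficient of $\|\nabla\Phi(x_k)\|^2$ is preserved, yielding the clean prefactor $1-1/\sqrt{K}$. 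Your argument is salvageable by replacing the weighted Young step with this uniform-boundedness step (everything else you wrote goes through); as written, though, the cross-term handling is the one step that fails to deliver the stated rate.
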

\Cref{theorem1} shows that the convergence rate of PZOBO is sublinear with respect to the number $K$ of outer iterations due to the nonconvexity of the outer objective, and linear (i.e., exponentially decay) with respect to the number $N$ of inner iterations due to the strong convexity. 

\Cref{theorem1} also indicates the following features that ensures the efficiency of the algorithms. (i) The batch size $Q$ of the Jacobian estimator can be chosen as a constant (in particular $Q=1$ as in our experiments) so that the computation of the ES estimator is efficient. (ii) The number of inner iterations can be chosen to be small due to its exponential convergence, and hence the algorithm can run efficiently. (iii) PZOBO requires only gradient computations to converge, and eliminates Hessian- and Jacobian-vector products required by the existing AID and ITD based bilevel optimizers (\cite{pedregosa2016hyperparameter,franceschi2018bilevel,grazzi2020bo}. Thus, PZOBO is  more efficient particularly for high-dimensional problems.

\subsection{Convergence Analysis for PZOBO-S}

In this section, we apply the stochastic algorithm PZOBO-S to the finite-sum objective in \cref{eq:prob_stoc} and analyze its convergence rate. 
The following proposition establishes an upper bound on the estimation error of Jacobian $\mathcal{J}_*$ by $\mathcal{J}_N = \frac{\partial Y^N}{\partial x}$, where $Y^N$ is the output of $N$ inner SGD updates. 
\begin{proposition}\label{prop:expectjnjs}
Suppose that Assumptions \ref{assum:scvx}, \ref{assum:inner}, and \ref{assum:bvar} hold.  Choose the inner-loop stepsize as $\alpha = \frac{2}{L+\mu_g}$, where $L = \max \{L_f, L_g\}$. Then, we have: 
\begin{align*}
\textstyle \mathbb{E}\big\|\mathcal{J}_N - \mathcal{J}_*\big\|^2_F \leq & \textstyle C_\gamma^N\frac{L^2}{\mu_g^2} + \frac{\Gamma}{1-C_\gamma} + \frac{\lambda(L+\mu_g)^2(1-\alpha\mu_g)C_\gamma^{N-1}}{(L+\mu_g)^2(1-\alpha\mu_g) - (L-\mu_g)^2},
\end{align*}
where $\lambda$, $\Gamma$, and $C_\gamma < 1$ are constants (see \Cref{sboproofs} for their precise forms). 
\end{proposition}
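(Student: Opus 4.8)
The plan is to analyze $\mathcal{J}_N = \partial Y^N/\partial x$ through a linear matrix recursion obtained by differentiating the inner SGD iteration, and then to control its deviation from the fixed point $\mathcal{J}_*$ by a contraction-plus-error recursion that I unroll over the $N$ inner steps. First I would differentiate the update $Y^t = Y^{t-1} - \alpha \nabla_y G(x, Y^{t-1}; \mathcal{S}_{t-1})$ with respect to $x$ via the chain rule. Writing $\mathcal{J}_t = \partial Y^t/\partial x$, $H_t = \nabla_y^2 G(x, Y^{t-1}; \mathcal{S}_{t-1})$, and $M_t = \nabla_x\nabla_y G(x, Y^{t-1}; \mathcal{S}_{t-1})$, this yields
\begin{align*}
\mathcal{J}_t = (I - \alpha H_t)\mathcal{J}_{t-1} - \alpha M_t, \qquad \mathcal{J}_0 = 0,
\end{align*}
while the optimum obeys the fixed-point relation $\mathcal{J}_* = (I - \alpha H_*)\mathcal{J}_* - \alpha M_*$ with $H_* = \nabla_y^2 g(x, y^*)$ and $M_* = \nabla_x\nabla_y g(x, y^*)$, coming from the implicit-function identity $\nabla_y^2 g \cdot \mathcal{J}_* + \nabla_x\nabla_y g = 0$. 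Subtracting, the error $\Delta_t := \mathcal{J}_t - \mathcal{J}_*$ satisfies $\Delta_t = (I - \alpha H_t)\Delta_{t-1} + b_t$ with $b_t = \alpha(H_* - H_t)\mathcal{J}_* - \alpha(M_t - M_*)$.

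The contraction is the engine of the bound. By Assumption~\ref{assum:scvx} ($\mu_g$-strong convexity of $G$) and Assumption~\ref{assum:inner} ($L_g$-smoothness), every sampled Hessian satisfies $\mu_g I \preceq H_t \preceq L I$, so the choice $\alpha = 2/(L+\mu_g)$ gives $\|I - \alpha H_t\|_2 \leq \gamma := (L-\mu_g)/(L+\mu_g) < 1$ and hence $\|(I-\alpha H_t)\Delta_{t-1}\|_F \leq \gamma\|\Delta_{t-1}\|_F$. To control $b_t$ I would condition on the filtration $\mathcal{F}_{t-2}$ generated by $\mathcal{S}_0,\dots,\mathcal{S}_{t-2}$, under which $Y^{t-1}$ and $\Delta_{t-1}$ are measurable and $\mathbb{E}_{\mathcal{S}_{t-1}}[H_t] = \bar H := \nabla_y^2 g(x, Y^{t-1})$, $\mathbb{E}_{\mathcal{S}_{t-1}}[M_t] = \bar M := \nabla_x\nabla_y g(x, Y^{t-1})$. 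Splitting $H_* - H_t = (H_* - \bar H) + (\bar H - H_t)$ (and similarly for $M$) separates a \emph{bias} part, bounded deterministically by the $\rho$- and $\tau$-Lipschitzness in Assumption~\ref{assum:inner} via $\|H_* - \bar H\|_F \leq \rho\|y^* - Y^{t-1}\|$ and $\|M_* - \bar M\|_F \leq \tau\|y^* - Y^{t-1}\|$ together with $\|\mathcal{J}_*\| \leq L/\mu_g$, from a conditionally zero-mean \emph{fluctuation} part whose second moment is a constant because $\|H_t\|,\|\bar H\|,\|M_t\|,\|\bar M\| \leq L_g$.

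I would then form the squared-norm recursion by Young's inequality, $\|\Delta_t\|_F^2 \leq (1+\epsilon)\gamma^2\|\Delta_{t-1}\|_F^2 + (1+\tfrac{1}{\epsilon})\|b_t\|_F^2$, take total expectation, and pick $\epsilon$ small so that $C_\gamma := (1+\epsilon)\gamma^2 < 1$, obtaining
\begin{align*}
\mathbb{E}\|\Delta_t\|_F^2 \leq C_\gamma\,\mathbb{E}\|\Delta_{t-1}\|_F^2 + c_1\,\mathbb{E}\|Y^{t-1} - y^*\|^2 + c_2,
\end{align*}
where $c_1$ packages $\rho,\tau,L/\mu_g,\alpha$ and $c_2$ is the constant fluctuation floor. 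Unrolling from $\mathcal{J}_0 = 0$, so that $\|\Delta_0\|_F^2 = \|\mathcal{J}_*\|_F^2 \leq L^2/\mu_g^2$, produces the first term $C_\gamma^N L^2/\mu_g^2$ and the steady-state $\Gamma/(1-C_\gamma)$. The third term comes from substituting the standard constant-stepsize SGD iterate bound $\mathbb{E}\|Y^{t-1} - y^*\|^2 \leq (1-\alpha\mu_g)^{t-1}\|Y^0 - y^*\|^2 + O(\alpha\sigma^2/\mu_g)$ (derived from Assumption~\ref{assum:bvar}): its transient $(1-\alpha\mu_g)^{t-1}$ part convolves against the contraction to give $\lambda\sum_s C_\gamma^{N-s}(1-\alpha\mu_g)^{s-1}$, whose geometric closed form is exactly the stated fraction with denominator $(L+\mu_g)^2(1-\alpha\mu_g) - (L-\mu_g)^2$ (note $\gamma^2 = (L-\mu_g)^2/(L+\mu_g)^2$), while its noise floor is absorbed into $\Gamma$.

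The main obstacle is the expectation of the squared Frobenius norm with correlated random coefficients: the contraction matrix $I-\alpha H_t$ and the error $b_t$ both depend on $\mathcal{S}_{t-1}$, so the cross term $2\langle(I-\alpha H_t)\Delta_{t-1}, b_t\rangle$ cannot simply be dropped. Conditioning on $\mathcal{F}_{t-2}$ to isolate the zero-mean fluctuation and spending a small contraction slack $\epsilon$ through Young's inequality (while verifying $(1+\epsilon)\gamma^2<1$) is what makes the recursion clean. A secondary difficulty is that Assumption~\ref{assum:bvar} bounds only the \emph{gradient} variance, so I must route it through the SGD iterate bound $\mathbb{E}\|Y^t-y^*\|^2$ and rely on the deterministic $L_g$-bound for the stochastic second derivatives to control the Hessian and mixed-derivative fluctuations entering $b_t$.
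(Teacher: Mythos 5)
Your proposal follows essentially the same route as the paper's proof: differentiate the inner SGD recursion, subtract the implicit-function fixed-point identity for $\mathcal{J}_*$, bound the cross terms with a Peter--Paul/Young slack to get a contraction constant $C_\gamma<1$, split the stochastic second-derivative errors into a Lipschitz (bias) part controlled by $\|Y^{t-1}-y^*\|$ and a minibatch-variance part, feed in the standard SGD iterate bound, and unroll the resulting recursion from $\mathcal{J}_0=0$ to obtain the three terms (including the geometric convolution that produces the stated denominator). The only differences are cosmetic (left- vs.\ right-multiplication convention for the Hessian against the Jacobian, and parametrizing the contraction slack as $(1+\epsilon)\gamma^2$ rather than via the paper's trade-off constant $\gamma$), so the argument is correct and matches the paper.
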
 
We next show that the variance of hypergradient estimation is bounded. 
\begin{proposition}\label{prop:estvar}
Suppose that Assumptions \ref{assum:scvx}, \ref{assum:inner}, \ref{assum:outer}, and \ref{assum:bvar} hold.  Set the inner-loop stepsize as $\alpha = \frac{2}{L+\mu_g}$ where $L = \max \{L_f, L_g\}$. Then, we have: 
\begin{align}
&\textstyle \mathbb{E}\big\|\widehat \nabla \Phi(x_k) - \nabla \Phi(x_k)\big\|^2 \leq\mathcal{O} \left((1-\alpha\mu_g)^N + \frac{1}{S} + \frac{1}{D_f} + \frac{p}{Q} + \mu^2dp^3 +  \frac{\mu^2dp^4}{Q}\right) \nonumber 
\end{align}
where $S$ and $D_f$ are the sizes of the inner and outer mini-batches, respectively. 
\end{proposition}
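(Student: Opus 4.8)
The plan is to prove Proposition~\ref{prop:estvar} by mirroring the deterministic decomposition behind Proposition~\ref{prop:esterr}, while isolating the two additional stochastic sources (the inner SGD sampling, size $S$, and the outer mini-batch $\gD_F$, size $D_f$) and controlling them with Proposition~\ref{prop:expectjnjs} together with the independence of $\gD_F$. Introduce the intermediate quantity $\bar{\nabla}\Phi(x_k) = \nabla_x f(x_k, Y^N_k) + \frac{1}{Q}\sum_{j=1}^{Q} \langle \delta(x_k; u_{k,j}), \nabla_y f(x_k, Y^N_k)\rangle u_{k,j}$, which uses the SGD output $Y^N_k$ and the zeroth-order Jacobian but the \emph{full} outer gradients. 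Writing $\widehat \nabla \Phi(x_k) - \nabla \Phi(x_k) = (\widehat \nabla \Phi(x_k) - \bar{\nabla}\Phi(x_k)) + (\bar{\nabla}\Phi(x_k) - \nabla \Phi(x_k))$ and applying $\|a+b\|^2 \leq 2\|a\|^2 + 2\|b\|^2$ cleanly separates the outer-sampling error (first bracket) from the inner/Jacobian error (second bracket), the latter being a stochastic-path version of the quantity already handled in Proposition~\ref{prop:esterr}.

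For the first bracket, I would exploit that $\gD_F$ is drawn independently of the inner batch path and the Gaussian vectors. The $M$-Lipschitzness of each $F(\cdot;\xi)$ in Assumption~\ref{assum:outer} bounds $\|\nabla F\|$, so conditioning on everything but $\gD_F$ gives $\mathbb{E}_{\gD_F}\|\nabla_y F(x_k, Y^N_k; \gD_F) - \nabla_y f(x_k, Y^N_k)\|^2 = \mathcal{O}(1/D_f)$, and likewise for $\nabla_x$. Combined with a second-moment bound on the estimator maps $\delta(x_k; u_{k,j})$ and $u_{k,j}$ -- the former following from the (conditional) Lipschitzness of $x\mapsto Y^N(x)$ and the latter from Gaussian moments -- this yields the $\mathcal{O}(1/D_f)$ contribution.

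For the second bracket I would condition on the shared inner batch path $\gF=\{\gS_0,\dots,\gS_{N-1}\}$. Given $\gF$, the map $x\mapsto Y^N(x)$ is deterministic, and since each batch gradient $\nabla_y G(\cdot;\gS_t)$ inherits the smoothness constants of $\nabla_y g$ from Assumption~\ref{assum:inner}, this map enjoys exactly the Lipschitz-Jacobian property of Proposition~\ref{prop:jnlip}. Hence the deterministic zeroth-order decomposition used for Proposition~\ref{prop:esterr} applies conditionally on $\gF$ and produces the Gaussian-smoothing variance $\mathcal{O}(p/Q)$, the smoothing-bias terms $\mathcal{O}(\mu^2 d p^3 + \mu^2 d p^4/Q)$, and a residual controlled by $\mathbb{E}\|\mathcal{J}_N - \mathcal{J}_*\|_F^2$ together with $\mathbb{E}\|Y^N_k - y^*(x_k)\|^2$. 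Taking $\mathbb{E}_{\gF}$ and invoking Proposition~\ref{prop:expectjnjs} replaces the former residual by $C_\gamma^N \frac{L^2}{\mu_g^2} + \frac{\Gamma}{1-C_\gamma} + \cdots = \mathcal{O}((1-\alpha\mu_g)^N + 1/S)$, where $\Gamma$ carries the $\sigma^2/S$ factor and $C_\gamma^N$ is absorbed into $(1-\alpha\mu_g)^N$; the standard strongly convex SGD bound controls $\mathbb{E}\|Y^N_k - y^*(x_k)\|^2$ with the same structure. Summing the two brackets then assembles the claimed bound.

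The main obstacle I anticipate is legitimizing the conditional (given $\gF$) application of the deterministic machinery: one must verify that the coupled finite-difference estimator $\delta$, built from two SGD trajectories sharing the same batch path but started from $x_k$ and $x_k+\mu u_{k,j}$, is still a faithful proxy for the Jacobian-vector product, and that the conditional bias and variance constants (in particular the uniform boundedness underlying the $\mu^2$-order terms) remain integrable over $\gF$ rather than blowing up. Cleanly separating the independent $\gD_F$-randomness from the shared-path $\gF$-randomness via iterated expectation is what makes the cross terms vanish and the additive bound emerge; this bookkeeping, absent in the deterministic Proposition~\ref{prop:esterr}, is the genuinely new technical part.
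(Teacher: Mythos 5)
Your plan is essentially the paper's own route. The paper also separates the randomness by conditioning on $(x_k, Y^N_k)$: it writes the error as a variance term (collecting the $\gD_F$ and Gaussian fluctuations, bounded by the quantity $\Delta$) plus a squared bias term $\mathcal{B}_1$ (\Cref{lemma:estbias}), and that bias term is exactly your second bracket, handled via the conditional Lipschitz--Jacobian property (\Cref{prop:jnlips}), \Cref{prop:expectjnjs} for $\mathbb{E}\|\mathcal{J}_N-\mathcal{J}_*\|_F^2$, and the strongly convex SGD contraction for $\mathbb{E}\|Y^N_k-y^*_k\|^2$. The only real difference is bookkeeping: the paper uses the exact conditional bias--variance identity over both $\gD_F$ and the Gaussians at once, whereas you peel off only the $\gD_F$-randomness first with a $2\|a\|^2+2\|b\|^2$ split; both groupings generate the same collection of terms.

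One step needs tightening, though. Your first bracket contains $\frac{1}{Q}\sum_j\langle\delta_j,\nabla_y F(\cdot;\gD_F)-\nabla_y f\rangle u_j = \hat{\mathcal{J}}_N^\top\big(\nabla_y F-\nabla_y f\big)$, whose second moment, by independence of $\gD_F$, is $\mathbb{E}\|\hat{\mathcal{J}}_N\|_F^2\cdot\mathcal{O}(M^2/D_f)$. If you control this the way your sketch suggests --- $\|\delta_j\|\le\frac{L}{\mu_g}\|u_j\|$ from the conditional Lipschitzness of $Y^N(\cdot)$, then Gaussian moments for $u_j$ --- you pick up $\mathbb{E}\|u_j\|^4=\mathcal{O}(p^2)$ and hence an $\mathcal{O}(p^2/D_f)$ contribution, which is \emph{not} absorbed by the claimed $\mathcal{O}(1/D_f+p/Q)$ once $Q$ and $D_f$ are $\mathcal{O}(1)$. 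The fix is the one the paper uses in its $\Delta$ bound: apply the third item of \Cref{lemma:nesterov} row-wise to $\hat{\mathcal{J}}_{N,j}$ to get $\mathbb{E}\|\hat{\mathcal{J}}_{N,j}\|_F^2\le 4(p+4)\|\mathcal{J}_\mu\|_F^2+\mathcal{O}(\mu^2dp^3)$, and exploit the $1/Q$ averaging around the conditional mean $\mathcal{J}_\mu$ so that $\mathbb{E}\|\hat{\mathcal{J}}_N\|_F^2=\mathcal{O}\big(1+p/Q+\mu^2dp^4/Q\big)$; multiplied by $M^2/D_f$ this stays within the stated bound. With that substitution your argument goes through and matches the paper's.
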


Based on Propositions \ref{prop:jnlip}, \ref{prop:expectjnjs}, and \ref{prop:estvar}, we characterize the convergence rate for PZOBO-S.
\begin{theorem}[Convergence of PZOBO-S] \label{theorem2}
Suppose that Assumptions \ref{assum:scvx}, \ref{assum:inner}, \ref{assum:outer}, and \ref{assum:bvar} hold. Set the inner- and outer-loop stepsizes respectively as $\alpha = \frac{2}{L+\mu_g}$ and $\beta = \mathcal{O}(\frac{1}{\sqrt{K}})$, where $L = \max \{L_f, L_g\}$. Further, set $Q=\mathcal{O}(1)$, $D_f = \mathcal{O}(1)$, and $\mu = \mathcal{O}\Big(\frac{1}{\sqrt{Kdp^3}}\Big)$. 
Then, the iterates $x_k, k = 0, ..., K-1$ of PZOBO-S satisfy: 
\begin{align*}
\textstyle \frac{1-\frac{1}{\sqrt{K}}}{K}\sum_{k=0}^{K-1}&\textstyle\mathbb{E}\big\|\nabla\Phi(x_{k})\big\|^2  \leq \mathcal{O}\left( \frac{p}{\sqrt{K}} + (1-\alpha\mu_g)^{N} + \frac{1}{\sqrt{S}}\right). 
\end{align*}



\end{theorem}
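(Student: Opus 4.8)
The plan is to run a standard nonconvex descent argument on the outer iterates, using the smoothness of $\Phi$ together with the hypergradient mean-squared-error bound of \Cref{prop:estvar}; the only departure from the textbook SGD analysis is that the estimator $\widehat\nabla\Phi(x_k)$ is \emph{biased}, so I would track its bias and variance separately. First I would invoke the standard fact (a consequence of Assumptions~\ref{assum:scvx}--\ref{assum:outer}, analogous to \Cref{prop:jnlip}) that $\Phi$ has an $L_\Phi$-Lipschitz gradient, and apply the descent inequality to the update $x_{k+1}=x_k-\beta\widehat\nabla\Phi(x_k)$:
\[
\mathbb{E}\,\Phi(x_{k+1}) \le \Phi(x_k) - \beta\langle\nabla\Phi(x_k),\bar g_k\rangle + \tfrac{L_\Phi\beta^2}{2}\,\mathbb{E}\big\|\widehat\nabla\Phi(x_k)\big\|^2,
\]
where $\bar g_k=\mathbb{E}[\widehat\nabla\Phi(x_k)\mid x_k]$ and the expectation is over the inner SGD path $\{\gS_t\}$, the outer batch $\gD_F$, and the Gaussian directions $\{u_{k,j}\}$. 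Setting $b_k=\bar g_k-\nabla\Phi(x_k)$ and writing $e_k^2=\mathbb{E}\|\widehat\nabla\Phi(x_k)-\nabla\Phi(x_k)\|^2=\|b_k\|^2+\mathbb{V}_k$ for the MSE of \Cref{prop:estvar} (with $\mathbb{V}_k$ the variance), I would bound the cross term by Young's inequality and the second moment by $\mathbb{E}\|\widehat\nabla\Phi\|^2\le 2\|\nabla\Phi(x_k)\|^2+2\|b_k\|^2+\mathbb{V}_k$.

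Carrying out these substitutions yields a per-step recursion of the form
\[
\Big(\tfrac{\beta}{2}-L_\Phi\beta^2\Big)\big\|\nabla\Phi(x_k)\big\|^2 \le \Phi(x_k)-\mathbb{E}\,\Phi(x_{k+1}) + \Big(\tfrac{\beta}{2}+L_\Phi\beta^2\Big)\|b_k\|^2 + \tfrac{L_\Phi\beta^2}{2}\,\mathbb{V}_k,
\]
in which — crucially — the squared bias enters with weight $\Theta(\beta)$ while the variance enters with weight $\Theta(\beta^2)$. I would choose $\beta=\tfrac{1}{2L_\Phi\sqrt K}$ so that $\tfrac\beta2-L_\Phi\beta^2=\tfrac\beta2\big(1-\tfrac{1}{\sqrt K}\big)$, which is exactly what produces the prefactor $\tfrac{1-1/\sqrt K}{K}$ in the statement. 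Telescoping over $k=0,\dots,K-1$, using that $\Phi$ is bounded below by some $\Phi^\star$, and dividing by $\tfrac{\beta K}{2}$ gives
\[
\tfrac{1-1/\sqrt K}{K}\sum_{k=0}^{K-1}\mathbb{E}\big\|\nabla\Phi(x_k)\big\|^2 \le \tfrac{4L_\Phi(\Phi(x_0)-\Phi^\star)}{\sqrt K} + 2\,\overline{\|b\|^2} + L_\Phi\beta\,\overline{\mathbb{V}},
\]
where the overlines denote averages over $k$; the first term is already $\mathcal{O}(1/\sqrt K)$.

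It remains to plug in \Cref{prop:estvar} and the parameter choices. The key point is the bias/variance split: the sampling error of the zeroth-order directions, $\mathcal{O}(p/Q)$, and the outer-batch error $\mathcal{O}(1/D_f)$, belong to $\mathbb{V}_k$, so with $Q=\mathcal{O}(1)$ and $D_f=\mathcal{O}(1)$ the variance average is $\mathcal{O}(p)$ and the term $L_\Phi\beta\,\overline{\mathbb{V}}=\mathcal{O}(p/\sqrt K)$ inherits the extra $\beta$ damping; had these instead been counted as bias they would have left a non-vanishing $\mathcal{O}(p)$ constant. The squared bias $\overline{\|b\|^2}$ collects the inner-loop truncation term $(1-\alpha\mu_g)^N$, the Gaussian-smoothing term $\mathcal{O}(\mu^2 dp^3)$ — which the choice $\mu=\mathcal{O}(1/\sqrt{Kdp^3})$ drives down to $\mathcal{O}(1/K)$ — and the irreducible floor of the fixed-stepsize inner SGD, which by \Cref{prop:expectjnjs} and Assumption~\ref{assum:bvar} is controlled by the inner batch size $S$ and is recorded as the $S$-dependent term $\mathcal{O}(1/\sqrt S)$. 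Combining these three contributions reproduces the claimed $\mathcal{O}\big(p/\sqrt K+(1-\alpha\mu_g)^N+1/\sqrt S\big)$ bound.

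The main obstacle is not the descent recursion itself (which mirrors \Cref{theorem1}) but the correct bookkeeping for the biased, trajectory-dependent estimator: one must separate the MSE of \Cref{prop:estvar} into a bias component, whose terms survive undamped and are therefore controlled by $N$, $\mu$, and $S$, and a variance component, whose terms are damped by the extra factor $\beta=\mathcal{O}(1/\sqrt K)$ and are therefore controlled by $Q$ and $D_f$. Getting this split right is what lets a constant-order batch $Q=\mathcal{O}(1)$ still yield a vanishing $\mathcal{O}(p/\sqrt K)$ rate, and tracking how the fixed-batch inner SGD's non-vanishing error floor propagates through \Cref{prop:expectjnjs} into the hypergradient bias is the one genuinely stochastic-specific step absent from the deterministic analysis of \Cref{theorem1}.
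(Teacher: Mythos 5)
Your proposal is correct and follows essentially the same route as the paper's proof: an $L_\Phi$-smoothness descent step on $\Phi$ (\Cref{lemma:philip}), a bias/variance split of the hypergradient error via \Cref{lemma:estbias} and \Cref{prop:estvar} (whose $\mathcal{B}_1$ and $\Delta$ are exactly your squared-bias and variance buckets, with the $p/Q$ and $1/D_f$ terms in $\Delta$ and the $(1-\alpha\mu_g)^N$, $1/S$, and $\mu^2dp^3$ terms in $\mathcal{B}_1$), followed by telescoping with $\beta=\Theta(1/\sqrt K)$ so that the $\mathcal{O}(p)$ variance is damped to $\mathcal{O}(p/\sqrt K)$ while the bias survives undamped and is controlled by $N$, $\mu$, and $S$. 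The one technical difference is the cross term: the paper bounds $-\mathbb{E}\langle\nabla\Phi(x_k),\,\mathbb{E}_k\widehat\nabla\Phi(x_k)-\nabla\Phi(x_k)\rangle$ by Cauchy--Schwarz together with the uniform bound $\|\nabla\Phi\|\le M_\Phi$ from \cref{hyperGboud}, so the bias enters \emph{linearly} as $M_\Phi\sqrt{\mathcal{B}_1}$ (producing the $1/\sqrt S$ and $(1-\alpha\mu_g)^{N/2}$ contributions), whereas your Young-inequality treatment keeps the \emph{squared} bias, which is marginally tighter (giving $1/S$ and $(1-\alpha\mu_g)^N$, both dominated by the stated bound) and dispenses with the gradient-boundedness step at the cost of halving the coefficient of $\|\nabla\Phi(x_k)\|^2$. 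Both choices reproduce the prefactor $\frac{1-1/\sqrt K}{K}$ and the claimed $\mathcal{O}\big(p/\sqrt K+(1-\alpha\mu_g)^N+1/\sqrt S\big)$ rate, so this is a cosmetic rather than substantive divergence.
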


Comparing to the convergence bound in \Cref{theorem1} for the {\em deterministic} algorithm PZOBO, \Cref{theorem2} for the {\em stochastic} algorithm PZOBO-S captures one more sublinearly decreasing error term $\frac{1}{\sqrt{S}}$ due to the sampling of inner batches to estimate the objectives. Note that the sampling of outer batches has been included into the sublinear decay term w.r.t.~the number $K$ of outer-loop iterations.

\section{Experiments} \label{sec:exp}


We validate our algorithms in four bilevel problems: {\bf shallow hyper-representation (HR)} with linear/2-layer net embedding model on synthetic data, {\bf deep HR with LeNet network} \citep{lecun1998gradient} on MNIST dataset, {\bf few-shot meta-learning} with {\bf ResNet-12} on miniImageNet dataset, and {\bf hyperparameter optimization (HO)} on the 20 Newsgroup dataset. We run all models using a single NVIDIA Tesla P100 GPU. 
All running time are in seconds.

\begin{figure*}[ht]
\centering
\begin{tabular}{ccc}
(a) $d = 128$ & (b) $d=256$ & (c) \small{PZOBO-N: N inner steps}\\
\includegraphics[height=2.3cm]{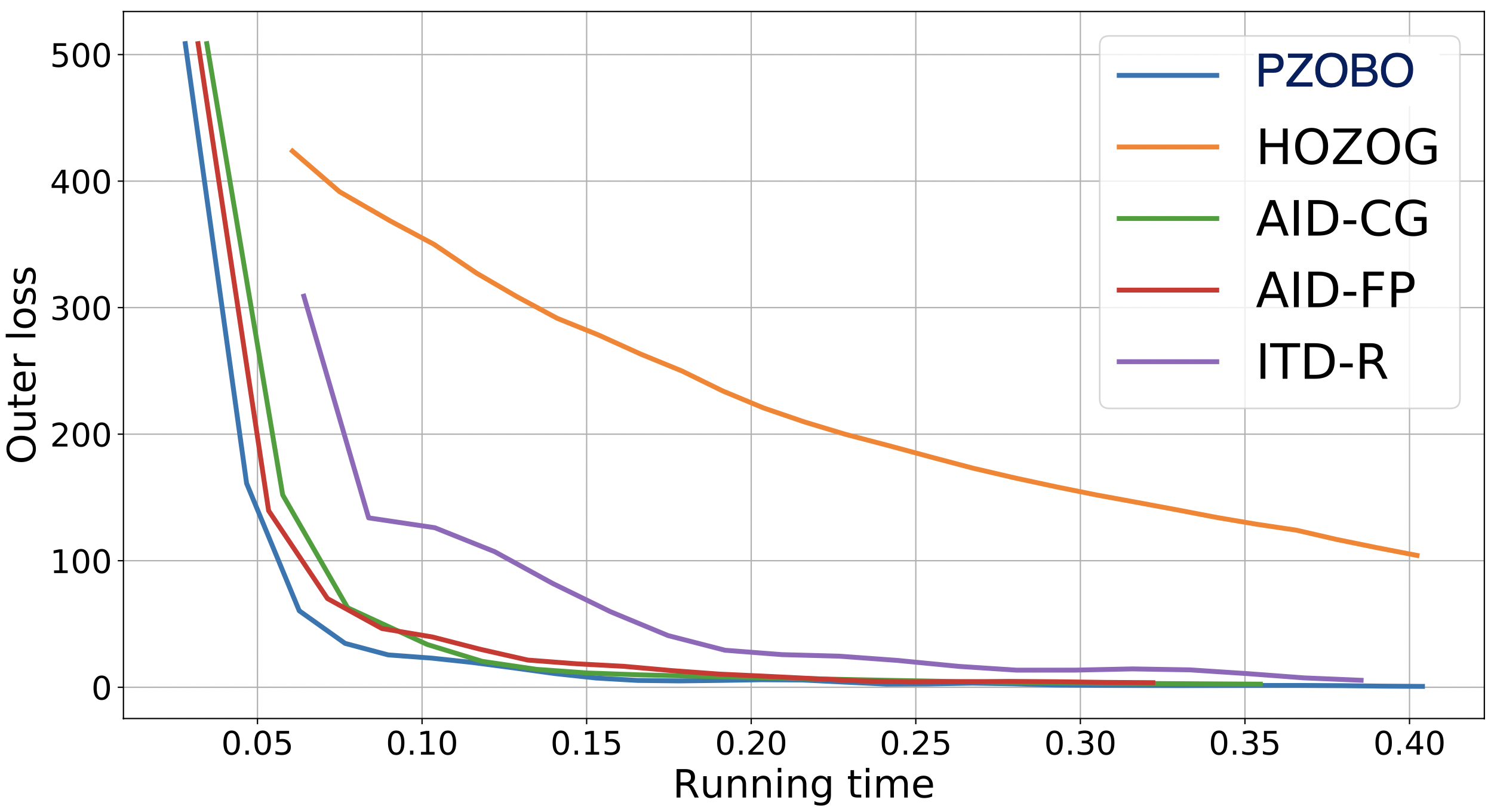}
&\includegraphics[height=2.3cm]{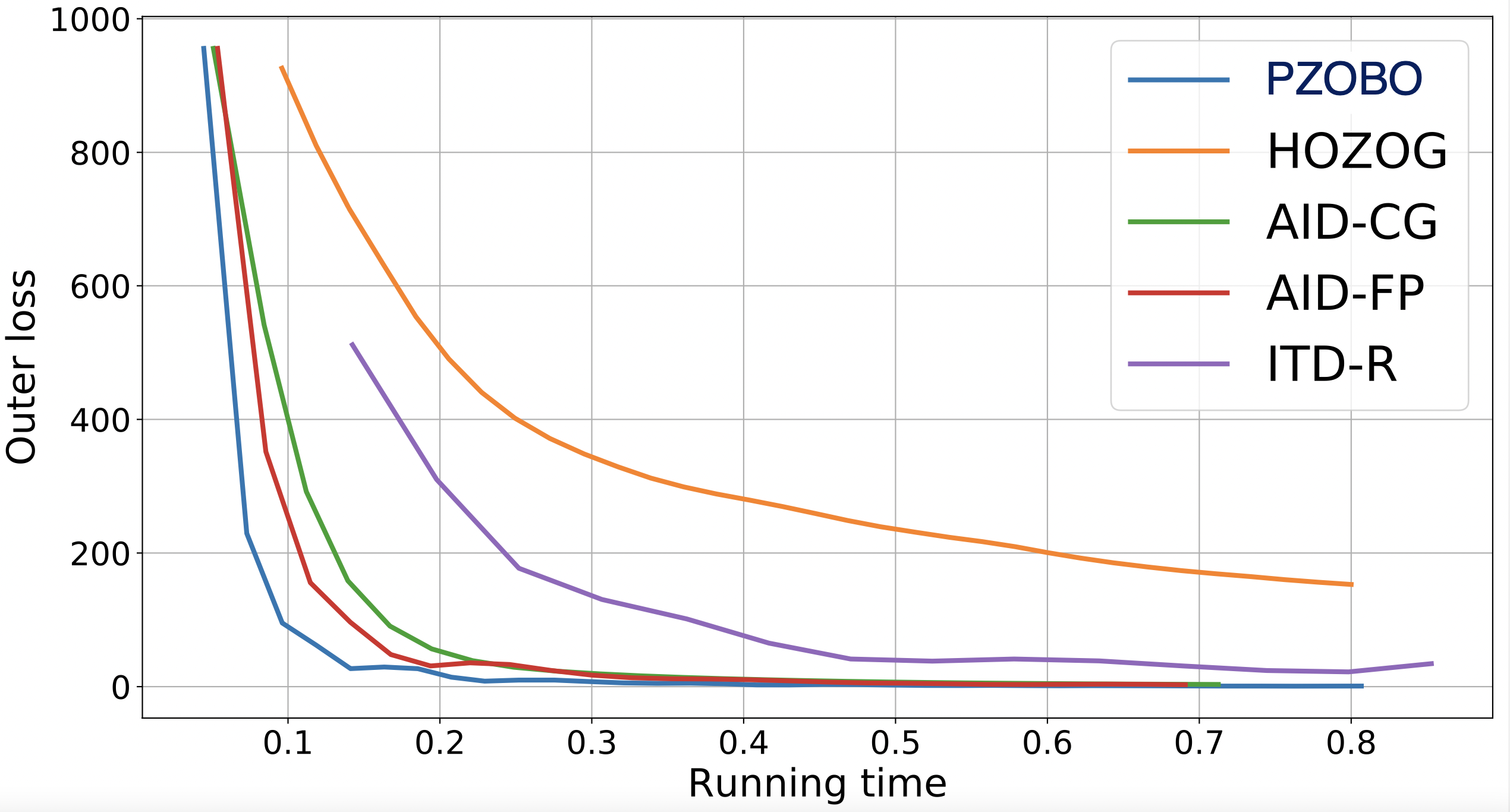}
&\includegraphics[height=2.3cm]{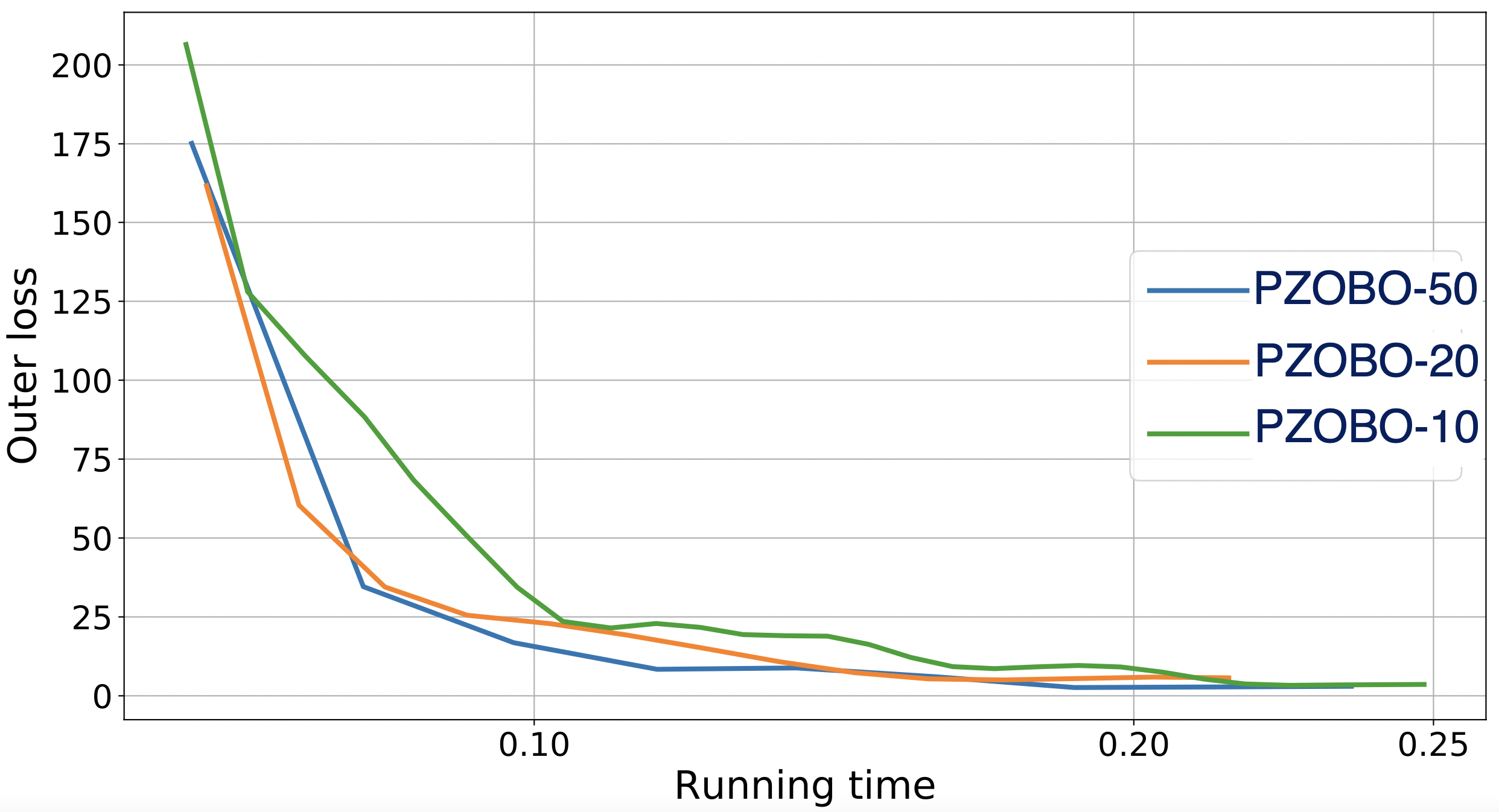}\\
(d) Outer loss & (e) Hypergradient norm & (f) PZOBO-N: N inner  steps \\
\includegraphics[height=2.3cm]{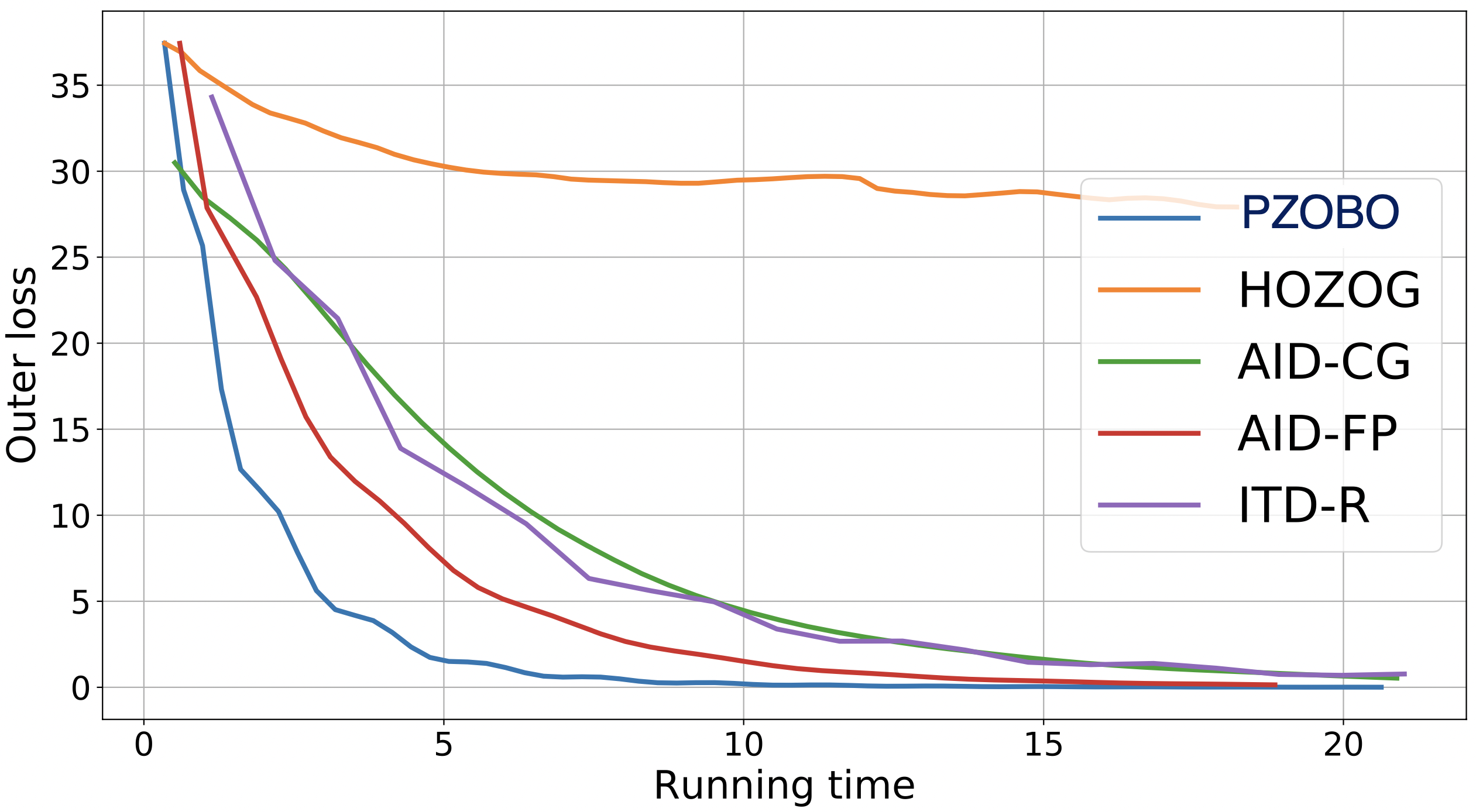}
&\includegraphics[height=2.3cm]{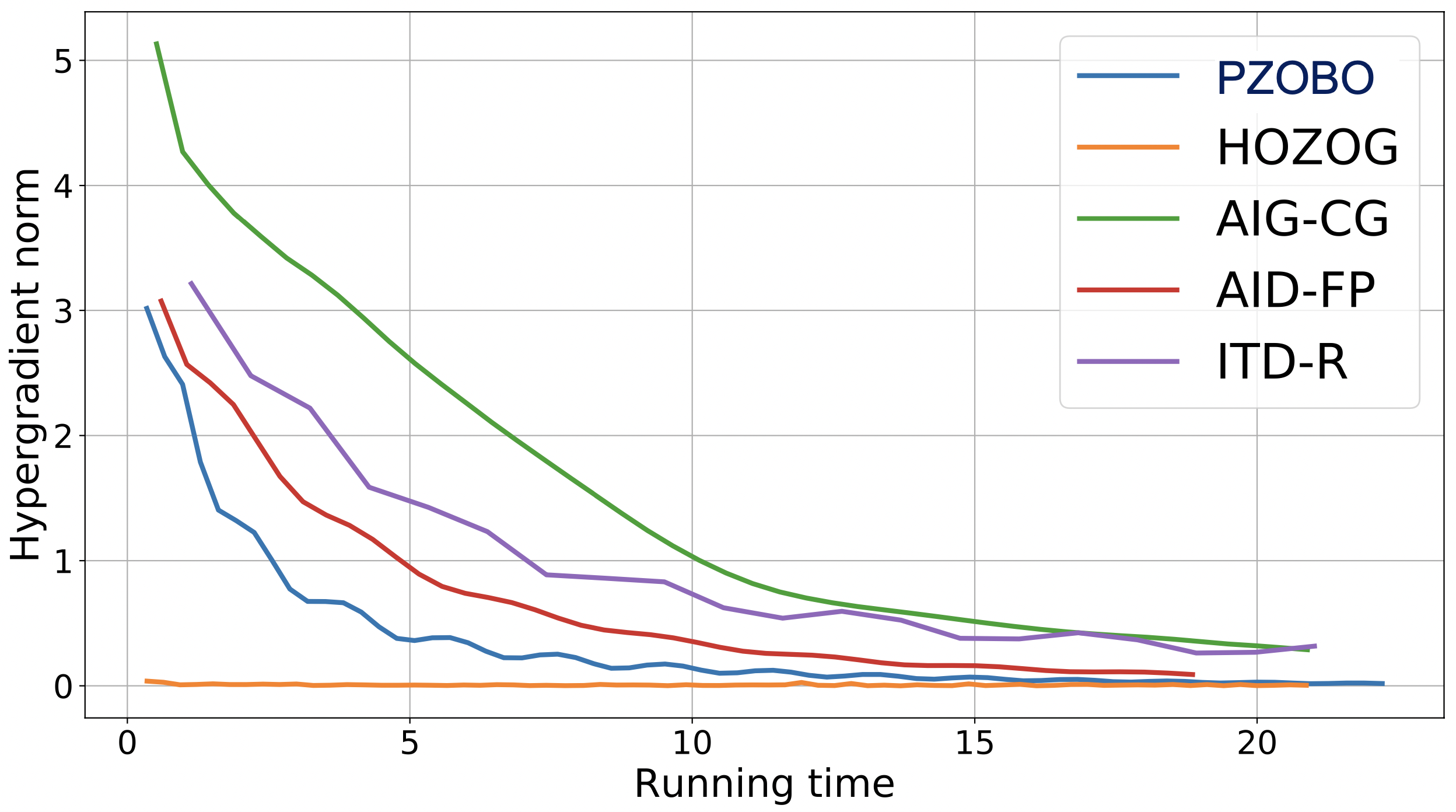}
&\includegraphics[height=2.3cm]{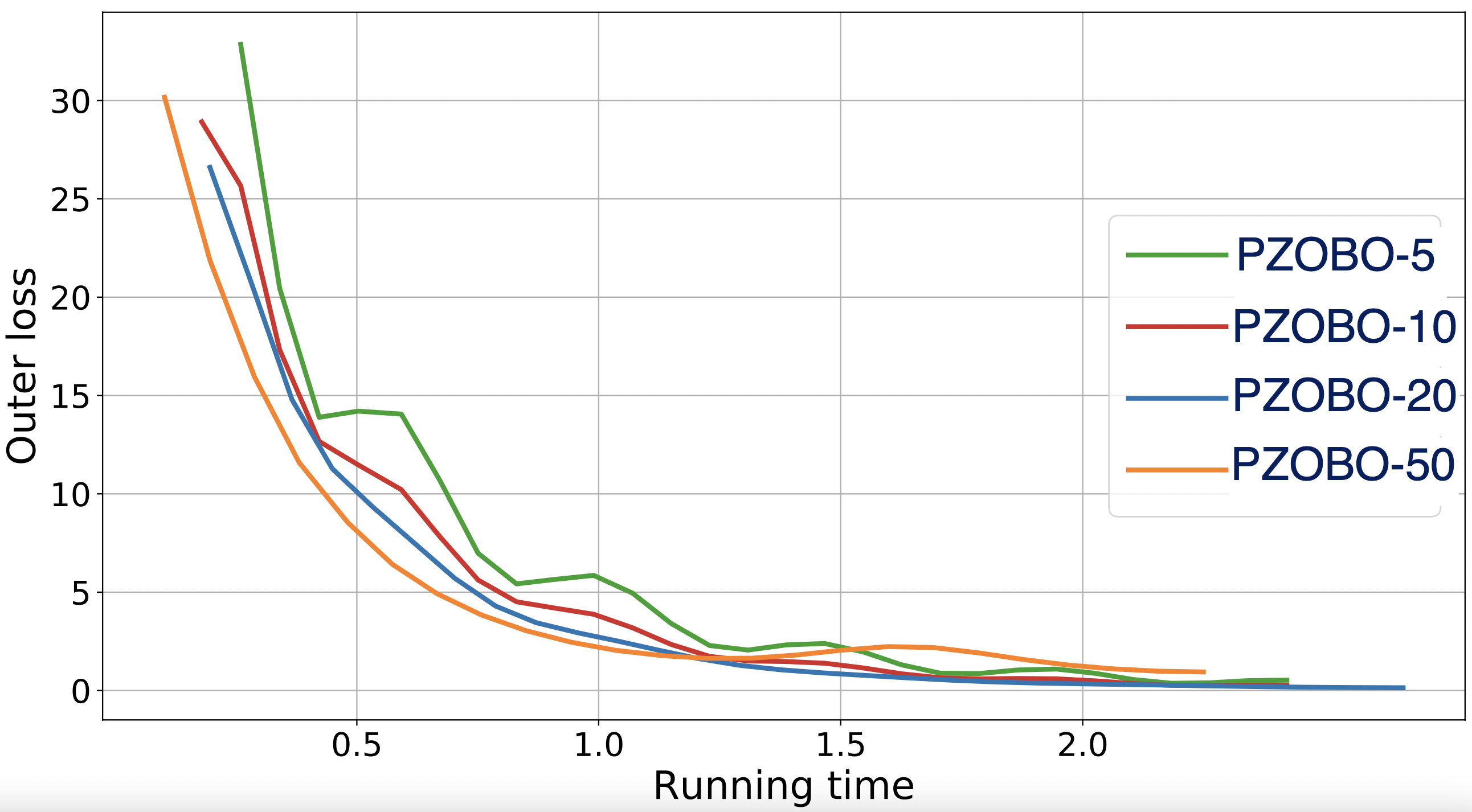}\\
\end{tabular}
\caption{First row: HR with linear embedding model. Second row: HR with two-layer net.}
\label{fig:hrsha}
\vspace{-3mm}
\end{figure*}

\subsection{Shallow Hyper-Representation on Synthetic Data}\label{sec:hrlin}
The hyper-representation (HR) problem \citep{franceschi2018bilevel, grazzi2020bo} searches for a regression (or classification) model following a two-phased optimization process. The inner-level identifies the optimal linear regressor parameters $w$, and the outer level solves for the optimal embedding model (i.e., representation) parameters $\lambda$.
%
Mathematically, the problem can be modeled by the following bilevel optimization: 
\begin{small}
\begin{align}
&\min_{\lambda \in \mathbb{R}^p} f(\lambda) = \frac{1}{2n_{1}}\left\|T(X_{1}; \lambda) w^*-Y_{1}\right\|^{2}, \label{eq:shr}\text{s.t.}\; w^*=\underset{w \in \mathbb{R}^{d}}{\operatorname{argmin}} \frac{1}{2n_{2}}\|T(X_{2}; \lambda) w-Y_{2}\|^{2}+\frac{\gamma}{2}\|w\|^{2}
\end{align}
\end{small} 
\hspace{-0.2cm} where $X_{2} \in \mathbb{R}^{n_2 \times m}$ and $X_{1} \in \mathbb{R}^{n_1 \times m}$ are matrices of synthesized training and validation data, and $Y_{2} \in \mathbb{R}^{n_2}$, $Y_{1} \in \mathbb{R}^{n_1}$ are the corresponding response vectors. In the case of shallow HR, the embedding function $T(\cdot; \lambda)$ is either a linear transformation or a two-layer network. We generate data matrices $X_{1}, X_{2}$ and labels $Y_{1}, Y_{1}$ following the same process in \cite{grazzi2020bo}. 

We compare our PZOBO algorithm with the baseline bilevel optimizers AID-FP, AID-CG, ITD-R, and HOZOG (see \Cref{app:baselines} for details about the baseline algorithms and hyperparameters used). 
\Cref{fig:hrsha} show the performance comparison among the algorithms under linear and two-layer net embedding models. It can be observed that for both cases, our proposed method PZOBO  converges faster than all the other approaches, and the advantage of PZOBO becomes more significant in \Cref{fig:hrsha} (d), which is under a higher-dimensional model of a two-layer net. 
In particular, PZOBO outperforms the existing ES-based algorithm HOZOG. This is because HOZOG uses the ES technique to approximate the entire hypergradient, which likely incurs a large estimation error. In contrast, our PZOBO exploits the structure of the hypergradient and only estimate the response Jacobian so that the estimation of hypergradient is more accurate. Such an advantage is more evident under a two-layer net model, where 
HOZOG does not converge as shown in \Cref{fig:hrsha} (d). This can be explained by the flat hypergradient norm as shown in \Cref{fig:hrsha} (e), which indicates that the hypergradient estimator in HOZOG fails to provide a good descent direction for the outer optimizer. \Cref{fig:hrsha} (c) and (f) further show that the convergence of PZOBO does not change substantially with the number $N$ of inner GD steps, and hence tuning of $N$ in practice is not costly. 


\subsection{Deep Hyper-Representation on MNIST Dataset} \label{sec:hrm}
In order to demonstrate the advantage of our proposed algorithms in neural net models, we perform deep hyper-representation to classify MNIST images by learning an entire LeNet network.The problem formulation is described in \Cref{app:stocbilevel}.

\begin{figure*}[ht]
\centering
\begin{tabular}{ccc}
\includegraphics[height=2.3cm]{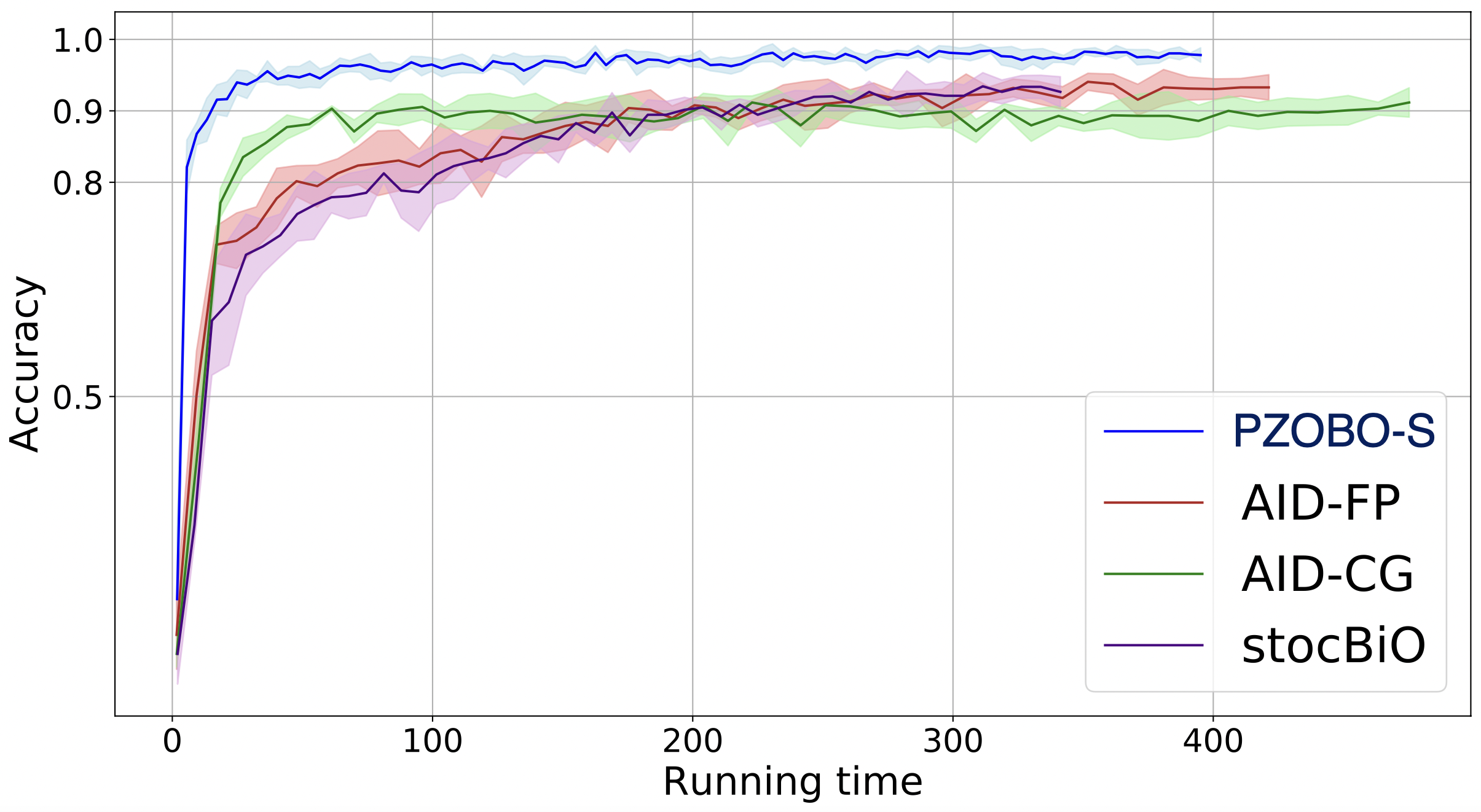}
&\includegraphics[ height=2.3cm]{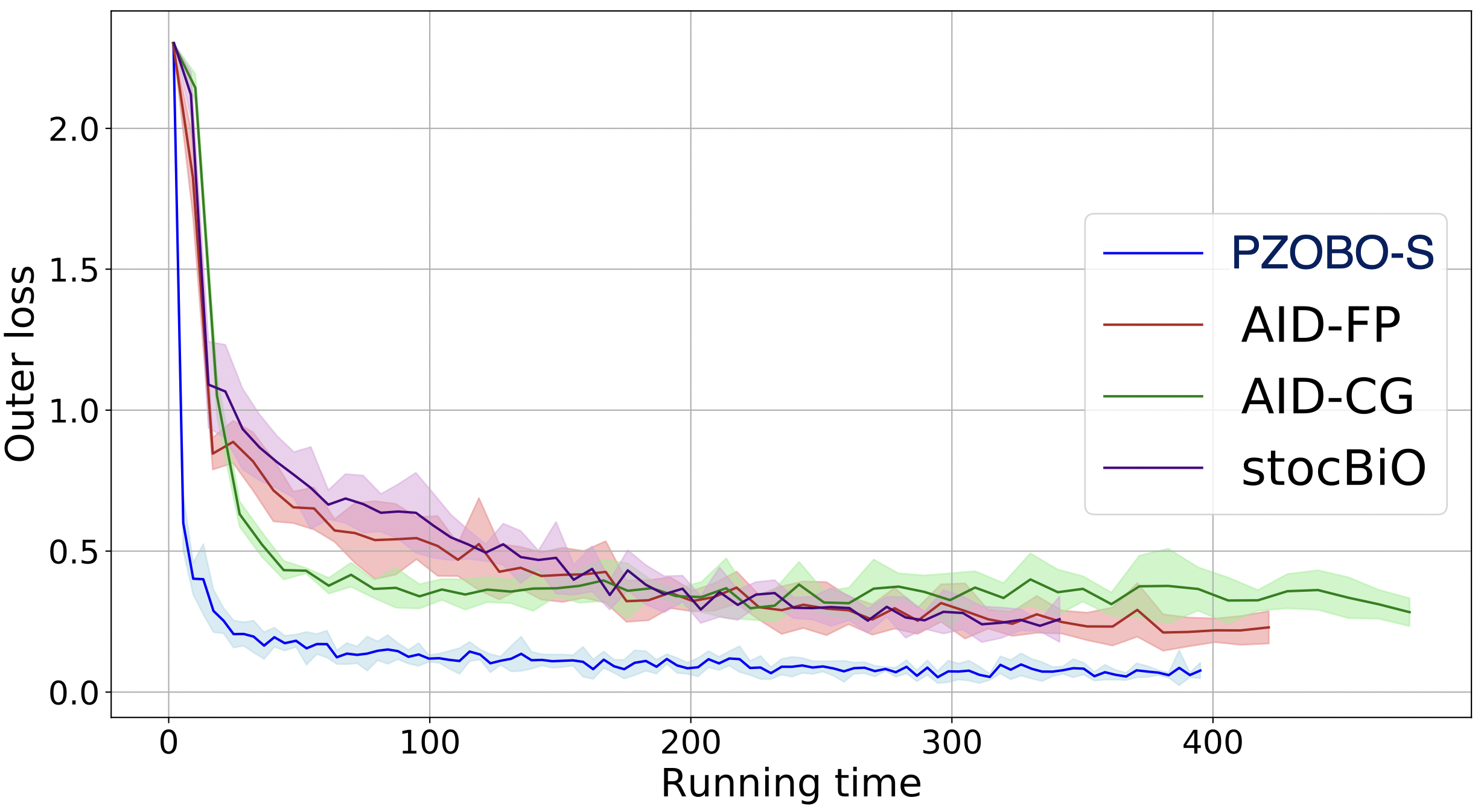}
&\includegraphics[ height=2.3cm]{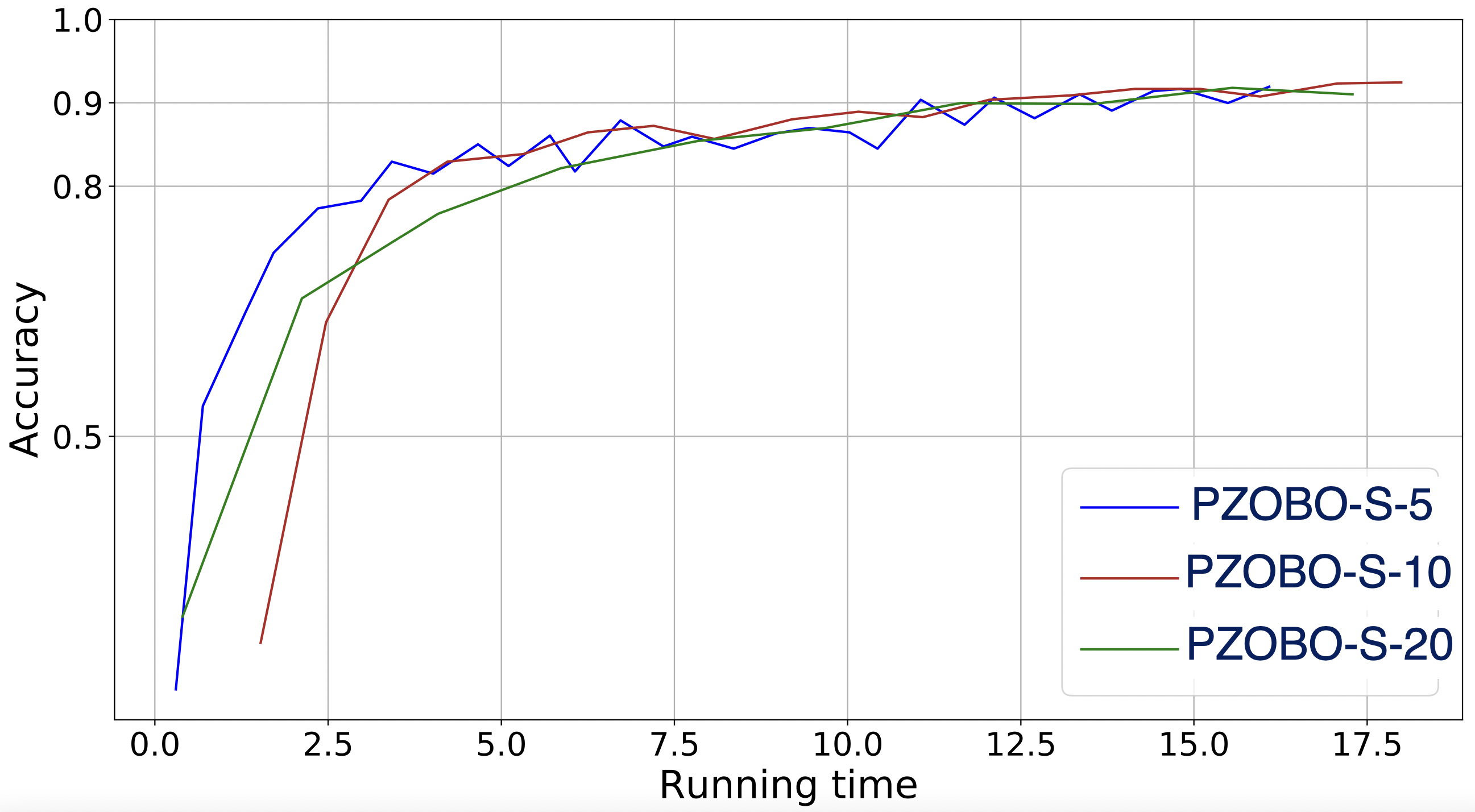}\\
(a) Accuracy on outer data & (b) Outer loss value &(c) PZOBO-S-N: N inner steps
\end{tabular}
\vspace{0mm}
\caption{Deep HR on the MNIST dataset.}
\label{fig:hrmnist}
\vspace{-2mm}
\end{figure*}

\Cref{fig:hrmnist} compares the classification accuracy on the outer dataset $\mathcal{D}_{\mathrm{out}}$ between our PZOBO-S and other stochastic baseline bilevel optimizers including two AID-based stochastic algorithms AID-FP and AID-CG, and a recently proposed new stochastic bilevel optimizer stocBiO which has been demonstrated to exhibit superior performance. Note that ITD and HOZOG are not included in the comparison, because there have not been stochastic algorithms proposed based on these approaches in the literature yet. Our algorithm PZOBO-S converges with the fastest rate and attains the best accuracy with the lowest variance among all algorithms. Note that PZOBO-S is the only bilevel method that is able to attain the same accuracy of $0.98$+ obtained by the state-of-the-art training of all parameters with one-phased optimization
on the MNIST dataset using the same backbone network. All other bilevel methods fail to recover such a level of performance, but instead saturate around an accuracy of $0.93$. Further, \Cref{fig:hrmnist}(c) indicates that the convergence of PZOBO-S does not change substantially with the number $N$ of inner SGD steps. This demonstrates the robustness of our method when applied to complex function geometries such as deep nets.

\subsection{Few-Shot Meta-Learning over MiniImageNet} \label{sec:meta}

We study the few-shot image recognition problem, where classification tasks $\gT_i, i=1,...,m$ are sampled over a distribution $\gP_\gT$. In particular, we consider the following commonly adopted meta-learning setting (e.g., \citep{raghu2019rapid}, where all tasks share common embedding features parameterized by $\phi$, and each task $\gT_i$ has its task-specific parameter $w_i$ for $i=1,...,m$.
More specifically, we set $\phi$ to be the parameters of the convolutional part of a deep CNN model (e.g., ResNet-12 network) and $w$ includes the parameters of the last classification layer. All model parameters ($\phi$, $w$) are trained following a bilevel procedure. In the inner-loop, the base learner of each task $\gT_i$ fixes $\phi$ and minimizes its loss function over a training set $\gS_i$ to obtain its adapted parameters $w_i^*$. At the outer stage, the meta-learner computes the test loss for each task $\gT_i$ using the parameters ($\phi$, $w_i^*$) over a test set $\gD_i$, and optimizes the parameters $\phi$ of the common embedding function by minimizing the meta-objective $\mathcal{L}_{\mathrm{meta}}$ over all classification tasks. The detailed problem formulation is given in \Cref{app:metaexp}.

We conduct few-shot meta-learning on the  miniImageNet dataset \citep{vinyals2016matching} using two different backbone networks for feature extraction: ResNet-12 and CNN4 \citep{vinyals2016matching}. The dataset and hyperparameter details can be found in \Cref{app:metaexp}. 
We compare our algorithm PZOBO with four baseline methods for few-shot meta-learning {\bf MAML} \citep{finn2017model}, {\bf ANIL} \citep{raghu2019rapid},  {\bf MetaOptNet} \citep{lee2019meta}, and {\bf ProtoNet} \citep{snell2017prototypical}. Other bilevel optimizers are not included into comparison because they do not solve the problem within a comparable time frame. Zeroth-order {\bf ES-MAML}~\cite{song2019maml} is not included because it exhibits large variance and cannot reach a desired accuracy. 
Also note that since ProtoNet and MetaOptNet are usually presented in the ResNet setting, and are not relevant in smaller scale networks, we include them into comparison only for our ResNet-12 experiment. We run their efficient Pytorch Lightning implementations available at the \textit{learn2learn} repository  \citep{learn2learn2019}. 



\begin{figure*}[t]
    \centering
    \begin{tabular}{cc}
        \includegraphics[height=4.4cm]{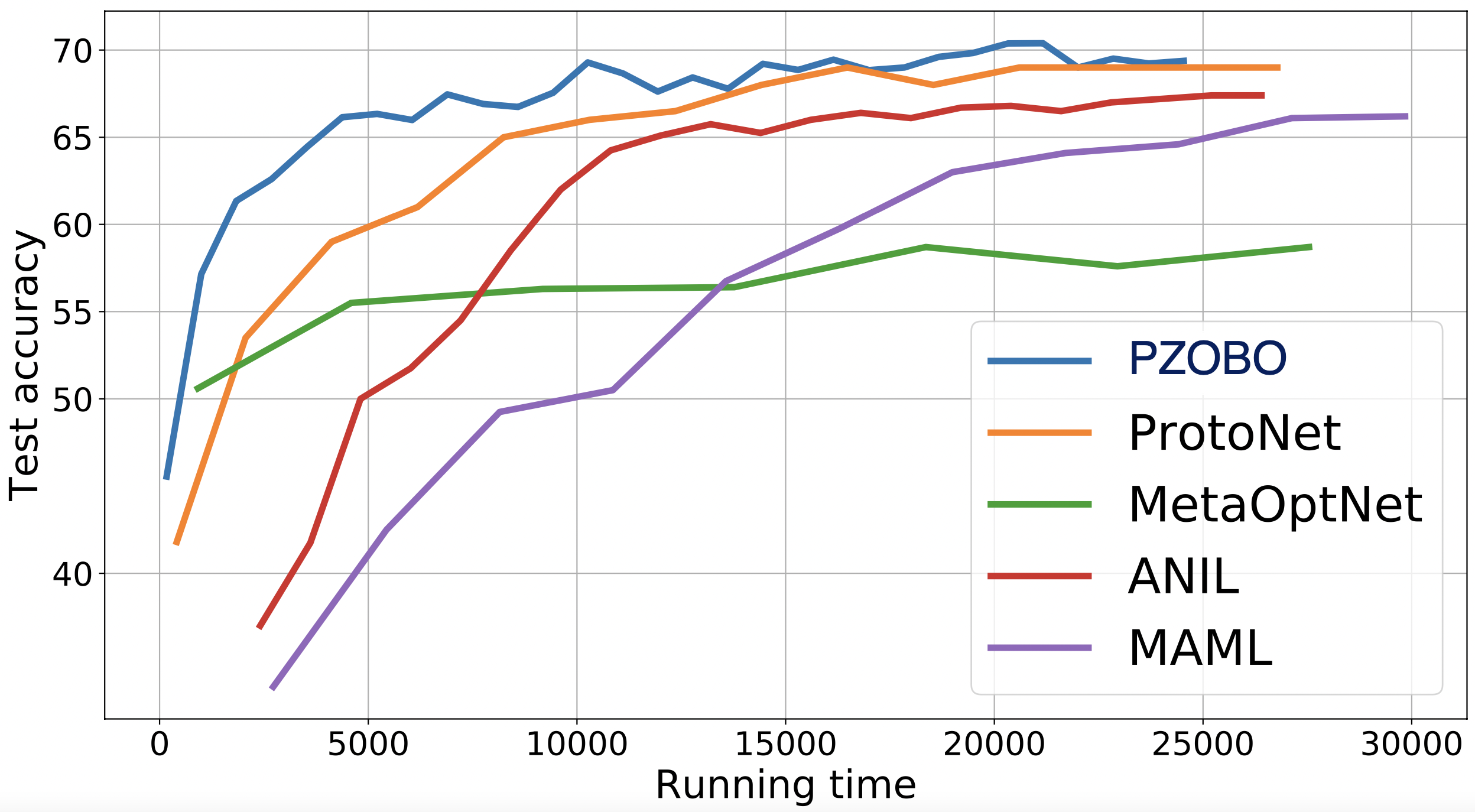}  
        \includegraphics[height=4.4cm]{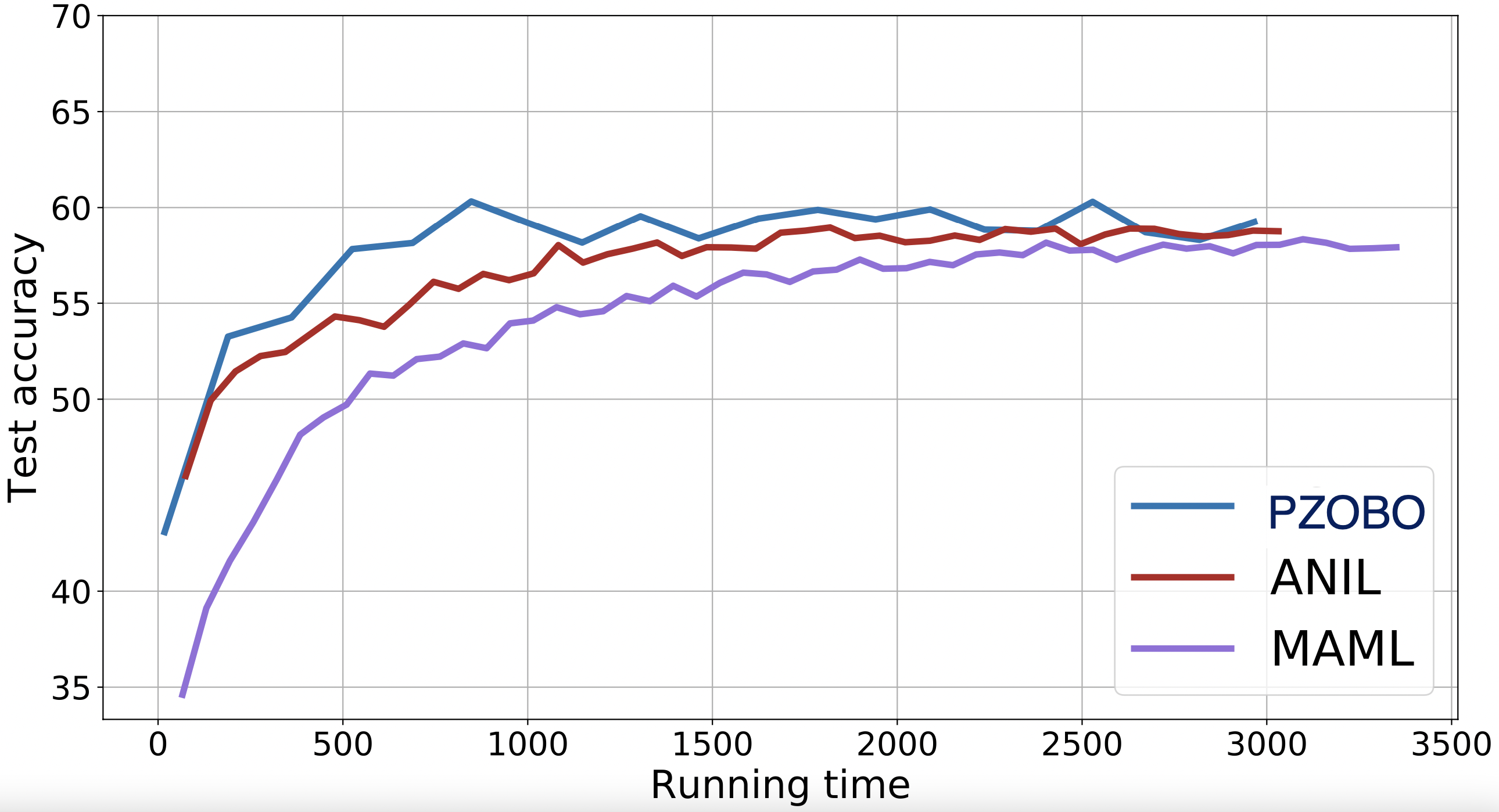}  
    \end{tabular}
    \caption{5way-5shot few-shot classification on the miniImageNet dataset on single GPU. Left plot: Test accuracy (ResNet-12); Right plot: Test accuracy (CNN4)}
    \label{fig:meta}
    \vspace{-2mm}
\end{figure*} 

\begin{table}[t]
    \centering
    \begin{tabular}{c|c|c}
        Algorithm & 67\% & 69\% \\
        \hline \hline 
        PZOBO & \bf 2.0 & \bf 2.8 \\
        \hline
        MetaOptNet & 20+ & 20+ \\ 
        \hline
        ProtNet & 3.5  & 4.4 \\
        \hline
        ANIL & 6.3 & - \\
        \hline
        MAML & 9.7 & - \\
        \end{tabular} 
    \caption{Time (hours) to reach X\% accuracy (ResNet-12).}
    \label{tab:tab1}
\end{table}

\Cref{fig:meta}(a) and (b) show that our algorithm PZOBO converges faster than the other baseline meta-learning methods. Also, Comparing \Cref{fig:meta}(a) and (b), the advantage of our method over the baselines {\bf MAML} and {\bf ANIL} becomes more significant as the size of the network increases. Further, \Cref{tab:tab1}) shows that MetaOptNet did not reach 69\% accuracy after 20 hours of training with ResNet-12 network. In comparison, our PZOBO is able to attain 69\% in less than three hours, which is about 1.5 times less than the time taken for ProtoNet to reach the same performance level. Both PZOBO and ProtoNet saturate around 70\% accuracy after 10 hours of training. 

\vspace{-0.1cm}
\subsection{Hyperparameter Optimization}\label{exp:hogg}
\vspace{-0.1cm}
Hyperparameter optimization (HO) is the problem of finding the set of the best hyperparamters (either representational or regularization parameters) that yield the optimal value of some criterion of model quality (usually a validation loss on unseen data). HO can be posed as a bilevel optimization problem in which the inner problem corresponds to finding the model parameters by minimizing a training loss (usually regularized) for the given hyperparameters and then the outer problem minimizes over the hyperparameters. 
We provide a more complete description of the problem and settings in \Cref{app:hpo}. 

\begin{figure}[ht] 
    \centering
        \includegraphics[height=3.5cm]{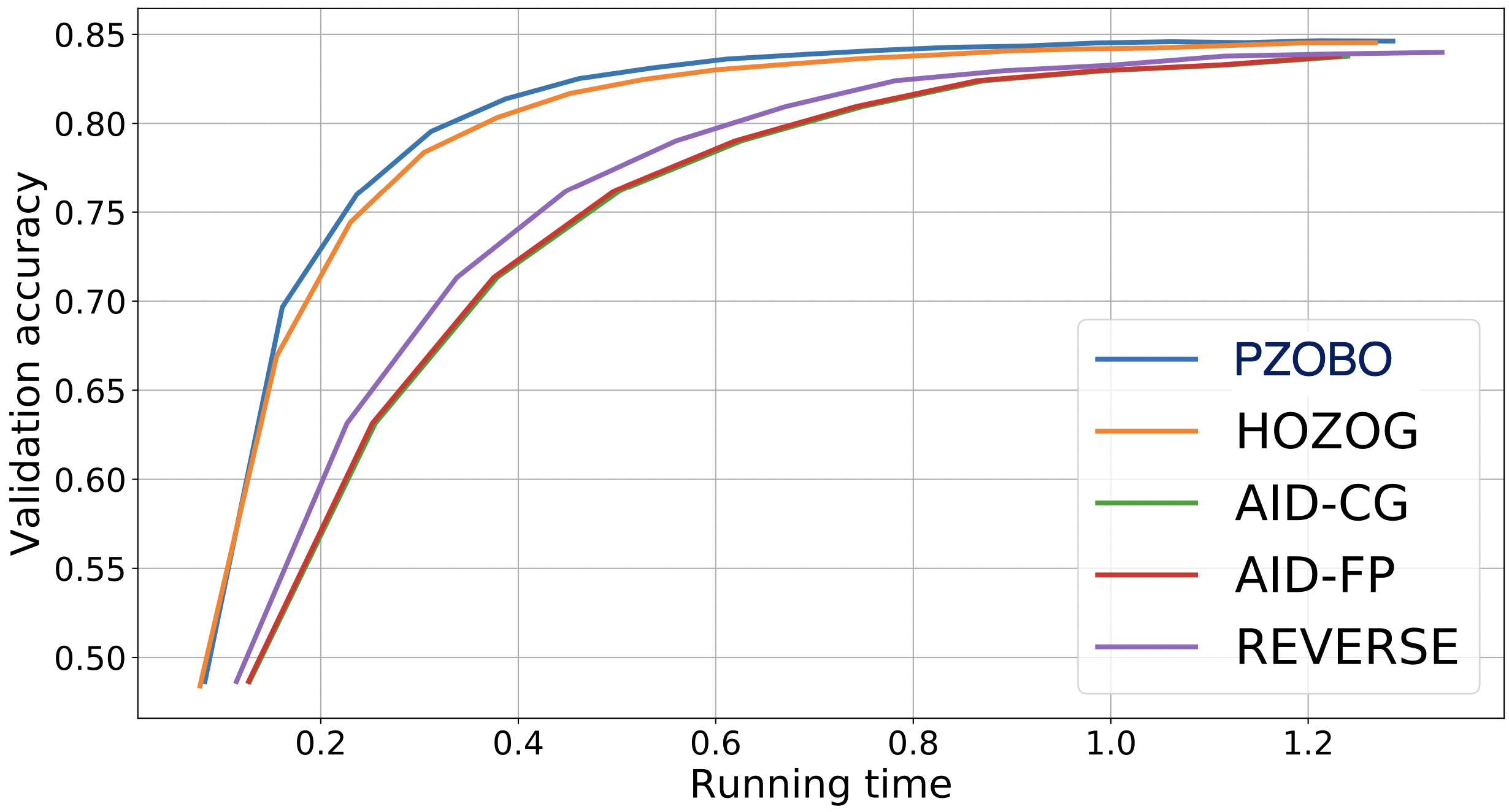}
        \includegraphics[ height=3.5cm]{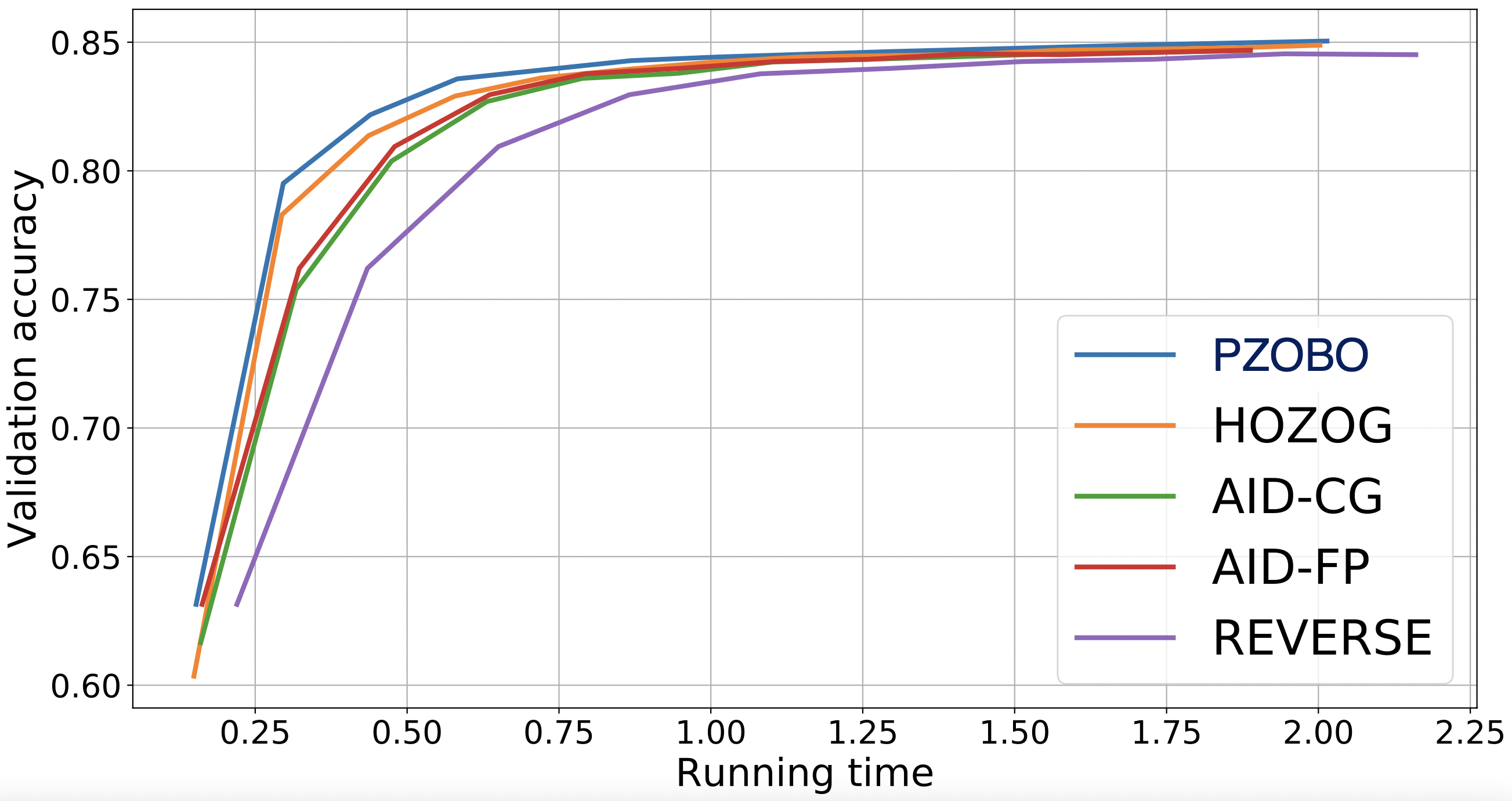}
    \caption{Classification results on 20 Newsgroup dataset. \textbf{Left:} number of inner GD step $N=5$. \textbf{Right:} number of inner GD steps $N=10$.}
    \label{fig:logres}
\end{figure}

It can be seen from \Cref{fig:logres}, our method PZOBO slightly outperforms HOZOG and converges faster than the other AID and ITD based approaches. We note that the similar performance for PZOBO and HOZOG can be explained by the fact that in HO, the hypergradient expession in \cref{eq:hgest} contains only the second term (the first term is zero), which is very close to the approximation in HOZOG method. However, as we have seen in the HR experiments in \Cref{fig:hrsha}, PZOBO achieves a much better performance than HOZOG. Thus, compared to HOZOG, our PZOBO is much more stable and achieves superior performance across many bilevel problems. 

\section{Conclusion} 

In this paper, we propose a novel Hessian-free approach for bilevel optimization based on a zeroth-order-like Jacobian estimator. Compared to the existing such types of Hessian-free algorithms, our approach explores the analytical structure of the hypergradient, and hence leads to much more efficient and accurate hypergradient estimation. Thus, our algorithm outperforms existing baselines in various experiments, particularly in high-dimensional problems. We also provide the convergence guarantee and characterize the convergence rate for our proposed algorithms. We anticipate that 
our approach and analysis will be useful for bilevel optimization and various machine learning applications. 

\bibliography{iclr2021_conference}
\bibliographystyle{iclr2021_conference}

\newpage
\appendix 
\textbf{\Large Supplementary Material}

\section{Stochastic Bilevel Optimizer PZOBO-S} \label{app:algesjs}

We present the algorithm specification for our proposed {\bf stochastic} bilevel optimizer PZOBO-S.

\begin{algorithm}[H]
	\caption{Stochastic PZOBO algorithm (PZOBO-S)} 
	\label{alg:zojs}
	\begin{algorithmic}[1]
		\STATE {\bfseries Input:} lower- and upper-level stepsizes $\alpha, \beta >0$, initializations $x_0 \in \mathbb{R}^p$ and $y_0 \in \mathbb{R}^d$, inner and outer iterations numbers $K$ and $N$, and number of Gaussian vectors $Q$. 
		\vspace{0.05cm}
		\FOR{$k=0,1,...,K$}
		\vspace{0.05cm}
		\STATE{Set $Y^0_k = y_0, \quad Y^0_{k,j} = y_0, j=1,...,Q$}
		\vspace{0.05cm}
		\STATE{Generate $u_{k,j} = \mathcal{N}(0, I) \in \mathbb{R}^p, \quad j=1,...,Q$} 
		\FOR{$t=1,2,...,N$}
		\vspace{0.05cm}
		\STATE{Draw a sample batch $\gS_{t-1}$}  
		\vspace{0.05cm}
		\STATE{Update $Y^t_k = Y^{t-1}_k-\alpha \nabla_y G(x_k,Y^{t-1}_k; \gS_{t-1})$}
		\vspace{0.05cm}
		\FOR{$j=1,...,Q$}
		\vspace{0.05cm}
		\STATE{Update $Y^t_{k,j} = Y^{t-1}_{k,j}-\alpha \nabla_y G\left(x_k + \mu u_{k,j},Y^{t-1}_{k,j}; \gS_{t-1}\right)$}
		\vspace{0.05cm}
		\ENDFOR
		\ENDFOR
		\STATE{Compute $\delta_j = \frac{Y^N_{k,j} - Y^N_k}{\mu}, \quad j=1,...,Q$}
		\vspace{0.05cm}
		\STATE{Draw a sample batch $\gD_F$}
		\STATE{Compute $\widehat \nabla \Phi(x_k) = \nabla_x F(x_k, Y^N_k;\gD_F) + \frac{1}{Q} \sum_{j=1}^{Q} \left\langle\delta_j, \nabla_y F(x_k, Y^N_k; \gD_F)\right\rangle u_{k,j}$}
		\vspace{0.05cm}
                  
                 \STATE{Update $x_{k+1} = x_k - \beta \widehat \nabla \Phi(x_k) $}
		\ENDFOR
	\end{algorithmic}
	\end{algorithm}



\section{Further Specifications for Experiments in \Cref{sec:exp}} \label{metaspec}

We note that the smoothing parameter $\mu$ (in Algorithms \ref{alg:zojd} and \ref{alg:zojs}) was easy to set and a value of $0.1$ or $0.01$ yields a good starting point across all our experiments. The batch size $Q$ (in Algorithms \ref{alg:zojd} and \ref{alg:zojs}) is fixed to $1$ (i.e., we use one Jacobian oracle) in all our experiments. 

\subsection{Specifications on Baseline Bilevel Approaches in \Cref{sec:hrlin}}\label{app:baselines}

We compare our algorithm PZOBO with the following baseline methods: 
\begin{list}{$\bullet$}{\topsep=0.1ex \leftmargin=0.15in \rightmargin=0.1in \itemsep =0.01in}
\item HOZOG \citep{binzero}: a hyperparameter optimization algorithm that uses evolution strategies to estimate the entire hypergradient (both the direct and indirect component). We use our own implementation for this method. 
\item AID-CG \citep{grazzi2020bo, rajeswaran2019meta}: approximate implicit differentiation with conjugate gradient. We use its implementation provided at \url{https://github.com/prolearner/hypertorch}
\item AID-FP \citep{grazzi2020bo}: approximate implicit differentiation with fixed-point. We experimented with its implementation at the repositery  \url{https://github.com/prolearner/hypertorch} 
\item ITD-R (REVERSE) \citep{franceschi2017forward}: an iterative differentiation method that computes hypergradients using reverse mode automatic differention (RMAD). We use its implementation provided at \url{https://github.com/prolearner/hypertorch}.
\end{list}

\subsection{Hyperparameters Details for Shallow HR Experiments in \Cref{sec:hrlin}}
For the linear embedding case, we set the smoothing parameter $\mu$ to be $0.01$ for PZOBO and HOZOG. We use the following hyperparameters for all compared methods. The number of inner GD steps is fixed to $N = 20$ with the learning rate of $\alpha = 0.001$. For the outer optimizer, we use Adam \citep{kingma2014adam} with a learning rate of $0.05$. The value of $\gamma$ in \cref{eq:shr} is set to be $0.1$. For the two-layer net case, we use $\mu = 0.1$ for PZOBO and HOZOG. For all methods, we set $N = 10$, $\alpha = 0.001$, $\beta = 0.001$, and use Adam with a learning rate of $0.01$ as the outer optimizer.

\subsection{Specifications on Problem Formulation and Baseline Stochastic Algorithms in \Cref{sec:hrm}}\label{app:stocbilevel}
The corresponding bilevel problem is given by 
\begin{align*}
&\underset{\lambda}\min \hspace{2pt}\mathcal{L}_{\mathrm{out}}(\lambda) := \frac{1}{\left|\mathcal{D}_{\mathrm{out}}\right|} \underset{(x_i, y_i) \in \mathcal{D}_{\mathrm{out}}} \sum \mathcal{L}\left(w^{*}(\lambda)f(x_i;\lambda), y_i\right) \\ 
& \text { s.t. } \quad w^{*}(\lambda)=\underset{w \in \mathbb{R}^{c \times p}}{\arg \min } \hspace{2pt}\mathcal{L}_{\mathrm{in}}(w, \lambda), \quad \mathcal{L}_{\mathrm{in}}(w, \lambda):=\frac{1}{\left|\mathcal{D}_{\mathrm{in}}\right|} \underset{(x_i, y_i) \in \mathcal{D}_{\mathrm{in}}} \sum \mathcal{L}(w f(x_i;\lambda), y_i)+\frac{\beta}{2}\|w\|^{2}
\end{align*}
where 
$f(x_i;\lambda) \in \mathbb{R}^{p}$ corresponds to features extracted from data point $x_i$, $\gL(\cdot,\cdot)$ is the cross-entropy loss function, $c=10$ is the number of categories, and $\mathcal{D}_{\mathrm{in}}$ and $\mathcal{D}_{\mathrm{out}}$ are data used to compute respectively inner and outer loss functions. Since the sizes of $\mathcal{D}_{\mathrm{out}}$ and $\mathcal{D}_{\mathrm{in}}$ are large in the case of MNIST dataset, we apply the more efficient stochastic algorithm PZOBO-S in \Cref{alg:zojs} with a minibatch size $B=256$ to estimate the inner and outer losses $\mathcal{L}_{\mathrm{in}}$ and $\mathcal{L}_{\mathrm{out}}$.

We compare our method PZOBO-S to the following baseline stochastic bilevel algorithms.
\begin{list}{$\bullet$}{\topsep=0.1ex \leftmargin=0.15in \rightmargin=0.1in \itemsep =0.01in}
\item stocBiO: an approximate implicit differentiation method that uses Neumann Series to estimate the Hessian inverse. We use its implementation available at \url{https://github.com/JunjieYang97/StocBio}. 
\item AID-CG-S and AID-FP-S: stochastic versions of AID-CG and AID-FP, respectively. We use their implementations in the repository \url{https://github.com/prolearner/hypertorch}. 
\end{list}

\subsection{Specifications for Few-shot Meta-Learning in \Cref{sec:meta}}\label{app:metaexp} 
{\bf Problem formulation.} The problem can be expressed as  
\begin{align} \label{eq:meta} 
&\underset{\phi}\min \hspace{2pt}\mathcal{L}_{\mathrm{meta}}(\phi, \widetilde{w}^{*}) := \frac{1}{m} \sum_{i=1}^{m} \mathcal{L}_{\gD_i} \left(\phi, w_i^{*} \right)\nonumber
\\ 
&\text {s.t. } \widetilde{w}^{*}=\underset{\widetilde{w}}{\arg \min } \;\mathcal{L}_{\mathrm{adapt}}(\phi, \widetilde{w})  := \frac{1}{m} \sum_{i=1}^{m} \mathcal{L}_{\gS_i} \left(\phi, w_i \right), 
\end{align}
where we collect all task-specific parameters into $\widetilde{w} = (w_1, ..., w_m)$ and the corresponding minimizers into $\widetilde{w}^* = (w^*_1, ..., w^*_m)$. The functions $\mathcal{L}_{\gS_i}\left(\phi, w_i \right) = \frac{1}{\left|\mathcal{S}_{i}\right|} \sum_{\zeta \in \mathcal{S}_{i}}(\mathcal{L}(\phi, w_i; \zeta)+\mathcal{R}(w_i))$ and $\mathcal{L}_{\gD_i}\left(\phi, w_i^{*} \right) = \frac{1}{\left|\mathcal{D}_{i}\right|} \sum_{\xi \in \mathcal{D}_{i}} \mathcal{L}\left(\phi, w_i^{*}; \xi\right)$ correspond respectively to the training and test loss functions for task $\gT_i$, 
with $\gR$ a strongly-convex regularizer and $\gL$ a classification loss function. In our setting, since the task-specific parameters correspond to the weights of the last linear layer, the inner-level objective $\mathcal{L}_{\mathrm{adapt}}(\phi, \widetilde{w})$ is strongly convex with respect to $\widetilde{w} = (w_1, ..., w_m)$. We note that the problem studied in \Cref{sec:hrm} can be seen as single-task instances of the more general multi-task learning problem in \cref{eq:meta}. However, in contrast to the problem in \Cref{sec:hrm}, the datasets $\gD_i$ and $\gS_i$ are usually small in few-shot learning and full GD can be applied here. Hence, we use ESJ (\Cref{alg:zojd}) here. Also since the number $m$ of tasks in few-shot classification datasets is often very large, it is preferable to sample a minibatch of i.i.d.\ tasks by $\gP_\gT$ at each meta (i.e., outer) iteration and update the meta parameters based on these tasks.

{\bf Experimental setup.} The miniImageNet dataset \citep{vinyals2016matching} is a large-scale benchmark for few-shot learning generated from ImageNet \citep{russakovsky2015imagenet} Russakovsky. The dataset consists of 100 classes with each class containing 600 images of size 84 × 84. Following \citep{learn2learn2019}, we split the classes into 64 classes for meta-training, 16 classes for meta-validation, and 20 classes for meta-testing. More specfically, we use 20000 tasks for meta-training, 600 tasks for meta-validation, and 600 tasks for meta-testing. We normalize all image pixels by their means and standard deviations over RGB channels and do not perform any additional data augmentation. At each meta-iteration, we sample a batch of 16 training tasks and update the parameters based on these tasks. We set the smoothness parameter to be $\mu = 0.1$ and use $N = 30$ inner steps. We use SGD with a learning rate of $\alpha = 0.01$ as inner optimizer and Adam with a learning rate of $\beta = 0.01$ as outer (meta) optimizer. 


\section{Experiments on Hyperparameter Optimization} \label{app:hpo}

Hyperparameter optimization (HO) is the problem of finding the set of the best hyperparamters (either representational or regularization parameters) that yield the optimal value of some criterion of model quality (usually a validation loss on unseen data). HO can be posed as a bilevel optimization problem in which the inner problem corresponds to finding the model parameters by minimizing a training loss (usually regularized) for the given hyperparameters and then the outer problem minimizes over the hyperparameters. Hence, HO can be mathematically expressed as follows

\begin{equation}
\begin{array}{l}
\underset{\lambda}\min \hspace{2pt}\mathcal{L}_{\mathrm{val}}(\lambda) := \frac{1}{\left|\mathcal{D}_{\mathrm{val}}\right|} \sum_{\xi \in \mathcal{D}_{\mathrm{val}}} \mathcal{L}\left(w^{*}(\lambda) ; \xi\right) \\
\text { s.t. } \quad w^{*}(\lambda)=\underset{w}{\arg \min } \hspace{2pt}\mathcal{L}_{\mathrm{tr}}(w, \lambda):=\frac{1}{\left|\mathcal{D}_{\mathrm{tr}}\right|} \sum_{\zeta \in \mathcal{D}_{\mathrm{tr}}}(\mathcal{L}(w, \lambda ; \zeta)+\mathcal{R}(w, \lambda)),
\end{array}
\end{equation}

\noindent where $\gL$ is a loss function (e.g., logistic loss), $\mathcal{R}(w, \lambda)$ is a regularizer, and $\gD_{tr}$ and $\gD_{val}$ are respectively training and validation data. Note that the loss function used to identify hyperparameters must be different from the one used to find model parameters; otherwise models with higher complexities would be always favored. This is usually achieved in HO by using different data splits (here $\gD_{val}$ and $\gD_{tr}$) to compute validation and training losses, and by adding a regularizer term on the training loss. 

\begin{figure}[t]
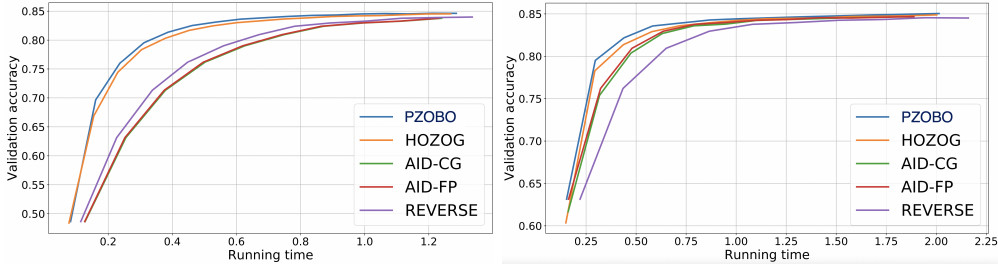
 
    \centering
        \includegraphics[height=3.5cm]{images/news20_ONE_T5.png}
        \includegraphics[ height=3.5cm]{images/news20_ONE_T10xxx.png}
    \caption{Classification results on 20 Newsgroup dataset. \textbf{Left:} number of inner GD step $N=5$. \textbf{Right:} number of inner GD steps $N=10$.}
    \label{fig:logresa}
\end{figure}

Following \citep{franceschi2017forward, grazzi2020bo}, we perform classification on the 20 Newsgroup dataset, where the classifier is modeled by an affine transformation, the cost function $\gL$ is the cross-entrpy loss, and $\mathcal{R}(w, \lambda)$ is a strongly-convex regularizer. We set one $l_2$-regularization hyperparameter for each weight in $w$, so that $\lambda$ and $w$ have the same size. 

For PZOBO and HOZOG, we use GD with a learning rate of $100$ and a momentum of $0.9$ to perform the inner updates. The outer learning rate is set to be $0.02$. We set the smoothing parameter ($\mu$ in \Cref{alg:zojd}) to be $0.01$. For AID-FP, AID-CG, and REVERSE we use the suggested hyperparameters in their implementations accompanying the paper \cite{grazzi2020bo}. 

It can be seen from \Cref{fig:logresa}, our method PZOBO slightly outperforms HOZOG and converges faster than the other AID and ITD based approaches. We note that the similar performance for PZOBO and HOZOG can be explained by the fact that in HO, the hypergradient expression in \cref{eq:hgest} contains only the second term (the first term is zero), which is very close to the approximation in HOZOG method. However, as we have seen in the other experiments (not for HO), PZOBO is a more robust and stable bilevel optimizer than HOZOG, and it achieves good performance across many bilevel problems.

\section{Supporting Technical Lemmas} \label{glp} 

In this section, we provide auxiliary lemmas that are used for proving the convergence results for the algorithms PZOBO and PZOBO-S. 

In the following proofs, we let $L = \max \{L_g, L_f\}$ and $D$ be such that $\|y^*(x)\| \leq D$. 

First we recall that for any two matrices $A \in \mathbb{R}^{m \times r}$ and $B \in \mathbb{R}^{r \times n}$, we have the following upper-bound on the Frobenius norm of their product,  
\begin{align} 
\big\|AB\big\|_F \leq \big\|A\big\|\big\|B\big\|_F.  \label{eq:froprod}
\end{align}

The following lemma follows directly from the Lipschitz properties in Assumptions \ref{assum:inner} and \ref{assum:outer}. 
\begin{lemma}\label{lemma:bdv}
Suppose that Assumptions \ref{assum:inner} and \ref{assum:outer} hold. Then, the stochastic derivatives $\nabla F(x,y;\xi)$, $\nabla_x \nabla_y G(x,y;\xi)$, and $\nabla_y^2 G(x,y;\xi)$ have bounded variances, i.e., for any $(x, y)$ and $\xi$ we have: 
\begin{list}{$\bullet$}{\topsep=0.1ex \leftmargin=0.3in \rightmargin=0.in \itemsep =-1mm}
\item $\mathbb{E}_\xi \big\|\nabla F(x,y;\xi) - \nabla f(x,y)\big\|^2 \leq M^2$; 
\item $\mathbb{E}_\xi \big\|\nabla_x \nabla_y G(x,y;\xi) - \nabla_x \nabla_y g(x,y)\big\|_F^2 \leq L^2$; 
\item $\mathbb{E}_\xi \big\|\nabla_y^2 G(x,y;\xi) - \nabla_y^2 g(x,y)\big\|_F^2 \leq L^2$. 
\end{list}
\end{lemma}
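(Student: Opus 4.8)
The plan is to prove all three bounds with a single two-ingredient argument: each stochastic derivative is an \emph{unbiased} estimate of its deterministic counterpart, so the left-hand sides are genuine variances; and each stochastic derivative is \emph{uniformly bounded} in the relevant norm, which caps that variance. The elementary identity I would invoke is that, for any random quantity $Z$ with $\mathbb{E}_\xi Z = \bar Z$,
\begin{align*}
\mathbb{E}_\xi \big\| Z - \bar Z \big\|^2 = \mathbb{E}_\xi \big\| Z \big\|^2 - \big\| \bar Z \big\|^2 \le \mathbb{E}_\xi \big\| Z \big\|^2 \le \sup_\xi \big\| Z \big\|^2 ,
\end{align*}
which reduces each claim to exhibiting a uniform bound on $\|Z\|$. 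Unbiasedness is immediate from the finite-sum structure in \cref{eq:prob_stoc}: sampling $\xi$ (resp.\ $\zeta$) uniformly gives $\mathbb{E}_\xi \nabla F(x,y;\xi) = \nabla f(x,y)$ and $\mathbb{E}_\zeta \nabla_x\nabla_y G(x,y;\zeta) = \nabla_x\nabla_y g(x,y)$, and likewise for $\nabla_y^2$, since expectation commutes with differentiation over a finite sum.

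So the real work is the uniform boundedness step, which I would extract directly from the Lipschitz hypotheses. For the first bullet, Assumption \ref{assum:outer} states that $F(\cdot;\xi)$ is $M$-Lipschitz; for a differentiable function this is equivalent to $\|\nabla F(x,y;\xi)\| \le M$ at every point, exactly the bound needed. For the second and third bullets, Assumption \ref{assum:inner} states that $\nabla G(\cdot;\zeta)$ is $L_g$-Lipschitz, which means the full Hessian $\nabla^2 G(\cdot;\zeta)$ is bounded in operator norm by $L_g$; since $\nabla_x\nabla_y G$ and $\nabla_y^2 G$ are blocks of this Hessian, their operator norms are also at most $L_g \le L$. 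Substituting each uniform bound into the displayed variance inequality yields the three claimed estimates.

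The one point requiring care is the norm used in the last two bullets: the statement measures the variance of $\nabla_x\nabla_y G$ and $\nabla_y^2 G$ in the Frobenius norm, whereas $L_g$-Lipschitzness of $\nabla G$ controls only the operator norm. I would resolve this by noting that the blocks have fixed dimensions, so their Frobenius and operator norms differ by at most a dimensional factor absorbed into the constant $L$; equivalently, one reads Assumption \ref{assum:inner} as bounding $\|\nabla^2 G\|_F \le L_g$, consistent with the Frobenius-norm convention already adopted there for the higher-order Lipschitz constants $\rho$ and $\tau$. No other step is delicate---unbiasedness and the variance identity are routine---so once the boundedness is pinned down in the correct norm, the lemma follows.
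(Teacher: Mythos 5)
Your proof is correct and follows essentially the same route the paper intends: the paper offers no written proof for this lemma beyond the remark that it ``follows directly from the Lipschitz properties in Assumptions \ref{assum:inner} and \ref{assum:outer},'' and your argument (unbiasedness from the finite-sum structure, the identity $\mathbb{E}_\xi\|Z-\bar Z\|^2 \le \mathbb{E}_\xi\|Z\|^2$, and uniform boundedness of each stochastic derivative from the Lipschitz hypotheses) is exactly the intended filling-in. Your explicit handling of the Frobenius-versus-operator-norm point is the one place where care is needed, and the resolution you adopt---reading the boundedness of $\nabla_x\nabla_y G$ and $\nabla_y^2 G$ by $L$ in the Frobenius norm---is consistent with how the paper itself uses this bound elsewhere (e.g., in the proofs of \Cref{lemma:jstarbound} and \Cref{lemma:jnbound}), so no gap remains.
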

Using Lemma 2.2 in \cite{ghadimi2018approximation}, the following lemma characterizes the Lipschitz property of the gradient of the total objective $\Phi(x) = f(x, y^*(x))$. 
\begin{lemma} \label{lemma:philip}
Suppose that Assumptions \ref{assum:scvx}, \ref{assum:inner}, and \ref{assum:outer} hold. Then, we have:  
\begin{align*}
  \big\|\nabla \Phi(x_2) - \nabla \Phi(x_1)\big\| \leq L_\Phi \big\|x_2 - x_1\big\| \qquad \forall x_1 \in \mathbb{R}^p, x_2 \in \mathbb{R}^p,
\end{align*}
where the constant $L_\Phi = L + \frac{2L^2+\tau M^2}{\mu_g} + \frac{\rho L M+L^3+\tau M L}{\mu_g^2} + \frac{\rho L^2 M}{\mu_g^3}$. 
\end{lemma}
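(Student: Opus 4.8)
The plan is to derive the bound directly from the closed form of the hypergradient, following the route of Lemma 2.2 in \cite{ghadimi2018approximation}. First I would recall that, by the first-order optimality condition $\nabla_y g(x, y^*(x)) = 0$ and the implicit function theorem,
\[
\nabla\Phi(x) = \nabla_x f(x, y^*(x)) - \big[\nabla_x\nabla_y g(x, y^*(x))\big]^\top \big[\nabla_y^2 g(x, y^*(x))\big]^{-1}\nabla_y f(x, y^*(x)),
\]
where I used the symmetry of $\nabla_y^2 g$. Writing $w_i = (x_i, y^*(x_i))$, the first preliminary step is to show $y^*$ is Lipschitz: using $\mu_g$-strong convexity of $g$ together with the $L_g$-Lipschitz gradient yields $\|y^*(x_1) - y^*(x_2)\| \le (L/\mu_g)\|x_1 - x_2\|$, hence $\|w_1 - w_2\| \le (1 + L/\mu_g)\|x_1 - x_2\|$.

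Next I would collect the uniform bounds on the three factors of the indirect term: $\|\nabla_x\nabla_y g\| \le L$ and $\|[\nabla_y^2 g]^{-1}\| \le 1/\mu_g$ (from the $L_g$-Lipschitz gradient and $\mu_g$-strong convexity of $g$), and $\|\nabla_y f\| \le M$ (from the $M$-Lipschitz continuity of $f$ in Assumption \ref{assum:outer}). Then the trajectory-Lipschitz estimates: $\nabla_x f$ and $\nabla_y f$ are $L_f$-Lipschitz, $\nabla_x\nabla_y g$ is $\tau$-Lipschitz, $\nabla_y^2 g$ is $\rho$-Lipschitz (Assumption \ref{assum:inner}), and for the inverse-Hessian factor I would invoke the resolvent identity $\|X^{-1} - Y^{-1}\| \le \|X^{-1}\|\,\|Y^{-1}\|\,\|X - Y\|$, so that $[\nabla_y^2 g]^{-1}$ is $(\rho/\mu_g^2)$-Lipschitz along the solution trajectory. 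Each of these is then composed with the Lipschitz constant $1 + L/\mu_g$ of $w(\cdot) = (\,\cdot\,, y^*(\cdot))$.

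With these in hand, the core estimate is a telescoping bound for the product $ABC$, where $A = [\nabla_x\nabla_y g]^\top$, $B = [\nabla_y^2 g]^{-1}$, $C = \nabla_y f$. I would write
\[
A_1 B_1 C_1 - A_2 B_2 C_2 = (A_1 - A_2)B_1 C_1 + A_2(B_1 - B_2)C_1 + A_2 B_2(C_1 - C_2),
\]
bound each summand by the product of the matching operator-norm and Lipschitz estimates (using $\|XY\|_F \le \|X\|\,\|Y\|_F$ from \cref{eq:froprod} where needed), add the direct-term difference $\|\nabla_x f(w_1) - \nabla_x f(w_2)\| \le L_f\|w_1 - w_2\|$, and sum to obtain $L_\Phi$.

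The main obstacle I anticipate is bookkeeping rather than conceptual: the inverse-Hessian term forces the resolvent bound and a careful chaining of $\rho$, $\mu_g$, and the Lipschitz constant of $y^*$, and the three telescoping summands must be assembled so that the powers of $1/\mu_g$ and the mixture of $\tau$, $\rho$, $L$, and $M$ reproduce the claimed $L_\Phi = L + \frac{2L^2+\tau M^2}{\mu_g} + \frac{\rho L M + L^3 + \tau M L}{\mu_g^2} + \frac{\rho L^2 M}{\mu_g^3}$. Since this is exactly the quantity controlled by Lemma 2.2 of \cite{ghadimi2018approximation}, the cleanest route is to verify that Assumptions \ref{assum:scvx}--\ref{assum:outer} supply precisely the hypotheses of that lemma and then invoke it, with the computation above serving to confirm the explicit form of the constant.
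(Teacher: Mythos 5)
Your proposal is correct and follows essentially the same route as the paper, which gives no standalone proof and simply invokes Lemma 2.2 of \cite{ghadimi2018approximation}; your telescoping decomposition of the product $[\nabla_x\nabla_y g]^\top[\nabla_y^2 g]^{-1}\nabla_y f$, combined with the $(1+L/\mu_g)$-Lipschitzness of $x\mapsto(x,y^*(x))$ and the resolvent bound for the inverse Hessian, is precisely the computation behind that cited lemma. The only caveat is cosmetic: a direct assembly of your three summands yields $\tau M/\mu_g$ where the quoted constant has $\tau M^2/\mu_g$ (an artifact of how Ghadimi--Wang group terms), but since you ultimately defer to the cited lemma for the exact form of $L_\Phi$, this does not affect correctness.
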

We next provide some essential properties of the zeroth-order gradient oracle in \cref{eq:nesgrad}, due to \cite{nesterov2017es}. 
\begin{lemma}\label{lemma:nesterov}
Let $h: \mathbb{R}^n \rightarrow \mathbb{R}$ be a differentiable function with $L$-Lipsctitz gradient. Define its Gaussian smooth approximation $h_\mu (x) = \mathbb{E}_u \left[h(x+\mu u)\right]$, where $\mu > 0$ and $u \in \mathbb{R}^n$ is a standard Gaussian random vector. Then, $h_\mu$ is differentiable and we have: 
\begin{list}{$\bullet$}{\topsep=0.1ex \leftmargin=0.1in \rightmargin=0.in \itemsep =-1mm}
\item The gradient of $h_{\mu}$ takes the form 
\begin{align*}
    \nabla h_{\mu}(x) = \mathbb{E}_{u} \frac{h(x + \mu u) - h(x)}{\mu} u.
\end{align*}
\item For any $x \in \mathbb{R}^n$, 
\begin{align*}
\big\| \nabla h_{\mu}(x) - \nabla h(x)\big\| \leq \frac{\mu}{2}L(n+3)^{3/2}.
\end{align*} 
\item For any $x \in \mathbb{R}^n$, 
\begin{align*}
\mathbb{E}_{u}\Big\|\frac{h(x + \mu u) - h(x)}{\mu} u\Big\|^2 \leq 4(n+4)\big\|\nabla h_{\mu}(x)\big\|^2 + \frac{3}{2}\mu^2L^2(n+5)^3.
\end{align*}
\end{list}
\end{lemma}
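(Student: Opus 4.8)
The plan is to establish the three properties through direct Gaussian-integral calculus, treating each as a consequence of moving the $x$-dependence in and out of the Gaussian density and of the quadratic bounds implied by the $L$-Lipschitz gradient of $h$.

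First I would prove the gradient formula. Writing $h_\mu(x) = (2\pi)^{-n/2}\int h(x+\mu u)\, e^{-\|u\|^2/2}\,du$ and substituting $v = x+\mu u$ moves all $x$-dependence into the density, giving $h_\mu(x) = (2\pi\mu^2)^{-n/2}\int h(v)\, e^{-\|v-x\|^2/(2\mu^2)}\,dv$. Differentiating under the integral sign and substituting back $u = (v-x)/\mu$ yields $\nabla h_\mu(x) = \mathbb{E}_u[\mu^{-1} h(x+\mu u)\, u]$. Since $\mathbb{E}_u[u] = 0$, I can subtract $\mu^{-1}h(x)\,\mathbb{E}_u[u]$ for free to reach the stated form $\nabla h_\mu(x) = \mathbb{E}_u[\mu^{-1}(h(x+\mu u)-h(x))\,u]$.

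Next, for the bias bound, the key observation is that $\mathbb{E}_u[u u^\top] = I$ gives $\mathbb{E}_u[\langle \nabla h(x), u\rangle\, u] = \nabla h(x)$, so subtracting this from the gradient formula writes the difference as $\nabla h_\mu(x) - \nabla h(x) = \mathbb{E}_u[\mu^{-1} r(u)\, u]$ with the first-order Taylor remainder $r(u) = h(x+\mu u) - h(x) - \mu\langle\nabla h(x),u\rangle$. The $L$-Lipschitz gradient assumption gives the quadratic bound $|r(u)| \le \tfrac{L}{2}\mu^2\|u\|^2$, so taking norms inside the expectation leaves $\|\nabla h_\mu(x) - \nabla h(x)\| \le \tfrac{L\mu}{2}\,\mathbb{E}_u\|u\|^3$, and the Gaussian moment bound $\mathbb{E}_u\|u\|^p \le (n+p)^{p/2}$ with $p=3$ finishes this part. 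Finally, for the second-moment bound I would split the oracle as $\mu^{-1}(h(x+\mu u)-h(x)) = \langle \nabla h(x), u\rangle + \mu^{-1}r(u)$ and apply $(a+b)^2 \le 2a^2 + 2b^2$. The linear part contributes $2\,\mathbb{E}_u[\langle \nabla h(x), u\rangle^2\|u\|^2]$, which a direct fourth-moment computation (reducing to coordinates by rotational invariance) evaluates as $2(n+2)\|\nabla h(x)\|^2 \le 2(n+4)\|\nabla h(x)\|^2$; the remainder part is bounded by $\tfrac{L^2\mu^2}{2}\mathbb{E}_u\|u\|^6 \le \tfrac{L^2\mu^2}{2}(n+6)^3$. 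To reach the stated right-hand side I then replace $\|\nabla h(x)\|^2$ by $\|\nabla h_\mu(x)\|^2$ using Part~2 together with $\|\nabla h(x)\|^2 \le 2\|\nabla h_\mu(x)\|^2 + 2\|\nabla h(x)-\nabla h_\mu(x)\|^2$, producing the $4(n+4)\|\nabla h_\mu(x)\|^2$ term and an extra $\mu^2 L^2$ contribution absorbed, after crude polynomial estimates, into $\tfrac{3}{2}\mu^2 L^2(n+5)^3$.

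The main obstacle will be the bookkeeping of the Gaussian moment inequalities and the constant-chasing in the last step: getting exactly the exponents $(n+3)^{3/2}$ and $(n+5)^3$ requires carefully collecting the several $\mu^2 L^2$ contributions (from the remainder term and from converting $\nabla h$ to $\nabla h_\mu$) and bounding lower-degree polynomials such as $(n+4)(n+3)^3$ uniformly by a constant multiple of $(n+5)^3$. These are routine but error-prone estimates rather than conceptual difficulties, and all three parts ultimately reduce to the same pair of ideas---the change of variables for the density and the quadratic Taylor bound from the Lipschitz gradient.
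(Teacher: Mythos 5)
Your Parts 1 and 2 are correct, and your overall architecture for Part 3 (split the oracle into the linear term $\langle\nabla h(x),u\rangle$ plus a Taylor remainder, bound each, then convert $\|\nabla h(x)\|^2$ into $\|\nabla h_\mu(x)\|^2$) is the standard one; note the paper itself offers no proof of this lemma, it simply cites Nesterov and Spokoiny, so soundness of your argument is the only issue. The genuine gap is the very last absorption step of Part 3. You correctly reach
\begin{align*}
\mathbb{E}_u\Big\|\frac{h(x+\mu u)-h(x)}{\mu}u\Big\|^2 \le 2(n+4)\|\nabla h(x)\|^2 + \frac{\mu^2L^2}{2}(n+6)^3,
\end{align*}
but you then perform the conversion using the Part-2 bias bound $\|\nabla h_\mu(x)-\nabla h(x)\|\le \frac{\mu L}{2}(n+3)^{3/2}$. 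That injects the additive term $2(n+4)\cdot 2\cdot\frac{\mu^2L^2}{4}(n+3)^3=(n+4)\mu^2L^2(n+3)^3$, which is \emph{quartic} in $n$. Your claim that $(n+4)(n+3)^3$ is a ``lower-degree polynomial'' boundable by a constant multiple of $(n+5)^3$ is false: $(n+4)(n+3)^3/(n+5)^3\to\infty$ as $n\to\infty$. So the argument as written proves only an $O\big(\mu^2L^2n^4\big)$ error term, not the stated $\frac{3}{2}\mu^2L^2(n+5)^3$.

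The fix is to use a sharper bias estimate in the conversion step only. Since $h$ is differentiable with $L$-Lipschitz gradient, differentiating under the integral sign in the \emph{unshifted} representation $h_\mu(x)=\mathbb{E}_u[h(x+\mu u)]$ gives $\nabla h_\mu(x)=\mathbb{E}_u[\nabla h(x+\mu u)]$, hence
\begin{align*}
\|\nabla h_\mu(x)-\nabla h(x)\| \le \mathbb{E}_u\|\nabla h(x+\mu u)-\nabla h(x)\| \le \mu L\,\mathbb{E}_u\|u\| \le \mu L\sqrt{n},
\end{align*}
which is much tighter than the $(n+3)^{3/2}$ bound (the latter comes from bounding through the function-difference oracle rather than through $\nabla h$ directly). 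Substituting this into $\|\nabla h(x)\|^2\le 2\|\nabla h_\mu(x)\|^2+2\|\nabla h(x)-\nabla h_\mu(x)\|^2$ yields
\begin{align*}
\mathbb{E}_u\Big\|\frac{h(x+\mu u)-h(x)}{\mu}u\Big\|^2 \le 4(n+4)\|\nabla h_\mu(x)\|^2 + \mu^2L^2\Big(4n(n+4)+\tfrac{1}{2}(n+6)^3\Big),
\end{align*}
and the elementary inequality $4n(n+4)+\frac{1}{2}(n+6)^3\le\frac{3}{2}(n+5)^3$ (the difference equals $n^3+\frac{19}{2}n^2+\frac{85}{2}n+\frac{159}{2}>0$) gives exactly the stated bound. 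With this single replacement your proof is complete; the change-of-variables derivation of the gradient formula, the quadratic Taylor bound, the exact identity $\mathbb{E}_u[\langle g,u\rangle^2\|u\|^2]=(n+2)\|g\|^2$, and the moment bounds $\mathbb{E}_u\|u\|^p\le(n+p)^{p/2}$ are all used correctly.
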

Note the first item in \Cref{lemma:nesterov} implies that the oracle in \cref{eq:nesgrad} is in indeed an unbiased estimator of the gradient of the smoothed function $h_\mu$. 


\begin{lemma}\label{lemma:jstarbound}
Suppose that Assumptions \ref{assum:scvx} and \ref{assum:inner} hold. The Jacobian $\mathcal{J}_*= \frac{\partial y^*(x)}{\partial x}$ has bounded norm: 
\begin{align}
\big\|\mathcal{J}_*\big\|_F \leq \frac{L}{\mu_g}. 
\end{align}
\end{lemma}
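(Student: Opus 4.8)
The plan is to begin from the closed-form expression for $\mathcal{J}_*$ supplied by the Implicit Function Theorem. Because $y^*(x)$ satisfies the inner first-order optimality condition $\nabla_y g(x, y^*(x)) = 0$, and because Assumption~\ref{assum:scvx} makes $\nabla_y^2 g$ invertible, differentiating this identity in $x$ gives
\begin{align*}
\mathcal{J}_* = -\left[\nabla_y^2 g(x, y^*(x))\right]^{-1} \nabla_x \nabla_y g(x, y^*(x)).
\end{align*}
This turns the task into separately controlling the inverse Hessian and the mixed second-order derivative.

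Next I would apply the Frobenius-norm product inequality \cref{eq:froprod} with $A = [\nabla_y^2 g]^{-1}$ and $B = \nabla_x \nabla_y g$, yielding
\begin{align*}
\big\|\mathcal{J}_*\big\|_F \leq \big\|[\nabla_y^2 g]^{-1}\big\| \, \big\|\nabla_x \nabla_y g\big\|_F.
\end{align*}
For the first factor, the $\mu_g$-strong convexity of $g$ with respect to $y$ (Assumption~\ref{assum:scvx}) forces every eigenvalue of $\nabla_y^2 g$ to be at least $\mu_g$, so the spectral norm of the inverse is at most $1/\mu_g$. For the second factor, the $L_g$-Lipschitz continuity of $\nabla g$ from Assumption~\ref{assum:inner} bounds the mixed block $\nabla_x \nabla_y g$ by $L_g \leq L$. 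Multiplying the two estimates produces exactly $\|\mathcal{J}_*\|_F \leq L/\mu_g$.

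The one place that needs care is the norm bookkeeping in the second factor: the Lipschitz hypothesis on $\nabla g$ most naturally controls a spectral norm of the Hessian and its sub-blocks, whereas \cref{eq:froprod} calls for the Frobenius norm of $\nabla_x \nabla_y g$. I would reconcile this by bounding $\nabla_x \nabla_y g$ in the same norm convention adopted elsewhere in the paper (the mixed-derivative statements in Lemma~\ref{lemma:bdv} and the Lipschitz conditions of Assumption~\ref{assum:inner} are already phrased with $\|\cdot\|_F$ and constant $L$), so that the resulting constant is $L/\mu_g$. No recursion or trajectory analysis is involved here: unlike the estimator propositions, this is a purely static bound evaluated at the exact inner optimum $y^*(x)$, so the argument is short once the Implicit Function Theorem formula is in hand.
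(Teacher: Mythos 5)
Your proposal matches the paper's proof essentially step for step: both invoke the first-order optimality condition and the Implicit Function Theorem to get $\mathcal{J}_* = -[\nabla_y^2 g]^{-1}\nabla_x\nabla_y g$, then apply \cref{eq:froprod} with the spectral norm of the inverse Hessian bounded by $1/\mu_g$ (strong convexity) and the mixed derivative bounded by $L$ (Assumption~\ref{assum:inner}). Your remark about the Frobenius-versus-spectral norm bookkeeping for $\nabla_x\nabla_y g$ is the same convention the paper adopts implicitly, so the argument is correct and identical in substance.
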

\begin{proof}[Proof of \Cref{lemma:jstarbound}]
From the first order optimality condition of $y^*(x)$, we have $\nabla_{y} g\left(x, y^{*}(x)\right)=0$. Hence, the Implicit Function Theorem implies: 
\begin{align}
\mathcal{J}_* = -\left[\nabla_{y}^{2} g\left(x, y^{*}(x)\right)\right]^{-1} \nabla_{x} \nabla_{y} g\left(x, y^{*}(x)\right) .
\end{align}
Taking norms and applying \cref{eq:froprod} together with Assumptions \ref{assum:scvx} and \ref{assum:inner} yield the desired result 
\begin{align}
\big\|\mathcal{J}_* \big\|_F \leq \big\|\nabla_{x} \nabla_{y} g\left(x, y^{*}(x)\right)\big\|_F\big\|\left[\nabla_{y}^{2} g\left(x, y^{*}(x)\right)\right]^{-1}\big\| \leq \frac{L}{\mu_g}. 
\end{align}
\end{proof}

\begin{lemma}\label{lemma:jnbound}
Suppose that Assumptions \ref{assum:scvx} and \ref{assum:inner} hold. The Jacobian $\mathcal{J}_N = \frac{\partial y^N_k}{\partial x_k}$ has bounded norm: 
\begin{align}
\big\|\mathcal{J}_N\big\|_F \leq \frac{L}{\mu_g}. 
\end{align}
\end{lemma}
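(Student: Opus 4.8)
The plan is to set up a linear matrix recursion for the running Jacobian $\mathcal{J}_t = \frac{\partial y^t_k}{\partial x_k}$ along the inner gradient-descent trajectory and then show, by induction on $t$, that its Frobenius norm never exceeds $\frac{L}{\mu_g}$, matching the bound for $\mathcal{J}_*$ in \Cref{lemma:jstarbound}.

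First I would differentiate the inner update $y^t_k = y^{t-1}_k - \alpha \nabla_y g(x_k, y^{t-1}_k)$ with respect to $x_k$. Since $\nabla_y g(x_k, y^{t-1}_k)$ depends on $x_k$ both explicitly and through $y^{t-1}_k$, the chain rule gives
\begin{align*}
\mathcal{J}_t = \big(I - \alpha \nabla_y^2 g(x_k, y^{t-1}_k)\big)\mathcal{J}_{t-1} - \alpha \nabla_x\nabla_y g(x_k, y^{t-1}_k),
\end{align*}
with initial condition $\mathcal{J}_0 = 0$, because the initialization $y^0_k = y_0$ does not depend on $x_k$. Next I would record two elementary norm bounds. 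Strong convexity (Assumption \ref{assum:scvx}) together with the $L_g$-Lipschitz gradient (Assumption \ref{assum:inner}) place the spectrum of $\nabla_y^2 g$ in $[\mu_g, L]$, so for $\alpha \leq \frac{1}{L}$ the contraction factor obeys $\|I - \alpha\nabla_y^2 g\| \leq 1 - \alpha\mu_g$; and, exactly as in the proof of \Cref{lemma:jstarbound}, the mixed derivative satisfies $\|\nabla_x\nabla_y g\|_F \leq L$. Combining these with the submultiplicative inequality \cref{eq:froprod} collapses the recursion into the scalar inequality
\begin{align*}
\big\|\mathcal{J}_t\big\|_F \leq (1-\alpha\mu_g)\big\|\mathcal{J}_{t-1}\big\|_F + \alpha L.
\end{align*}

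Finally I would close the argument by induction, proving the slightly stronger claim $\|\mathcal{J}_t\|_F \leq \frac{L}{\mu_g}\big(1 - (1-\alpha\mu_g)^t\big)$: the base case is $\mathcal{J}_0 = 0$, and the inductive step is a one-line computation using the displayed inequality, after which the bound is at most $\frac{L}{\mu_g}$ uniformly in $t$, and in particular at $t = N$. Equivalently, one could unroll the recursion into $\mathcal{J}_N = -\alpha\sum_{t=1}^N \big(\prod_{s=t+1}^N (I-\alpha\nabla_y^2 g)\big)\nabla_x\nabla_y g$ and sum the geometric series $\alpha L \sum_{t=1}^N (1-\alpha\mu_g)^{N-t} \leq \frac{L}{\mu_g}$.

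The main obstacle is bookkeeping rather than depth: correctly differentiating the implicit dependence of the whole trajectory through the chain rule to obtain the right recursion (in particular the $I - \alpha\nabla_y^2 g$ factor with the correct sign), and justifying that the operator-norm contraction can be pulled out of the Frobenius norm via \cref{eq:froprod}. Once the recursion and the contraction estimate $1-\alpha\mu_g$ are in hand, the conclusion coincides with the bound $\frac{L}{\mu_g}$ on $\mathcal{J}_*$, which is a reassuring consistency check since $\mathcal{J}_N \to \mathcal{J}_*$ as $N \to \infty$.
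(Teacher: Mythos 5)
Your proposal is correct and follows essentially the same route as the paper: differentiate the inner GD update to get the linear recursion for $\mathcal{J}_t$, use strong convexity and the Lipschitz-gradient assumption to bound the contraction factor by $1-\alpha\mu_g$ and the mixed derivative by $L$, and sum the resulting geometric series (your induction on the scalar recursion is just the telescoped form the paper writes out explicitly, starting from $\mathcal{J}_0=0$). The only cosmetic difference is that you place $I-\alpha\nabla_y^2 g$ on the left of $\mathcal{J}_{t-1}$, which is in fact the dimensionally consistent ordering, and you make the step-size condition $\alpha\le 1/L$ explicit where the paper leaves it implicit.
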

\begin{proof}[Proof of \Cref{lemma:jnbound}]
The inner loop gradient descent updates writes
\begin{align*}
y_{k}^{t}=y_{k}^{t-1}-\alpha \nabla_{y} g\left(x_{k}, y_{k}^{t-1}\right), \hspace{10pt} t=1, \ldots, N.
\end{align*}
Taking derivatives w.r.t. $x_k$ yields
\begin{align*}
\mathcal{J}_t &=\mathcal{J}_{t-1}-\alpha \nabla_{x} \nabla_{y} g\left(x_{k}, y_{k}^{t-1}\right)-\alpha \mathcal{J}_{t-1} \nabla_{y}^{2} g\left(x_{k}, y_{k}^{t-1}\right) \nonumber \\
&= \mathcal{J}_{t-1}\left(I-\alpha \nabla_{y}^{2} g\left(x_{k}, y_{k}^{t-1}\right)\right) - \alpha \nabla_{x} \nabla_{y} g\left(x_{k}, y_{k}^{t-1}\right).
\end{align*}
Telescoping over $t$ from 1 to $N$ yields 
\begin{align}
\mathcal{J}_N &=
\mathcal{J}_0 \prod_{t=0}^{N-1}\left(I-\alpha \nabla_{y}^{2} g\left(x_{k}, y_{k}^{t}\right)\right)-\alpha \sum_{t=0}^{N-1} \nabla_{x} \nabla_{y} g\left(x_{k}, y_{k}^{t}\right) \prod_{m=t+1}^{N-1}\left(I-\alpha \nabla_{y}^{2} g\left(x_{k}, y_{k}^{m}\right)\right) \nonumber \\
&= -\alpha \sum_{t=0}^{N-1} \nabla_{x} \nabla_{y} g\left(x_{k}, y_{k}^{t}\right) \prod_{m=t+1}^{N-1}\left(I-\alpha \nabla_{y}^{2} g\left(x_{k}, y_{k}^{m}\right)\right). \label{eq:jn}
\end{align}
Hence, we have
\begin{align*}
\big\|\mathcal{J}_N\big\|_F &\leq \alpha \sum_{t=0}^{N-1} \big\|\nabla_{x} \nabla_{y} g\left(x_{k}, y_{k}^{t}\right)\big\|_F \big\|\prod_{m=t+1}^{N-1}\left(I-\alpha \nabla_{y}^{2} g\left(x_{k}, y_{k}^{m}\right)\right)\big\| \nonumber \\
& \overset{(i)}\leq \alpha \sum_{t=0}^{N-1} L \prod_{m=t+1}^{N-1}\big\|I-\alpha \nabla_{y}^{2} g\left(x_{k}, y_{k}^{m}\right)\big\| \nonumber \\
& \overset{(ii)}\leq \alpha L \sum_{t=0}^{N-1} (1-\alpha\mu_g)^{N-1-t} \nonumber \\
& = \alpha L \sum_{t=0}^{N-1} (1-\alpha\mu_g)^{t} \leq \frac{L}{\mu_g}
\end{align*}
where $(i)$ follows from Assumption \ref{assum:inner} and $(ii)$ applies the strong-convexity of function $g(x,\cdot)$. This completes the proof. 
\end{proof}

\begin{lemma}\label{lemma:jnbounds}
Suppose that Assumptions \ref{assum:scvx} and \ref{assum:inner} hold. Then, the Jacobian $\mathcal{J}_N = \frac{\partial Y^N_k}{\partial x_k}$ in the stochastic algorithm PZOBO-S  has bounded norm, as shown below.  
\begin{align}
\big\|\mathcal{J}_N\big\|_F \leq \frac{L}{\mu_g}. 
\end{align}
\end{lemma}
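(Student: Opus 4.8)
The plan is to mirror the proof of \Cref{lemma:jnbound} for the deterministic case, replacing the full inner gradient $\nabla_y g$ by the minibatch stochastic gradient $\nabla_y G(\cdot;\gS_{t-1})$ used in \Cref{alg:zojs}. The whole argument will be pathwise, i.e.\ it will hold for every realization of the sampled batches $\{\gS_0,\ldots,\gS_{N-1}\}$, so no expectation is needed and the deterministic bound $L/\mu_g$ should carry over verbatim.

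First I would write the inner SGD recursion $Y_k^t = Y_k^{t-1} - \alpha \nabla_y G(x_k, Y_k^{t-1};\gS_{t-1})$ and differentiate both sides with respect to $x_k$. Since $Y_k^0 = y_0$ is independent of $x_k$, we have $\mathcal{J}_0 = 0$, and the chain rule gives
\begin{align*}
\mathcal{J}_t = \mathcal{J}_{t-1}\big(I - \alpha \nabla_y^2 G(x_k, Y_k^{t-1};\gS_{t-1})\big) - \alpha \nabla_x \nabla_y G(x_k, Y_k^{t-1};\gS_{t-1}).
\end{align*}
Telescoping from $t=1$ to $N$ (using $\mathcal{J}_0 = 0$) produces the exact stochastic analogue of \cref{eq:jn}, namely
\begin{align*}
\mathcal{J}_N = -\alpha \sum_{t=0}^{N-1} \nabla_x \nabla_y G(x_k, Y_k^{t};\gS_{t}) \prod_{m=t+1}^{N-1}\big(I - \alpha \nabla_y^2 G(x_k, Y_k^{m};\gS_{m})\big).
\end{align*}
Taking Frobenius norms and applying the submultiplicative inequality \cref{eq:froprod} then reduces the bound to two ingredients: $\|\nabla_x \nabla_y G(x_k, Y_k^t;\gS_t)\|_F \leq L$ and $\|I - \alpha \nabla_y^2 G(x_k, Y_k^m;\gS_m)\| \leq 1-\alpha\mu_g$. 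Summing the resulting geometric series $\alpha L \sum_{t=0}^{N-1}(1-\alpha\mu_g)^{N-1-t} = \alpha L \sum_{t=0}^{N-1}(1-\alpha\mu_g)^{t} \leq \frac{L}{\mu_g}$ finishes the proof.

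The calculation is routine once the recursion is set up; the only point needing care, and the main obstacle, is verifying that the minibatch derivatives inherit the bounds of the individual sample functions. For the mixed derivative, since \Cref{assum:inner} holds for each $G(\cdot;\zeta)$ and $\nabla_x \nabla_y G(\cdot;\gS_t)$ is the average over $\zeta \in \gS_t$, the triangle inequality preserves the bound $L$. For the Hessian factor, \Cref{assum:scvx} forces every $\nabla_y^2 G(\cdot;\zeta)$, and hence their average $\nabla_y^2 G(\cdot;\gS_m)$, to have spectrum in $[\mu_g, L]$; with the PZOBO-S stepsize $\alpha = \frac{2}{L+\mu_g}$ one checks that both endpoints $1-\alpha L$ and $1-\alpha\mu_g$ lie in $[-(1-\alpha\mu_g),\, 1-\alpha\mu_g]$, so the spectral-norm bound $1-\alpha\mu_g$ is valid for the stochastic iterates. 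Thus averaging over the minibatch neither breaks the contraction factor nor inflates the mixed-derivative bound, and the deterministic estimate $\|\mathcal{J}_N\|_F \leq L/\mu_g$ transfers unchanged.
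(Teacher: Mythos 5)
Your proposal is correct and takes essentially the same route as the paper: the paper's proof of \Cref{lemma:jnbounds} is literally ``follows similarly to \Cref{lemma:jnbound},'' and you have spelled out exactly that pathwise argument, including the two points that need checking (the minibatch averages inherit the bounds of the per-sample derivatives, and the contraction factor $1-\alpha\mu_g$ survives at the stepsize $\alpha = \frac{2}{L+\mu_g}$ since $|1-\alpha L| = 1-\alpha\mu_g$ there). No gaps.
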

\begin{proof}[Proof of \Cref{lemma:jnbounds}]
The proof follows similarly to \Cref{lemma:jnbound}. 
\end{proof}

\subsection{Proof of \Cref{prop:jnlip}} 
\begin{proposition}[Restatement of \Cref{prop:jnlip}]
Suppose that Assumptions \ref{assum:scvx} and \ref{assum:inner} hold. Define the constant
\begin{align}
L_{\mathcal{J}} = \left(1+\frac{L}{\mu_g}\right)\left(\frac{\tau}{\mu_g} + \frac{\rho L}{\mu_g^2}\right). \label{eq:r}
\end{align}
Then, the Jacobian $\mathcal{J}_N(x) = \frac{\partial y^N(x)}{\partial x}$ is $L_{\mathcal{J}}$-Lipschitz with respect to $x$ under the Frobenious norm: 
\begin{align}
\big\|\mathcal{J}_N(x_1) - \mathcal{J}_N(x_2)\big\|_F \leq L_{\mathcal{J}} \big\|x_1 - x_2\big\| \qquad \forall x_1 \in \mathbb{R}^p, x_2 \in \mathbb{R}^p.
\end{align}
\end{proposition}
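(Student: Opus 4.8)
The plan is to exploit the closed-form expression for $\mathcal{J}_N$ that was derived in the proof of \Cref{lemma:jnbound}, namely
\begin{align*}
\mathcal{J}_N(x) = -\alpha \sum_{t=0}^{N-1} \nabla_x \nabla_y g\big(x, y^t(x)\big) \prod_{m=t+1}^{N-1}\Big(I - \alpha \nabla_y^2 g\big(x, y^m(x)\big)\Big),
\end{align*}
where I write $y^t(x)$ for the inner iterate started from the same $y_0$ but driven by outer variable $x$. The key idea is to bound $\|\mathcal{J}_N(x_1) - \mathcal{J}_N(x_2)\|_F$ term by term. The difference between two products of the form $\prod(I - \alpha H_m^{(1)})$ and $\prod(I - \alpha H_m^{(2)})$, together with the difference between the prefactors $\nabla_x\nabla_y g$, will be controlled using (a) the $\rho$- and $\tau$-Lipschitz continuity of $\nabla_y^2 g$ and $\nabla_x\nabla_y g$ from \Cref{assum:inner}, (b) the operator-norm contraction $\|I - \alpha \nabla_y^2 g\| \leq 1-\alpha\mu_g$ from strong convexity, and (c) a Lipschitz-in-$x$ bound on the inner iterates themselves, i.e. an estimate of the form $\|y^t(x_1) - y^t(x_2)\| \leq C\|x_1 - x_2\|$ for a constant $C$ uniform in $t$.

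First I would establish the auxiliary iterate-stability bound. Since each $y^t$ is obtained by gradient descent on $g(x,\cdot)$, differentiating (or directly estimating) the recursion $y^t(x) = y^{t-1}(x) - \alpha\nabla_y g(x, y^{t-1}(x))$ shows that the map $x \mapsto y^t(x)$ is Lipschitz; in fact the quantity $\|\mathcal{J}_t\|_F \leq \frac{L}{\mu_g}$ already proved in \Cref{lemma:jnbound} is exactly the bound on $\|\partial y^t/\partial x\|$, so $\|y^t(x_1)-y^t(x_2)\| \leq \frac{L}{\mu_g}\|x_1-x_2\|$ for every $t$. Consequently the combined argument perturbation satisfies $\|(x_1,y^t(x_1)) - (x_2,y^t(x_2))\| \leq (1 + \frac{L}{\mu_g})\|x_1-x_2\|$, which explains the factor $(1+\frac{L}{\mu_g})$ appearing in $L_{\mathcal{J}}$.

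Next I would carry out the telescoping estimate. I would split $\mathcal{J}_N(x_1) - \mathcal{J}_N(x_2)$ into two groups of terms: one where the prefactor $\nabla_x\nabla_y g$ is perturbed (controlled by $\tau$ times the argument shift, giving a contribution proportional to $\frac{\tau}{\mu_g}$ after summing the geometric factors $(1-\alpha\mu_g)^{N-1-t}$ and using $\alpha\sum(1-\alpha\mu_g)^k \leq \frac{1}{\mu_g}$), and one where the product of Hessian factors is perturbed. For the latter I would use the standard telescoping identity for a difference of matrix products, bounding each factor difference $\|\alpha(\nabla_y^2 g(w_1^{(m)}) - \nabla_y^2 g(w_2^{(m)}))\|$ by $\alpha\rho$ times the argument shift while bounding the remaining factors by the contraction $1-\alpha\mu_g$ and the prefactor by $L$; summing yields a contribution proportional to $\frac{\rho L}{\mu_g^2}$. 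Adding the two pieces and factoring out $(1+\frac{L}{\mu_g})$ recovers exactly $L_{\mathcal{J}} = (1+\frac{L}{\mu_g})(\frac{\tau}{\mu_g} + \frac{\rho L}{\mu_g^2})$.

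The main obstacle I anticipate is the bookkeeping in the telescoping step for the Hessian-product difference: one must carefully account for the nested products over $m$ from $t+1$ to $N-1$ and a further sum over $t$, ensuring that after bounding the stray factor differences by $\alpha\rho$ and each surviving factor by $1-\alpha\mu_g$, the double sum $\alpha^2 \sum_t \sum_m (1-\alpha\mu_g)^{\#}$ collapses to the advertised $\frac{1}{\mu_g^2}$ scaling rather than accumulating spurious factors of $N$. Verifying that these geometric sums close cleanly — uniformly in $N$ — is the delicate point; everything else is a routine application of the Lipschitz assumptions and the Frobenius-norm product inequality \cref{eq:froprod}.
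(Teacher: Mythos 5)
Your proposal follows essentially the same route as the paper's proof: it starts from the closed-form expression \cref{eq:jn} for $\mathcal{J}_N$, splits the difference into a prefactor-perturbation term (controlled by $\tau$ and the geometric sum $\alpha\sum_t(1-\alpha\mu_g)^{N-t-1}\leq \frac{1}{\mu_g}$) and a Hessian-product-perturbation term (controlled by $\rho$ via the telescoping identity for matrix products), and uses the uniform bound $\|\partial y^t/\partial x\|_F\leq \frac{L}{\mu_g}$ to obtain the argument-shift factor $(1+\frac{L}{\mu_g})$. The delicate point you flag is exactly where the paper invokes $\sum_{m=0}^{N-1} m(1-\alpha\mu_g)^{m-1}\leq \frac{1}{\alpha^2\mu_g^2}$ to collapse the double sum uniformly in $N$, so your plan is correct and matches the paper's argument.
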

\begin{proof}[Proof of \Cref{prop:jnlip}]
Using \cref{eq:jn}, we have
\begin{align*}
\mathcal{J}_N(x) = -\alpha \sum_{t=0}^{N-1} \nabla_{x} \nabla_{y} g\left(x, y^{t}(x)\right) \prod_{m=t+1}^{N-1}\left(I-\alpha \nabla_{y}^{2} g\left(x, y^{m}(x)\right)\right), \hspace{10pt} x \in \mathbb{R}^p.
\end{align*}
Hence, for $x_1 \in \mathbb{R}^p$ and $x_2 \in \mathbb{R}^p$, we have 
\begin{align}
\big\|\mathcal{J}_N(x_1)& - \mathcal{J}_N(x_2)\big\|_F \nonumber
\\=& \alpha \Big\| \sum_{t=0}^{N-1} \nabla_{x} \nabla_{y} g\left(x_1, y^{t}(x_1)\right) \prod_{m=t+1}^{N-1}\left(I-\alpha \nabla_{y}^{2} g\left(x_1, y^{m}(x_1)\right)\right) \nonumber\\
&- \sum_{t=0}^{N-1} \nabla_{x} \nabla_{y} g\left(x_2, y^{t}(x_2)\right) \prod_{m=t+1}^{N-1}\left(I-\alpha \nabla_{y}^{2} g\left(x_2, y^{m}(x_2)\right)\right) \Big\|_F \nonumber\\
\leq & \alpha\sum_{t=0}^{N-1} \big\|\nabla_{x} \nabla_{y} g\left(x_1, y^{t}(x_1)\right) \prod_{m=t+1}^{N-1}\left(I-\alpha \nabla_{y}^{2} g\left(x_1, y^{m}(x_1)\right)\right) \nonumber\\
&- \nabla_{x} \nabla_{y} g\left(x_2, y^{t}(x_2)\right) \prod_{m=t+1}^{N-1}\left(I-\alpha \nabla_{y}^{2} g\left(x_2, y^{m}(x_2)\right)\right) \big\|_F \nonumber\\
\overset{(i)}\leq & \alpha\sum_{t=0}^{N-1} \big\|\nabla_{x} \nabla_{y} g\left(x_1, y^{t}(x_1)\right)\big\|_F \big\|A_t(x_1) - A_t(x_2)\big\| \nonumber\\ & + \alpha\sum_{t=0}^{N-1} \big\|A_t(x_2)\big\|\big\|\nabla_{x} \nabla_{y} g\left(x_1, y^{t}(x_1)\right) - \nabla_{x} \nabla_{y} g\left(x_2, y^{t}(x_2)\right)\big\|_F \label{eq:lip1}
\end{align}
where we define $A_t(x) = \prod_{m=t+1}^{N-1}\left(I-\alpha \nabla_{y}^{2} g\left(x, y^{m}(x)\right)\right)$ and $(i)$ follows from  \cref{eq:froprod}.  

Next we upper bound the quantity $\big\|A_t(x_1) - A_t(x_2)\big\|$, as shown below. 
\begin{align}
\big\|A_t(x_1) - A_t(x_2)\big\| \leq & \big\|\alpha\left(\nabla_{y}^{2} g\left(x_2, y^{t+1}(x_2)\right) - \nabla_{y}^{2} g\left(x_2, y^{t+1}(x_2)\right)\right) A_{t+1}(x_1) \nonumber \\
& + \left(I-\alpha \nabla_{y}^{2} g\left(x_2, y^{t+1}(x_2)\right)\right)\left(A_{t+1}(x_1) - A_{t+1}(x_2)\right)\big\| \nonumber \\
\leq & \alpha\big\|A_{t+1}(x_1)\big\| \big\|\nabla_{y}^{2} g\left(x_1, y^{t+1}(x_1)\right) - \nabla_{y}^{2} g\left(x_2, y^{t+1}(x_2)\right)\big\| \nonumber\\
& + \big\|I-\alpha \nabla_{y}^{2} g\left(x_2, y^{t+1}(x_2)\right)\big\| \big\|A_{t+1}(x_1) - A_{t+1}(x_2)\big\| \nonumber\\
\leq & (1-\alpha\mu_g)\big\|A_{t+1}(x_1) - A_{t+1}(x_2)\big\| \nonumber \\ & + \alpha\rho(1+\frac{L}{\mu_g})(1-\alpha\mu_g)^{N-t-2}\big\|x_1 - x_2\big\|, \label{eq:at1at2}
\end{align}
where the last inequality follows from \Cref{lemma:jnbound} and Assumptions \ref{assum:scvx} and \ref{assum:inner}. 

Telescoping \cref{eq:at1at2} over $t$ yields
\begin{align}
\big\|A_t(x_1) - A_t(x_2)\big\| \leq & (1-\alpha\mu_g)^{N-t-2} \big\|A_{N-2}(x_1) - A_{N-2}(x_2)\big\| \nonumber\\
& + \sum_{m=0}^{N-t-3}\alpha\rho(1+\frac{L}{\mu_g})(1-\alpha\mu_g)^{N-t-2-m}(1-\alpha\mu_g)^m\big\|x_1 - x_2\big\| \nonumber\\
= & (1-\alpha\mu_g)^{N-t-2} \big\|\nabla_{y}^{2} g\left(x_1, y^{N-1}(x_1)\right) - \nabla_{y}^{2} g\left(x_2, y^{N-1}(x_2)\right)\big\| \nonumber\\
& + \sum_{m=0}^{N-t-3}\alpha\rho(1+\frac{L}{\mu_g})(1-\alpha\mu_g)^{N-t-2}\big\|x_1 - x_2\big\| \nonumber\\
\overset{(i)}\leq & \alpha\rho(1+\frac{L}{\mu_g})(1-\alpha\mu_g)^{N-t-2}\big\|x_1 - x_2\big\| \nonumber\\
& + (N-t-2)\alpha\rho(1+\frac{L}{\mu_g})(1-\alpha\mu_g)^{N-t-2}\big\|x_1 - x_2\big\| \nonumber\\
= & \alpha\rho(1+\frac{L}{\mu_g})(N-t-1)(1-\alpha\mu_g)^{N-t-2}\big\|x_1 - x_2\big\|, \label{eq:finalat}
\end{align}
where $(i)$ follows from \Cref{lemma:jnbound} and Assumption \ref{assum:inner}.  
Replacing \cref{eq:finalat} in \cref{eq:lip1} and using Assumption \ref{assum:inner}, we have 
\begin{align}
\big\|&\mathcal{J}_N(x_1) - \mathcal{J}_N(x_2)\big\|_F \nonumber \\
&\leq \alpha\sum_{t=0}^{N-1} L\alpha\rho(1+\frac{L}{\mu_g})(N-t-1)(1-\alpha\mu_g)^{N-t-2}\big\|x_1 - x_2\big\| \nonumber \\ 
& + \alpha\sum_{t=0}^{N-1} \tau(1+\frac{L}{\mu_g})(1-\alpha\mu_g)^{N-t-1}\big\|x_1 - x_2\big\| \nonumber \\
&\leq  \alpha^2L\rho(1+\frac{L}{\mu_g})\big\|x_1 - x_2\big\|\sum_{m=0}^{N-1} m(1-\alpha\mu_g)^{m-1} + \frac{\tau}{\mu_g}(1+\frac{L}{\mu_g})\big\|x_1 - x_2\big\| \nonumber \\
& \leq  \frac{\rho L}{\mu_g^2}(1+\frac{L}{\mu_g})\big\|x_1 - x_2\big\| + \frac{\tau}{\mu_g}(1+\frac{L}{\mu_g})\big\|x_1 - x_2\big\| \label{eq:lip2}
\end{align}
where we use $\sum_{m=0}^{N-1} m x^{m-1} \leq \frac{1}{\alpha^2\mu_g^2}$ in \cref{eq:lip2}, which can be obtained by taking derivatives for the expression $\sum_{m=0}^{N-1} x^{m}$ with respect to $x$. Hence, rearranging and using the definition of $L_{\mathcal{J}}$ in \cref{eq:r} finishes the proof. 
\end{proof}

\begin{lemma}\label{prop:jnlips}
Suppose that Assumptions \ref{assum:scvx} and \ref{assum:inner} hold. Define the constant
\begin{align}
L_{\mathcal{J}} = \left(1+\frac{L}{\mu_g}\right)\left(\frac{\tau}{\mu_g} + \frac{\rho L}{\mu_g^2}\right). \nonumber
\end{align}
Then, the Jacobian $\mathcal{J}_N(x) = \frac{\partial Y^N(x;\cdot)}{\partial x}$ is $L_{\mathcal{J}}$-Lipschitz with respect to $x$ under the Frobenius norm: 
\begin{align*}
\big\|\mathcal{J}_N(x_1; \cdot) - \mathcal{J}_N(x_2; \cdot)\big\|_F \leq L_{\mathcal{J}} \big\|x_1 - x_2\big\| \qquad \forall x_1 \in \mathbb{R}^p, x_2 \in \mathbb{R}^p.
\end{align*}
\end{lemma}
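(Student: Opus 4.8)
The plan is to observe that the stochastic Jacobian $\mathcal{J}_N(x;\cdot)$ is, for a \emph{fixed} realization of the inner sampling path $\{\gS_0,\dots,\gS_{N-1}\}$, a purely deterministic function of $x$ with exactly the same algebraic structure as the full-batch Jacobian analyzed in \Cref{prop:jnlip}. The key enabling fact is that all SGD trajectories in PZOBO-S share the common batch path, so differentiating the update $Y^t(x) = Y^{t-1}(x) - \alpha \nabla_y G(x, Y^{t-1}(x); \gS_{t-1})$ with respect to $x$ and telescoping yields the analogue of \cref{eq:jn},
\begin{align*}
\mathcal{J}_N(x;\cdot) = -\alpha \sum_{t=0}^{N-1} \nabla_{x} \nabla_{y} G\left(x, Y^{t}(x); \gS_t\right) \prod_{m=t+1}^{N-1}\left(I-\alpha \nabla_{y}^{2} G\left(x, Y^{m}(x); \gS_m\right)\right),
\end{align*}
which is identical to the deterministic expression with $g$ replaced by the minibatch functions $G(\cdot;\gS_t)$.

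First I would record that, by Assumptions \ref{assum:scvx} and \ref{assum:inner}, each $G(\cdot;\zeta)$ inherits the very properties that drove the deterministic proof: $\mu_g$-strong convexity in $y$ (so $\|I - \alpha \nabla_y^2 G\| \leq 1-\alpha\mu_g$), $L$-Lipschitz gradient, and $\rho$- and $\tau$-Lipschitz continuity of $\nabla_y^2 G$ and $\nabla_x\nabla_y G$ respectively. I would also invoke \Cref{lemma:jnbounds}, which gives the same uniform bound $\|\mathcal{J}_N(x;\cdot)\|_F \leq L/\mu_g$ in the stochastic setting.

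With these ingredients in place, the remainder of the argument is a verbatim replay of the proof of \Cref{prop:jnlip}. Defining $A_t(x) = \prod_{m=t+1}^{N-1}(I - \alpha \nabla_y^2 G(x, Y^m(x); \gS_m))$, I would split the difference $\|\mathcal{J}_N(x_1;\cdot) - \mathcal{J}_N(x_2;\cdot)\|_F$ exactly as in \cref{eq:lip1}, then establish the recursive bound on $\|A_t(x_1) - A_t(x_2)\|$ as in \cref{eq:at1at2}, telescope it as in \cref{eq:finalat}, and substitute back to recover the same constant $L_{\mathcal{J}} = (1+\tfrac{L}{\mu_g})(\tfrac{\tau}{\mu_g} + \tfrac{\rho L}{\mu_g^2})$ through the geometric-series manipulations culminating in \cref{eq:lip2}. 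Since every step only uses per-sample Lipschitz and strong-convexity constants that coincide with those for $g$, the bound is uniform over the sampling path and therefore holds for every realization of $\{\gS_0,\dots,\gS_{N-1}\}$.

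The proof is essentially a reduction rather than a new computation, so there is no genuine obstacle; the only subtlety worth stating explicitly is \emph{why} the stochastic case collapses to the deterministic one, namely that the shared batch path makes $\mathcal{J}_N(x;\cdot)$ a deterministic object for each fixed realization, so that the Lipschitz estimate requires no expectation and no handling of cross-iteration sampling correlations. I would flag this reduction clearly at the outset and then state that the rest follows identically to \Cref{prop:jnlip}, avoiding a redundant reproduction of the algebra.
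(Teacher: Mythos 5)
Your proposal is correct and matches the paper's approach: the paper's own proof of this lemma is simply the remark that it "follows similarly to" \Cref{prop:jnlip}, and your argument makes explicit exactly the reduction the paper has in mind — for each fixed realization of the shared batch path, the stochastic Jacobian has the same telescoped product form with $g$ replaced by the minibatch functions $G(\cdot;\gS_t)$, which inherit the needed strong convexity and Lipschitz constants by Assumptions \ref{assum:scvx} and \ref{assum:inner}. Your explicit justification of why the stochastic case collapses to the deterministic one is a welcome elaboration of a step the paper leaves implicit.
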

\begin{proof}[Proof of \Cref{prop:jnlips}]
The proof follows similarly to that for \Cref{prop:jnlip}. 
\end{proof}
\section{Proofs for Deterministic Bilevel Optimization} \label{boproofs}


For notation convenience, we define the following quantities: 
\begin{align}
\hat{\mathcal{J}}_{N, j} &= \hat{\mathcal{J}}_{N}(x_k, u_{j}) = \left(\begin{array}{c}
\frac{y^N_1(x_k+\mu u_j) - y^N_1(x_k)} {\mu} u_j^{\top}\\
\vdots  \\
\frac{y^N_d(x_k+\mu u_j) - y^N_d(x_k)} {\mu} u_j^{\top}
\end{array}\right) , \quad 
{\mathcal{J}}_N = \frac{\partial y^N_k}{\partial x_k},\quad
{\mathcal{J}}_* = \frac{\partial y^*_k}{\partial x_k},
\end{align}
where $u_j \in \mathbb{R}^p, j=1, \ldots, Q$ are standard Gaussian vectors. Let $y_{i,\mu}^N(x_k)$ be the Gaussian smooth approximation of $y^N_i(x_k)$. We collect $y_{i,\mu}^N(x_k)$ for $i=1,\ldots,d$ together as a vector $y_{\mu}^N(x_k)$, which is the Gaussian approximation of the
vector $y^N(x_k)$. If $\mu > 0$, $y_{\mu}^N(x_k)$ is differentiable and we let $\mathcal{J}_{\mu}$ be the Jocobian given by 
\begin{align}
\mathcal{J}_{\mu} = \frac{\partial y_{\mu}^N(x_k)}{\partial x_k}.
\end{align}

We approximate $\frac{\partial y^N_k}{\partial x_k}$ using the average zeroth-order estimator given by 
$\hat{\mathcal{J}}_{N}= \frac{1}{Q} \sum_{j=1}^{Q} \hat{\mathcal{J}}_{N, j}.$  
The hypergradient is then approximated as 
\begin{align}
\widehat \nabla \Phi(x_k) &= \nabla_x f(x_k, y^N_k) + {\hat{\mathcal{J}}}^\top_N \nabla_y f(x_k, y^N_k) \nonumber\\
&=  \nabla_x f(x_k, y^N_k) + \frac{1}{Q} \sum_{j=1}^{Q} {\hat{\mathcal{J}}}^\top_{N,j} \nabla_y f(x_k, y^N_k). \label{eq:hg}
\end{align}
Let $\delta_j = \frac{y^N(x_k+\mu u_j) - y^N(x_k)} {\mu}$ and let $\delta_{i,j}$ be the $i$-th component of $\delta_j$. Hence, we have 
\begin{align*}
\hat{\mathcal{J}}_{N, j} = \left(\begin{array}{c}
\delta_{1,j} u_j^{\top}\\
\delta_{2,j} u_j^{\top}\\
\vdots  \\
\delta_{d,j} u_j^{\top}
\end{array}\right),
\end{align*}
\begin{align}
{\hat{\mathcal{J}}}^\top_{N,j} \nabla_y f(x_k, y^N_k) &= \left(\begin{array}{cccc}
\delta_{1,j} u_j & \delta_{2,j} u_j & \ldots & \delta_{d,j}u_j
\end{array}\right)\nabla_y f(x_k, y^N_k) \nonumber \\
&= \left\langle\delta_j, \nabla_y f(x_k, y^N_k)\right\rangle u_j. \label{eq:jv}
\end{align}
Using \cref{eq:hg} and \cref{eq:jv}, the estimator for the hypergradient can thus be computed as
\begin{align*}
\widehat \nabla \Phi(x_k) &= \nabla_x f(x_k, y^N_k) + \frac{1}{Q} \sum_{j=1}^{Q} \left\langle\delta_j, \nabla_y f(x_k, y^N_k)\right\rangle u_j. 
\end{align*} 


\subsection{Proof of \Cref{prop:esterr}}
\begin{proposition}[Formal Statement of \Cref{prop:esterr}]\label{prop:esterra}
Suppose that Assumptions \ref{assum:scvx}, \ref{assum:inner}, and \ref{assum:outer} hold. Then, the variance of hypergradient estimation can be upper-bounded as 
\begin{align}
\mathbb{E} \big\|\widehat \nabla \Phi(x_k) - \nabla \Phi(x_k)\big\|^2 \leq & 2L^2D^2(1-\alpha\mu_g)^N + 4\frac{L^4}{\mu_g^2}D^2(1-\alpha\mu_g)^N + 24(4p+15)\frac{L^2M^2}{Q\mu_g^2} \nonumber\\
& + \frac{\mu^2}{Q}L_{\mathcal{J}}^2M^2d\mathcal{P}_4(p) + \frac{24L^2M^2(1-\alpha \mu_g)^{2N}}{\mu_g^2} + 6\mu^2L_{\mathcal{J}}^2M^2d(p+3)^3 \nonumber\\
& +\frac{48M^2(\tau \mu_g+L \rho)^2}{\mu_g^{4}}(1-\alpha \mu_g)^{N-1}D^2\nonumber \\ 
= & \mathcal{D}_{var} = \mathcal{O} \left((1-\alpha\mu_g)^N + \frac{p}{Q}+ \frac{\mu^2dp^4}{Q} + \mu^2dp^3\right)
\end{align}
where the expectation $\mathbb{E}[\cdot]$ is conditioned on $x_k$ and $y_k^N$. 
\end{proposition}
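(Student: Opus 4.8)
The plan is to control the full second moment (bias squared plus variance, since the expectation is taken only over the Gaussian vectors) by splitting the error into a direct part and an indirect part and then inserting two intermediate Jacobians. Writing $v = \nabla_y f(x_k, y^N_k)$ and $v_* = \nabla_y f(x_k, y^*(x_k))$, I would start from
\begin{align*}
\widehat \nabla \Phi(x_k) - \nabla \Phi(x_k) = \big[\nabla_x f(x_k, y^N_k) - \nabla_x f(x_k, y^*)\big] + \big(\hat{\mathcal{J}}_N - \mathcal{J}_*\big)^\top v + \mathcal{J}_*^\top (v - v_*),
\end{align*}
and further write $\hat{\mathcal{J}}_N - \mathcal{J}_* = (\hat{\mathcal{J}}_N - \mathcal{J}_\mu) + (\mathcal{J}_\mu - \mathcal{J}_N) + (\mathcal{J}_N - \mathcal{J}_*)$, where $\mathcal{J}_\mu$ is the Jacobian of the Gaussian-smoothed output. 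Squaring and using $\|\sum_i a_i\|^2 \le n \sum_i \|a_i\|^2$ together with $\|\cdot^\top v\| \le \|\cdot\|_F \|v\|$ and $\|v\|, \|v_*\| \le M$ (Assumption~\ref{assum:outer}) reduces the task to bounding five groups.

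The core group is the only mean-zero one, $(\hat{\mathcal{J}}_N - \mathcal{J}_\mu)^\top v$. Here I would treat each coordinate $y_i^N(x)$ as a scalar function whose gradient is the $i$-th row of $\mathcal{J}_N$, so that each row of $\hat{\mathcal{J}}_{N,j}$ is exactly the Nesterov oracle of \cref{eq:nesgrad} applied to $y_i^N$. \Cref{prop:jnlip} is the key input: it certifies that this gradient is Lipschitz, with each per-row constant $L_i \le L_{\mathcal{J}}$, which is precisely the smoothness hypothesis required by \Cref{lemma:nesterov}. The first item of \Cref{lemma:nesterov} gives $\mathbb{E}[\hat{\mathcal{J}}_{N,j}] = \mathcal{J}_\mu$ (unbiasedness), so this group contributes $\tfrac{M^2}{Q}\,\mathbb{E}\|\hat{\mathcal{J}}_{N,1} - \mathcal{J}_\mu\|_F^2 \le \tfrac{M^2}{Q}\,\mathbb{E}\|\hat{\mathcal{J}}_{N,1}\|_F^2$, and the third item, applied row-by-row and summed, yields $\mathbb{E}\|\hat{\mathcal{J}}_{N,1}\|_F^2 \le 4(p+4)\|\mathcal{J}_\mu\|_F^2 + \tfrac{3}{2}\mu^2(p+5)^3 \sum_i L_i^2$ with $\sum_i L_i^2 \le d L_{\mathcal{J}}^2$ — this summation over the $d$ rows is exactly what injects the dimension $d$ into the high-order terms. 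Bounding $\|\mathcal{J}_\mu\|_F^2 \le 2\|\mathcal{J}_N\|_F^2 + 2\|\mathcal{J}_\mu - \mathcal{J}_N\|_F^2$ via $\|\mathcal{J}_N\|_F \le L/\mu_g$ from \Cref{lemma:jnbound} (which is $d$-independent) and the second item of \Cref{lemma:nesterov} row-wise for $\|\mathcal{J}_\mu - \mathcal{J}_N\|_F \le \tfrac{\mu}{2}(p+3)^{3/2}\sqrt{d}\,L_{\mathcal{J}}$, the dimension-free part of $\|\mathcal{J}_\mu\|_F^2$ produces the $p/Q$ term while its $\mu^2$-correction produces the $\mu^2 d p^4/Q$ term.

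The remaining four groups are deterministic given $x_k, y_k^N$. The smoothing-bias group $(\mathcal{J}_\mu - \mathcal{J}_N)^\top v$ is handled by the same row-wise application of the second item of \Cref{lemma:nesterov}, giving the $\mu^2 d p^3$ term. The group $\mathcal{J}_*^\top(v-v_*)$ is bounded by $\|\mathcal{J}_*\|_F \|v - v_*\| \le \tfrac{L}{\mu_g} L_f \|y^N_k - y^*\|$ using \Cref{lemma:jstarbound} and $L_f$-smoothness of $f$, and the direct-gradient group by $L_f\|y^N_k - y^*\|$; both inherit the $(1-\alpha\mu_g)^N$ rate from the strongly convex inner GD contraction $\|y^N_k - y^*\|^2 \le (1-\alpha\mu_g)^N \|y^0 - y^*\|^2 = \mathcal{O}((1-\alpha\mu_g)^N D^2)$. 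Collecting all groups produces the claimed $\mathcal{O}$ expression.

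I expect the main obstacle to be twofold. First, justifying the row-wise use of \Cref{lemma:nesterov} on the trajectory-dependent map $x \mapsto y^N(x)$: unlike a fixed black-box function, $y^N(\cdot)$ is the output of $N$ inner GD steps, and the required Lipschitz-gradient property of each coordinate is not self-evident — it is supplied exactly by \Cref{prop:jnlip}, whose role here is therefore essential rather than cosmetic. Second, establishing that $\|\mathcal{J}_N - \mathcal{J}_*\|_F^2 = \mathcal{O}((1-\alpha\mu_g)^N)$, which does not follow from the boundedness lemmas alone; I would prove it by subtracting the telescoped GD-Jacobian expression of \Cref{lemma:jnbound} from the implicit-function form $\mathcal{J}_* = -[\nabla_y^2 g]^{-1}\nabla_x\nabla_y g$, and then propagating the inner-iterate error $\|y^t_k - y^*\| \le (1-\alpha\mu_g)^{t/2}\|y^0-y^*\|$ through the $\rho$- and $\tau$-Lipschitz Hessian assumptions via a geometric-type recursion, which accounts for the $(1-\alpha\mu_g)^{2N}$ and $(1-\alpha\mu_g)^{N-1}$ factors appearing in the explicit bound.
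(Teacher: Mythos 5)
Your proposal matches the paper's proof essentially step for step: the same split into the direct term, $(\hat{\mathcal{J}}_N - \mathcal{J}_*)^\top v$, and $\mathcal{J}_*^\top(v - v_*)$; the same three-way insertion $\hat{\mathcal{J}}_N - \mathcal{J}_* = (\hat{\mathcal{J}}_N - \mathcal{J}_\mu) + (\mathcal{J}_\mu - \mathcal{J}_N) + (\mathcal{J}_N - \mathcal{J}_*)$ with the $1/Q$ variance reduction from the i.i.d.\ Gaussian draws; the same row-wise application of \Cref{lemma:nesterov} to the coordinates of $y^N(\cdot)$ justified by the Lipschitz-Jacobian property of \Cref{prop:jnlip}; and the same use of \Cref{lemma:jstarbound}, \Cref{lemma:jnbound}, and the inner-GD contraction $\|y^N_k - y^*_k\|^2 \le (1-\alpha\mu_g)^N D^2$. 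The only divergence is that you sketch a from-scratch derivation of the $\|\mathcal{J}_N - \mathcal{J}_*\|_F^2$ bound, whereas the paper imports it directly from eq.~(41) of \cite{ji2021bo}; your sketch is consistent with that result.
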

\begin{proof}[Proof of \Cref{prop:esterr}]
Based on the definitions of $\nabla \Phi(x_k)$ and $\widehat \nabla \Phi(x_k)$ and conditioning on $x_k$ and $y_k^N$, we have 
\begin{align}
\mathbb{E} \big\|\widehat \nabla& \Phi(x_k) - \nabla \Phi(x_k)\big\|^2 \nonumber \\
\leq & 2\big\|\nabla_x f(x_k, y^N_k) - \nabla_x f(x_k, y^{*}_k)\big\|^2 + 2 \mathbb{E} \big\|{\hat{\mathcal{J}}}^\top_N \nabla_y f(x_k, y^N_k) - {\mathcal{J}}^\top_{*} \nabla_y f(x_k, y^*_k) \big\|^2 \nonumber \\
\leq & 2L^2\big\|y^N_k - y^*_k\big\|^2 + 4\big\|{\mathcal{J}}_*\big\|_F^2 \big\|\nabla_y f(x_k, y^N_k) - \nabla_y f(x_k, y^{*}_k)\big\|^2 \nonumber \\
 & + 4 \mathbb{E}\big\|\hat{\mathcal{J}}_N -\mathcal{J}_{*}\big\|_F^2\big\|\nabla_y f(x_k, y^N_k)\big\|^2 \nonumber \\
 \overset{(i)}\leq & 2L^2D^2(1-\alpha\mu_g)^N + 4\frac{L^4}{\mu_g^2}\big\|y^N_k - y^*_k\big\|^2 + 4M^2\mathbb{E}\big\|\hat{\mathcal{J}}_N -\mathcal{J}_{*}\big\|^2_F\nonumber \\
 \overset{(ii)}\leq & 2L^2D^2(1-\alpha\mu_g)^N + 4\frac{L^4}{\mu_g^2}D^2(1-\alpha\mu_g)^N + 4M^2\mathbb{E}\big\|\hat{\mathcal{J}}_N -\mathcal{J}_{*}\big\|^2_F \label{eq:vardet}
\end{align}
where $(i)$ follows from \Cref{lemma:jstarbound} and Assumption \ref{assum:outer}, and $(i)$ and $(ii)$ also use the following result for full GD (when applied to a strongly-convex function).
\begin{align*}
\big\|y^N_k - y^*_k\big\|^2 \leq (1-\alpha\mu_g)^ND^2.
\end{align*}

Next, we upper-bound the last term $\mathbb{E}\big\|\hat{\mathcal{J}}_N -\mathcal{J}_{*}\big\|^2_F$ at the last line of \cref{eq:vardet}. First note that 
\begin{align}
\mathbb{E}\big\|\hat{\mathcal{J}}_N -\mathcal{J}_{*}\big\|^2_F \leq 3\mathbb{E}\big\|\hat{\mathcal{J}}_N -\mathcal{J}_{\mu}\big\|^2_F + 3\big\|\mathcal{J}_N -\mathcal{J}_{*}\big\|^2_F + 3\big\|\mathcal{J}_{\mu} - \mathcal{J}_N\big\|^2_F. \label{eq:varsplit}
\end{align}
We then upper-bound each term of the right hand side of \cref{eq:varsplit}. For the first term, we have 
\begin{align}
\mathbb{E}\big\|\hat{\mathcal{J}}_N -\mathcal{J}_{\mu}\big\|^2_F = & \mathbb{E}\big\|\frac{1}{Q}\sum_{j=1}^{Q}\hat{\mathcal{J}}_{N,j} -\mathcal{J}_{\mu}\big\|^2_F \nonumber\\
= &\frac{1}{Q^2}\mathbb{E}\big\|\sum_{j=1}^{Q}\left(\hat{\mathcal{J}}_{N,j} -\mathcal{J}_{\mu}\right)\big\|^2_F \nonumber\\
= &\frac{1}{Q^2}\mathbb{E} \left(\sum_{j=1}^{Q}\big\|\hat{\mathcal{J}}_{N,j} -\mathcal{J}_{\mu}\big\|^2_F + 2\sum_{i<j} \left\langle \hat{\mathcal{J}}_{N,i} -\mathcal{J}_{\mu}, \hat{\mathcal{J}}_{N,j} -\mathcal{J}_{\mu}\right\rangle \right) \nonumber\\
= &\frac{1}{Q^2}\sum_{j=1}^{Q}\mathbb{E}\big\|\hat{\mathcal{J}}_{N,j} -\mathcal{J}_{\mu}\big\|^2_F \nonumber\\
= &\frac{1}{Q}\mathbb{E}\big\|\hat{\mathcal{J}}_{N,j} -\mathcal{J}_{\mu}\big\|^2_F, \hspace{10pt} j\in \{1, \ldots, Q\}. \label{eq:varjn}
\end{align}
We next upper-bound the term $\mathbb{E}\big\|\hat{\mathcal{J}}_{N,j} - \mathcal{J}_{\mu}\big\|^2_F$ in \cref{eq:varjn}. 
\begin{align}
\mathbb{E}\big\|\hat{\mathcal{J}}_{N,j} - \mathcal{J}_{\mu}\big\|^2_F = & \mathbb{E}\big\|\hat{\mathcal{J}}_{N,j}\big\|^2_F - \big\|\mathcal{J}_{\mu}\big\|^2_F \nonumber\\
\overset{(i)}\leq & \sum_{i=1}^{d}\left(4(p+4)\big\| \nabla y^N_{i,\mu}\big\|^2 + \frac{3}{2}\mu^2L_{\mathcal{J}}^2(p+5)^3\right) - \sum_{i=1}^{d}\big\| \nabla y^N_{i,\mu}\big\|^2 \nonumber\\
\leq & \sum_{i=1}^{d}\left((4p+15)\big\| \nabla y^N_{i,\mu}\big\|^2 + \frac{3}{2}\mu^2L_{\mathcal{J}}^2(p+5)^3\right), \label{eq:jnjmu}
\end{align}
where $(i)$ follows by applying \Cref{lemma:nesterov} to the components of vector $y^N(x_k)$ which have Lipschitz gradients by \Cref{prop:jnlip}. 
Then, noting that $\big\| \nabla y^N_{i,\mu}\big\|^2 \leq 2\big\| \nabla y^N_{i}\big\|^2 + \frac{1}{2}\mu^2L_{\mathcal{J}}^2(p+3)^3$ and replacing in \cref{eq:jnjmu}, we have
\begin{align}
\mathbb{E}\big\|\hat{\mathcal{J}}_{N,j} - \mathcal{J}_{\mu}\big\|^2_F \leq & \sum_{i=1}^{d}\left(2(4p+15)\big\| \nabla y^N_{i}\big\|^2 + \mu^2L_{\mathcal{J}}^2\mathcal{P}_4(p)\right) \nonumber\\
\leq & 2(4p+15)\big\| \mathcal{J}_N\big\|^2_F + \mu^2L_{\mathcal{J}}^2d\mathcal{P}_4(p) \nonumber\\
\overset{(i)}\leq & 2(4p+15)\frac{L^2}{\mu_g^2} + \mu^2L_{\mathcal{J}}^2d\mathcal{P}_4(p),
\end{align}
where $(i)$ follows from \Cref{lemma:jnbound} and $\mathcal{P}_4$ is a polynomial of degree $4$ in $p$.
Combining \cref{eq:varjn} and \cref{eq:jnjmu} yields
\begin{align}\label{eq:ebigscas}
\mathbb{E}\big\|\hat{\mathcal{J}}_N -\mathcal{J}_{\mu}\big\|^2_F \leq & 2(4p+15)\frac{L^2}{Q\mu_g^2} + \frac{\mu^2}{Q}L_{\mathcal{J}}^2d\mathcal{P}_4(p).
\end{align}
We next upper-bound the second term at the right hand side of \cref{eq:varsplit}, which can be upper-bounded using eq. (41) in \cite{ji2021bo}, as shown below. 
\begin{align}\label{eq:esdscks}
\big\|\mathcal{J}_N -\mathcal{J}_{*}\big\|^2 \leq \frac{2L^2(1-\alpha \mu_g)^{2N}}{\mu_g^2}+\frac{4(\tau \mu_g+L \rho)^2}{\mu_g^{4}}(1-\alpha \mu_g)^{N-1}D^2.
\end{align}
We finally upper-bound the last term at the right hand side of \cref{eq:varsplit} using \Cref{lemma:nesterov}.
\begin{align}\label{eq:zhenjbcsai}
\big\| {\mathcal{J}}_\mu - {\mathcal{J}}_{N}\big\|^2_F = & \sum_{i=1}^{d} \big\|\nabla y_{i,\mu}^N - \nabla y_{i}^N\big\|^2 \nonumber
\\ \leq & \frac{\mu^2}{2}L_{\mathcal{J}}^2d(p+3)^3. 
\end{align}
Substituting \cref{eq:ebigscas}, \cref{eq:esdscks} and \cref{eq:zhenjbcsai} into \cref{eq:varsplit} yields 
\begin{align}
\mathbb{E}\big\|\hat{\mathcal{J}}_N -\mathcal{J}_{*}\big\|^2_F \leq & 6(4p+15)\frac{L^2}{Q\mu_g^2} + \frac{\mu^2}{Q}L_{\mathcal{J}}^2d\mathcal{P}_4(p) + \frac{6L^2(1-\alpha \mu_g)^{2N}}{\mu_g^2} \nonumber\\ 
& +\frac{12(\tau \mu_g+L \rho)^2}{\mu_g^{4}}(1-\alpha \mu_g)^{N-1}D^2 + \frac{3\mu^2}{2}L_{\mathcal{J}}^2d(p+3)^3. 
\end{align}
Finally, the bound for the expected estimation error in \cref{eq:vardet} becomes
\begin{align}
\mathbb{E} \big\|\widehat \nabla \Phi(x_k) - \nabla \Phi(x_k)\big\|^2 \leq & 2L^2D^2(1-\alpha\mu_g)^N + 4\frac{L^4}{\mu_g^2}D^2(1-\alpha\mu_g)^N + 24(4p+15)\frac{L^2M^2}{Q\mu_g^2} \nonumber\\
& + \frac{\mu^2}{Q}L_{\mathcal{J}}^2M^2d\mathcal{P}_4(p) + \frac{24L^2M^2(1-\alpha \mu_g)^{2N}}{\mu_g^2} + 6\mu^2L_{\mathcal{J}}^2M^2d(p+3)^3 \nonumber\\
& +\frac{48M^2(\tau \mu_g+L \rho)^2}{\mu_g^{4}}(1-\alpha \mu_g)^{N-1}D^2. \label{eq:vardet2} 
\end{align}
This completes the proof. 
\end{proof}

\subsection{Hypergradient Estimation Bias}
\begin{lemma} \label{lemma:deterbias}
Suppose that Assumptions \ref{assum:scvx}, \ref{assum:inner}, and \ref{assum:outer} hold. Then, the bias of hypergradient estimation can be upper-bounded as follows: 
\begin{align}
    \big\|\mathbb{E}\widehat \nabla \Phi(x_k) - \nabla \Phi(x_k)\big\| \leq \mathcal{D}_{bias} = \mathcal{O}\left( (1-\alpha\mu_g)^{N/2} + \mu d^{1/2} p^{3/2}\right). 
\end{align}
\end{lemma}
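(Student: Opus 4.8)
The plan is to first compute the conditional expectation of the estimator over the Gaussian directions and then compare it term by term with the true hypergradient $\nabla\Phi(x_k) = \nabla_x f(x_k,y^*_k) + \mathcal{J}_*^\top \nabla_y f(x_k,y^*_k)$. Since the direct term $\nabla_x f(x_k,y^N_k)$ is deterministic given $x_k$, all randomness sits in the indirect term. Applying the first item of \Cref{lemma:nesterov} to each coordinate $y^N_i$ of the inner output shows that the $i$-th row of $\hat{\mathcal{J}}_{N,j}$ has expectation $\nabla y^N_{i,\mu}(x_k)^\top$, so that $\mathbb{E}_{u}[\hat{\mathcal{J}}_{N,j}] = \mathcal{J}_\mu$; since the $Q$ directions are i.i.d., this gives the clean identity $\mathbb{E}[\widehat\nabla\Phi(x_k)] = \nabla_x f(x_k,y^N_k) + \mathcal{J}_\mu^\top \nabla_y f(x_k,y^N_k)$. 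In other words, the estimator is exactly unbiased for the hypergradient built from the \emph{smoothed} Jacobian $\mathcal{J}_\mu$, not for $\mathcal{J}_N$ or $\mathcal{J}_*$.

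With this identity in hand, I would decompose the bias by inserting and subtracting intermediate quantities:
\begin{align*}
\mathbb{E}\widehat\nabla\Phi(x_k) - \nabla\Phi(x_k) =& \big(\nabla_x f(x_k,y^N_k) - \nabla_x f(x_k,y^*_k)\big) + (\mathcal{J}_\mu - \mathcal{J}_*)^\top \nabla_y f(x_k,y^N_k) \\ &+ \mathcal{J}_*^\top\big(\nabla_y f(x_k,y^N_k) - \nabla_y f(x_k,y^*_k)\big).
\end{align*}
Taking norms and using the triangle inequality, the first and third pieces are controlled by the $L_f$-Lipschitzness of $\nabla f$ (\Cref{assum:outer}) together with the linear convergence $\|y^N_k - y^*_k\| \leq (1-\alpha\mu_g)^{N/2} D$ of inner GD on a strongly convex objective, plus the bound $\|\mathcal{J}_*\|_F \leq L/\mu_g$ from \Cref{lemma:jstarbound} for the third piece; both contribute $\mathcal{O}\big((1-\alpha\mu_g)^{N/2}\big)$. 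The second piece is bounded by $\|\mathcal{J}_\mu - \mathcal{J}_*\|_F \cdot M$ using $\|\nabla_y f\| \leq M$.

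The remaining work is to bound $\|\mathcal{J}_\mu - \mathcal{J}_*\|_F$, which I would split as $\|\mathcal{J}_\mu - \mathcal{J}_N\|_F + \|\mathcal{J}_N - \mathcal{J}_*\|_F$. The smoothing error $\|\mathcal{J}_\mu - \mathcal{J}_N\|_F$ is already estimated in \cref{eq:zhenjbcsai} as $\mathcal{O}(\mu\sqrt{d}\,p^{3/2})$ (this is exactly where the Lipschitz constant $L_{\mathcal{J}}$ of \Cref{prop:jnlip} enters, certifying the coordinatewise smoothing bound), and the inner-loop approximation error $\|\mathcal{J}_N - \mathcal{J}_*\|_F$ is bounded in \cref{eq:esdscks} by $\mathcal{O}\big((1-\alpha\mu_g)^{N/2}\big)$. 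Summing the three pieces yields exactly $\mathcal{D}_{bias} = \mathcal{O}\big((1-\alpha\mu_g)^{N/2} + \mu d^{1/2} p^{3/2}\big)$.

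The main obstacle, and the conceptual heart of the argument, is the first step: recognizing that the trajectory-based estimator is unbiased for the smoothed Jacobian $\mathcal{J}_\mu$. This is precisely what permits the zeroth-order smoothing error and the inner-optimization error to be separated and controlled independently by $\mu$ and $N$. Once the identity $\mathbb{E}[\widehat\nabla\Phi(x_k)] = \nabla_x f(x_k,y^N_k) + \mathcal{J}_\mu^\top \nabla_y f(x_k,y^N_k)$ is established, the rest reduces to a routine chain of triangle inequalities invoking results already proved, namely \Cref{lemma:jstarbound}, \Cref{prop:jnlip}, and the two component bounds \cref{eq:zhenjbcsai} and \cref{eq:esdscks}.
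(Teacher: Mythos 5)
Your proposal is correct and follows essentially the same route as the paper's proof: both use the unbiasedness of the estimator for the smoothed-Jacobian hypergradient $\nabla_x f(x_k,y^N_k)+\mathcal{J}_\mu^\top\nabla_y f(x_k,y^N_k)$, the same three-term triangle-inequality decomposition controlled by $L$-Lipschitzness, $\|\mathcal{J}_*\|_F\le L/\mu_g$, $\|\nabla_y f\|\le M$, and linear convergence of inner GD, and the same split $\|\mathcal{J}_\mu-\mathcal{J}_*\|\le\|\mathcal{J}_\mu-\mathcal{J}_N\|+\|\mathcal{J}_N-\mathcal{J}_*\|$ bounded via \cref{eq:zhenjbcsai} and \cref{eq:esdscks}. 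No gaps.
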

\begin{proof}
First note that We have  
\begin{align*}
\big\|\mathbb{E}\widehat \nabla \Phi(x_k)& - \nabla \Phi(x_k)\big\| \nonumber
\\=& \big\|\nabla_x f(x_k, y^N_k) - \nabla_x f(x_k, y^*_k)\big\| +  \big\|{\mathcal{J}}^\top_\mu \nabla_y f(x_k, y^N_k) - {\mathcal{J}}^\top_{*} \nabla_y f(x_k, y^*_k)\big\|
\\\leq& L\big\|y^N_k - y^*_k\big\| + \big\| {\mathcal{J}}^\top_\mu \nabla_y f(x_k, y^N_k) - {\mathcal{J}}^\top_{*} \nabla_y f(x_k, y^N_k)\big\|
\\&+ \big\| {\mathcal{J}}^\top_{*} \nabla_y f(x_k, y^N_k) - {\mathcal{J}}^\top_{*} \nabla_y f(x_k, y^*_k)\big\|
\\\leq& L\big\|y^N_k - y^*_k\big\| + M\big\| {\mathcal{J}}_\mu - {\mathcal{J}}_{*}\big\| + \frac{L}{\mu_g}\big\|\nabla_y f(x_k, y^N_k) - \nabla_y f(x_k, y^*_k)\big\|
\\\leq& L\big\|y^N_k - y^*_k\big\| + M\big\| {\mathcal{J}}_\mu - {\mathcal{J}}_{*}\big\| + \frac{L^2}{\mu_g}\big\|y^N_k - y^*_k\big\|.
\end{align*}
which, in conjunction with 
$
\| {\mathcal{J}}_\mu - {\mathcal{J}}_{*}\| = \| {\mathcal{J}}_\mu - {\mathcal{J}}_{N}\big\| + \| {\mathcal{J}}_N - {\mathcal{J}}_{*}\| 
\leq  \frac{\mu}{2}L_{\mathcal{J}}d^{1/2}(p+3)^{3/2} + \frac{L(1-\alpha \mu_g)^{N}}{\mu_g}+\frac{2(\tau \mu_g+L \rho)}{\mu_g^{2}}(1-\alpha \mu_g)^{(N-1)/2}D$, yields
\begin{align*}
\big\|\mathbb{E}\widehat \nabla \Phi(x_k) - \nabla \Phi(x_k)\big\| \leq & LD(1-\alpha \mu_g)^{N/2} + \frac{\mu}{2}L_{\mathcal{J}}Md^{1/2}(p+3)^{3/2} + \frac{LM(1-\alpha \mu_g)^{N}}{\mu_g} \\
& + \frac{2MD(\tau \mu_g+L \rho)}{\mu_g^{2}}(1-\alpha \mu_g)^{(N-1)/2} + \frac{L^2D}{\mu_g}(1-\alpha\mu_g)^{N/2} \\ 
\leq & \mathcal{O}\left( (1-\alpha\mu_g)^{N/2} + \mu d^{1/2} p^{3/2}\right). 
\end{align*}
Then, the proof is complete. 
\end{proof}

\subsection{Proof of \Cref{theorem1}} 

\begin{theorem}[Formal Statement of \Cref{theorem1}] \label{theorem1a}
Suppose that Assumptions \ref{assum:scvx}, \ref{assum:inner}, and \ref{assum:outer} hold. Choose the inner- and outer-loop stepsizes respectively as $\alpha \leq \frac{1}{L}$ and $\beta = \frac{1}{L_{\Phi}\sqrt{K}}$, where $L_{\Phi} = L + \frac{2L^2+\tau M^2}{\mu_g} + \frac{\rho L M+L^3+\tau M L}{\mu_g^2} + \frac{\rho L^2 M}{\mu_g^3}$, and let $M_{\Phi}=\big(1 + \frac{L_g}{\mu_g}\big)M$. Further set $Q=\mathcal{O}(1)$ and $\mu = \mathcal{O}\left(\frac{1}{\sqrt{Kdp^3}}\right)$. 
Then, the iterates $x_k$ for $k = 0, ..., K-1$ of PZOBO in \Cref{alg:zojd} satisfy: 
\begin{align}
\frac{1-\frac{1}{\sqrt{K}}}{K}\sum_{k=0}^{K-1} \mathbb{E}\big\| \nabla\Phi(x_{k})\big\|^2 \leq \frac{L_{\phi}(\Phi(x_0) - \Phi^*)}{\sqrt{K}} + M_{\Phi} \mathcal{D}_{bias} + \frac{\mathcal{D}_{var}}{\sqrt{K}} = \mathcal{O}\left(\frac{p}{\sqrt{K}} + (1-\alpha\mu_g)^{N}\right),   
\end{align}
with $\Phi^* = \inf_x \Phi(x)$, $\mathcal{D}_{bias}$ and $\mathcal{D}_{var}$ are defined in \Cref{lemma:deterbias} and \Cref{prop:esterra}.
\end{theorem}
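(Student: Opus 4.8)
The plan is to treat this as a standard descent-lemma argument for nonconvex optimization, built on top of the smoothness, bias, and variance bounds already established in \Cref{lemma:philip}, \Cref{lemma:deterbias}, and \Cref{prop:esterra}. Since $\Phi$ is $L_\Phi$-smooth by \Cref{lemma:philip}, applying the descent inequality to the update $x_{k+1} = x_k - \beta\widehat\nabla\Phi(x_k)$ gives
\begin{align*}
\Phi(x_{k+1}) \leq \Phi(x_k) - \beta\big\langle\nabla\Phi(x_k), \widehat\nabla\Phi(x_k)\big\rangle + \frac{L_\Phi\beta^2}{2}\big\|\widehat\nabla\Phi(x_k)\big\|^2.
\end{align*}
Taking expectation conditioned on $x_k$ (over the Gaussian vectors $\{u_{k,j}\}$) turns the cross term into $\big\langle\nabla\Phi(x_k), \mathbb{E}\widehat\nabla\Phi(x_k)\big\rangle$.

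First I would handle the cross term. Because the estimator is biased, I would write $\mathbb{E}\widehat\nabla\Phi(x_k) = \nabla\Phi(x_k) + b_k$ and lower bound $\big\langle\nabla\Phi(x_k), \mathbb{E}\widehat\nabla\Phi(x_k)\big\rangle \geq \|\nabla\Phi(x_k)\|^2 - \|\nabla\Phi(x_k)\|\,\|b_k\|$ via Cauchy--Schwarz. Using the uniform bound $\|\nabla\Phi(x_k)\| \leq M_\Phi$ (which follows from \Cref{lemma:jstarbound} and Assumption~\ref{assum:outer}) together with the bias estimate $\|b_k\| \leq \mathcal{D}_{bias}$ from \Cref{lemma:deterbias}, this term is controlled from below by $\|\nabla\Phi(x_k)\|^2 - M_\Phi\mathcal{D}_{bias}$. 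For the second-moment term I would use $\mathbb{E}\|\widehat\nabla\Phi(x_k)\|^2 \leq 2\,\mathbb{E}\|\widehat\nabla\Phi(x_k) - \nabla\Phi(x_k)\|^2 + 2\|\nabla\Phi(x_k)\|^2 \leq 2\mathcal{D}_{var} + 2\|\nabla\Phi(x_k)\|^2$, invoking the variance bound of \Cref{prop:esterra}.

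Assembling these and collecting the $\|\nabla\Phi(x_k)\|^2$ coefficients yields
\begin{align*}
(\beta - L_\Phi\beta^2)\,\mathbb{E}\big\|\nabla\Phi(x_k)\big\|^2 \leq \mathbb{E}\Phi(x_k) - \mathbb{E}\Phi(x_{k+1}) + \beta M_\Phi\mathcal{D}_{bias} + L_\Phi\beta^2\mathcal{D}_{var}.
\end{align*}
Choosing $\beta = \frac{1}{L_\Phi\sqrt{K}}$ makes $\beta - L_\Phi\beta^2 = \frac{1}{L_\Phi\sqrt{K}}\big(1 - \frac{1}{\sqrt{K}}\big)$ positive. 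I would then sum over $k = 0,\dots,K-1$, telescope the function-value differences, bound $\Phi(x_K)\geq\Phi^*$, and divide through; using $K\beta = \sqrt{K}/L_\Phi$ and $KL_\Phi\beta^2 = 1/L_\Phi$ produces exactly
\begin{align*}
\frac{1-\frac{1}{\sqrt{K}}}{K}\sum_{k=0}^{K-1}\mathbb{E}\big\|\nabla\Phi(x_k)\big\|^2 \leq \frac{L_\Phi(\Phi(x_0)-\Phi^*)}{\sqrt{K}} + M_\Phi\mathcal{D}_{bias} + \frac{\mathcal{D}_{var}}{\sqrt{K}}.
\end{align*}
Finally I would substitute $Q = \mathcal{O}(1)$ and $\mu = \mathcal{O}(1/\sqrt{Kdp^3})$: the choice of $\mu$ makes $\mu d^{1/2}p^{3/2} = \mathcal{O}(1/\sqrt{K})$ in $\mathcal{D}_{bias}$ and renders every $\mu^2$-dependent piece of $\mathcal{D}_{var}/\sqrt{K}$ of order $\mathcal{O}(1/K^{3/2})$, while the $p/Q$ term contributes $\mathcal{O}(p/\sqrt{K})$; the exponential pieces collapse into a single $(1-\alpha\mu_g)^{N}$-type term, giving the claimed $\mathcal{O}(p/\sqrt{K} + (1-\alpha\mu_g)^N)$ rate.

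The telescoping skeleton here is entirely routine; the genuine difficulty of the result lives upstream in \Cref{prop:esterra} and \Cref{lemma:deterbias}, which must certify that the trajectory-dependent, biased Jacobian estimator has both its bias and variance controllable by $\mu$, $N$, and $Q$. The one point requiring care at this level is that, unlike the standard unbiased SGD analysis, the nonvanishing bias forces the extra additive term $M_\Phi\mathcal{D}_{bias}$, so I must verify that the prescribed $\mu$ drives this term down at the desired $\mathcal{O}(1/\sqrt{K})$ plus exponential rate rather than leaving a constant floor.
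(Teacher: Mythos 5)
Your proposal is correct and follows essentially the same route as the paper: the $L_\Phi$-smoothness descent lemma from \Cref{lemma:philip}, the cross term controlled via Cauchy--Schwarz with the uniform bound $\|\nabla\Phi(x_k)\|\le M_\Phi$ and the bias bound $\mathcal{D}_{bias}$, the quadratic term controlled via $\|a+b\|^2\le 2\|a\|^2+2\|b\|^2$ and the variance bound $\mathcal{D}_{var}$, followed by telescoping with $\beta=\frac{1}{L_\Phi\sqrt{K}}$ and substitution of $Q=\mathcal{O}(1)$, $\mu=\mathcal{O}(1/\sqrt{Kdp^3})$. No gaps; this matches the paper's argument step for step.
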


\begin{proof}[Proof of \Cref{theorem1}]
Using Assumptions \ref{assum:scvx}, \ref{assum:inner}, and \ref{assum:outer}, we upper-bound the hypergradient $\nabla\Phi(x_k)$ by  
\begin{align}\label{hyperGboud}
\|\nabla \Phi(x)\| = & \|\nabla_x f(x, y^*(x))  -\nabla_{x} \nabla_{y} g\left(x, y^{*}(x)\right) \left[\nabla_{y}^{2} g\left(x, y^{*}(x)\right)\right]^{-1} \nabla_y f(x, y^*(x))\|\nonumber
\\\leq & \Big(1 + \frac{L_g}{\mu_g}\Big)M.
\end{align}
Then, using the Lipschitzness of function $\Phi(x_k)$, we have
\begin{align}\label{eq:smoothnPhistep}
\Phi(x_{k+1}) \leq & \Phi(x_{k}) + \left\langle \nabla\Phi(x_{k}), x_{k+1} - x_k\right\rangle + \frac{L_{\phi}}{2}\big\|x_{k+1} - x_k\big\|^2 \nonumber \\
\leq & \Phi(x_{k}) - \beta \langle\nabla\Phi(x_{k}), \widehat \nabla\Phi(x_{k})\rangle + \frac{L_{\phi}}{2}\beta^2\big\|\widehat \nabla\Phi(x_{k})\big\|^2\nonumber \\
\leq & \Phi(x_{k}) - \beta \langle\nabla\Phi(x_{k}), \widehat \nabla\Phi(x_{k}) - \nabla\Phi(x_{k})\rangle - \beta\big\| \nabla\Phi(x_{k})\big\|^2 \nonumber\\
&+ L_{\phi}\beta^2\left(\big\| \nabla\Phi(x_{k})\big\|^2 + \big\|\widehat \nabla\Phi(x_{k}) - \nabla\Phi(x_{k})\big\|^2\right) 
\end{align}
Let $\mathbb{E}_k [\cdot]= \mathbb{E}_{u_{k, 1:Q}} [\cdot |x_k, y^N_k]$ be the expectation over the Gaussian vectors $u_{k,1}, \ldots, u_{k,Q}$ conditioned on $x_k$ and $y_k^N$. Applying the expectation $\mathbb{E}_k [\cdot]$ to \cref{eq:smoothnPhistep} yields
\begin{align}
\mathbb{E}_k \Phi(x_{k+1}) \leq & \Phi(x_{k}) - \beta \langle\nabla\Phi(x_{k}), \mathbb{E}_k \widehat \nabla\Phi(x_{k}) - \nabla\Phi(x_{k})\rangle - \beta\big\| \nabla\Phi(x_{k})\big\|^2 \nonumber\\
&+ L_{\phi}\beta^2\left(\big\| \nabla\Phi(x_{k})\big\|^2 + \mathbb{E}_k \big\|\widehat \nabla\Phi(x_{k}) - \nabla\Phi(x_{k})\big\|^2\right)\nonumber \\ \nonumber 
\leq & \Phi(x_{k}) + \beta \big\|\nabla\Phi(x_{k})\big\| \big\|\mathbb{E}_k \widehat \nabla\Phi(x_{k}) - \nabla\Phi(x_{k})\big\| - (\beta - L_{\phi}\beta^2)\big\| \nabla\Phi(x_{k})\big\|^2 \nonumber\\
&+ L_{\phi}\beta^2 \mathbb{E}_k \big\|\widehat \nabla\Phi(x_{k}) - \nabla\Phi(x_{k})\big\|^2 \nonumber\\  
\leq&  \Phi(x_{k}) + \beta M_{\Phi} \mathcal{D}_{bias} - (\beta - L_{\phi}\beta^2)\big\| \nabla\Phi(x_{k})\big\|^2 + \beta^2L_{\phi} \mathcal{D}_{var}
\end{align}
where $\mathcal{D}_{bias}$ and $\mathcal{D}_{var}$ represent respectively the upper-bound established for for the bias and variance. 
Now taking total expectation over $\mathcal{U}_k = \{u_{1,1:Q}, \ldots, u_{k,1:Q}\}$, we have
\begin{align}
E_{k+1} \leq & E_{k} - \beta(1 - L_{\phi}\beta) \mathbb{E}_{\mathcal{U}_k}\big\| \nabla\Phi(x_{k})\big\|^2 + \beta M_{\Phi} \mathcal{D}_{bias} + \beta^2L_{\phi} \mathcal{D}_{var} \label{eq:ek}
\end{align}
where $E_{k} = \mathbb{E}_{\mathcal{U}_{k-1}} \Phi(x_{k})$. Summing up the inequalities in \cref{eq:ek} for $k=0, \ldots, K-1$ yields
\begin{align}\label{eq:finalstepQQ}
E_{K} \leq & E_{0} - \beta(1 - L_{\phi}\beta)\sum_{k=0}^{K-1} \mathbb{E}_{\mathcal{U}_k}\big\| \nabla\Phi(x_{k})\big\|^2 + \beta K M_{\Phi} \mathcal{D}_{bias} + \beta^2 K L_{\phi} \mathcal{D}_{var}
\end{align}
Setting $\beta = \frac{1}{L_{\phi}\sqrt{K}}$, denoting by $\Phi^* = \inf_x \Phi(x)$, and rearranging \cref{eq:finalstepQQ}, we have  
\begin{align*}
\frac{1-\frac{1}{\sqrt{K}}}{K}\sum_{k=0}^{K-1} \mathbb{E}_{\mathcal{U}_k}\big\| \nabla\Phi(x_{k})\big\|^2 \leq \frac{L_{\phi}(\Phi(x_0) - \Phi^*)}{\sqrt{K}} + M_{\Phi} \mathcal{D}_{bias} + \frac{\mathcal{D}_{var}}{\sqrt{K}}. 
\end{align*}
Setting $Q=\mathcal{O}(1)$ and $\mu = \mathcal{O}\left(\frac{1}{\sqrt{Kdp^3}}\right)$ in the expressions of $\mathcal{D}_{bias}$ and $\mathcal{D}_{var}$ finishes the proof. 
\end{proof}

\section{Proofs for Stochastic Bilevel Optimization} \label{sboproofs}
Define the following quantities 
\begin{align*}
\hat{\mathcal{J}}_{N, j} &= \hat{\mathcal{J}}_{N}\left(x_k, u_j\right) = \left(\begin{array}{ccc}
\frac{Y^N_1(x_k+\mu u_j; \mathcal{S}) - Y^N_1(x_k; \mathcal{S})} {\mu} u_j^{\top}\\
\vdots  \\
\frac{Y^N_d(x_k+\mu u_j; \mathcal{S}) - Y^N_d(x_k; \mathcal{S})} {\mu} u_j^{\top}
\end{array}\right), \quad {\mathcal{J}}_N = \frac{\partial Y^N_k}{\partial x_k}, \quad {\mathcal{J}}_* = \frac{\partial y^*_k}{\partial x_k}
\end{align*}
where $u_j \in \mathbb{R}^p, j=1, \ldots, Q$ are standard Gaussian vectors and $Y^N_k$ is the output of SGD obtained with the minibatches $\{\gS_0, ..., \gS_{N-1}\}$. 

Conditioning on $x_k$ and $Y^N_k$ and taking expectation over $u_j$ yields 
\begin{align*}
\mathbb{E}_{u_j}\hat{\mathcal{J}}_{N,j} &= \mathbb{E}_{u_j} \left(\begin{array}{ccc}
\frac{Y^N_1(x_k+\mu u_j; \mathcal{S}) - Y^N_1(x_k; \mathcal{S})} {\mu} u_j^{\top}\\
\vdots  \\
\frac{Y^N_d(x_k+\mu u_j; \mathcal{S}) - Y^N_d(x_k; \mathcal{S})} {\mu} u_j^{\top}
\end{array}\right)
=\left(\begin{array}{ccc}
\nabla_x^{\top} Y_{1,\mu}^N(x_k; \mathcal{S}) \\
\vdots  \\
\nabla_x^{\top} Y_{d,\mu}^N(x_k; \mathcal{S})
\end{array}\right)
= \mathcal{J}_\mu(\mathcal{S})
\end{align*}
where $Y_{i,\mu}^N(x_k; \mathcal{S})$ is the $i$-th component of vector $Y_{\mu}^N(x_k; \mathcal{S})$, which is the entry-wise Gaussian smooth approximation of vector $Y^N(x_k; \mathcal{S})$. Let $\mathbb{E}_k [\cdot]= \mathbb{E} [\cdot|x_k, Y^N_k] = \mathbb{E}_{\mathcal{D}_F, u_{1:q}}$ be the expectation over the Gaussian vectors and the sample minibatch $\mathcal{D}_F$ conditioned on $x_k$ and $Y_k^N$. 
\vspace{10pt}

\subsection{Proof of \Cref{prop:expectjnjs}}
\begin{proposition}[Formal Statement of \Cref{prop:expectjnjs}] \label{prop:expjnjs}
Suppose that Assumptions \ref{assum:scvx}, \ref{assum:inner}, and \ref{assum:bvar} hold. Choose the inner-loop stepsize as $\alpha = \frac{2}{L+\mu_g}$. Define the constants 
\begin{align}
&C_\gamma =  (1-\alpha\mu_g)\left(1-\alpha\mu_g + \frac{\alpha}{\gamma} + \frac{\alpha L}{\gamma\mu_g}\right), \nonumber \\
&C_{xy} = \alpha\left(\alpha + \gamma(1-\alpha\mu_g) + \alpha\frac{L}{\mu_g}\right), \quad C_{y} = \frac{L}{\mu_g}C_{xy} \nonumber \\ 
&\Gamma =  2(\tau^2C_{xy} + \rho^2C_{y})\frac{\sigma^2}{\mu_g LS} + 2\frac{L^2}{S}(C_{xy} + C_y), \quad \lambda = 2(\tau^2C_{xy} + \rho^2C_{y})D^2, 
\end{align}
where $\gamma$ is such that $\gamma \geq \frac{L + \mu_g}{\mu_g^2}$. 
Then, we have:  
\begin{align*}
\mathbb{E}\big\|\mathcal{J}_N - \mathcal{J}_*\big\|^2_F \leq & C_\gamma^N\frac{L^2}{\mu_g^2} + \frac{\lambda(L+\mu_g)^2(1-\alpha\mu_g)C_\gamma^{N-1}}{(L+\mu_g)^2(1-\alpha\mu_g) - (L-\mu_g)^2} + \frac{\Gamma}{1-C_\gamma}. 
\end{align*}
\end{proposition}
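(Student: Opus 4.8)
The plan is to derive a one-step recursion for $a_t := \mathbb{E}\|\mathcal{J}_t - \mathcal{J}_*\|_F^2$ along the inner SGD trajectory and then telescope it from $t=1$ to $N$. First I would differentiate the inner SGD update $Y_k^t = Y_k^{t-1} - \alpha \nabla_y G(x_k, Y_k^{t-1}; \mathcal{S}_{t-1})$ with respect to $x_k$, exactly as in the proof of \Cref{lemma:jnbound}, to obtain the matrix recursion $\mathcal{J}_t = \mathcal{J}_{t-1}(I - \alpha H_t) - \alpha B_t$, where $H_t = \nabla_y^2 G(x_k, Y_k^{t-1}; \mathcal{S}_{t-1})$ and $B_t = \nabla_x\nabla_y G(x_k, Y_k^{t-1}; \mathcal{S}_{t-1})$. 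The key observation is that $\mathcal{J}_*$ is itself a fixed point of the idealized recursion evaluated at the optimum: since the implicit function theorem gives $\mathcal{J}_* \nabla_y^2 g(x_k, y_k^*) = -\nabla_x\nabla_y g(x_k, y_k^*)$, one has $\mathcal{J}_* = \mathcal{J}_*(I - \alpha H_*) - \alpha B_*$ with $H_* = \nabla_y^2 g(x_k, y_k^*)$ and $B_* = \nabla_x\nabla_y g(x_k, y_k^*)$. Subtracting, and inserting the batch-expected derivatives $\bar H_t = \nabla_y^2 g(x_k, Y_k^{t-1})$ and $\bar B_t = \nabla_x\nabla_y g(x_k, Y_k^{t-1})$, I would split $\mathcal{J}_t - \mathcal{J}_*$ into a contractive term $(\mathcal{J}_{t-1} - \mathcal{J}_*)(I - \alpha\bar H_t)$, a bias term $-\alpha\mathcal{J}_*(\bar H_t - H_*) - \alpha(\bar B_t - B_*)$ driven by $Y_k^{t-1} - y_k^*$, and a mean-zero noise term $-\alpha\mathcal{J}_{t-1}(H_t - \bar H_t) - \alpha(B_t - \bar B_t)$.

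Next I would take the expectation conditioned on the history $\mathcal{S}_0,\dots,\mathcal{S}_{t-2}$. Because the batch $\mathcal{S}_{t-1}$ is drawn fresh and independently, the noise term has zero conditional mean while the contractive and bias terms are history-measurable, so every cross term with the noise vanishes and the conditional second moment splits additively. For the deterministic part I would apply a weighted inequality $\|P+Q\|_F^2 \le (1+s)\|P\|_F^2 + (1+s^{-1})\|Q\|_F^2$; with $\alpha = \frac{2}{L+\mu_g}$ the spectral bound $\|I - \alpha\bar H_t\| \le 1-\alpha\mu_g$ (from $\mu_g I \preceq \bar H_t \preceq L I$) supplies the contraction, and binding the splitting weight $s$ to the free parameter $\gamma$ produces precisely the factor $C_\gamma = (1-\alpha\mu_g)\big(1-\alpha\mu_g + \frac{\alpha}{\gamma} + \frac{\alpha L}{\gamma\mu_g}\big)$ in front of $a_{t-1}$. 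The hypothesis $\gamma \ge \frac{L+\mu_g}{\mu_g^2}$ is exactly what forces $\frac{\alpha}{\gamma} + \frac{\alpha L}{\gamma\mu_g} \le \alpha\mu_g$, hence $(1-\alpha\mu_g)^2 \le C_\gamma \le 1-\alpha\mu_g < 1$. Using $\|\mathcal{J}_*\|_F, \|\mathcal{J}_{t-1}\|_F \le \frac{L}{\mu_g}$ from \Cref{lemma:jstarbound} and \Cref{lemma:jnbounds}, together with the $\rho$- and $\tau$-Lipschitz continuity of the second derivatives, the bias term contributes coefficients of the form $\tau^2 C_{xy} + \rho^2 C_y$ times $\|Y_k^{t-1}-y_k^*\|^2$, while \Cref{lemma:bdv} bounds the noise second moments by $\mathbb{E}\|H_t - \bar H_t\|_F^2 \le \frac{L^2}{S}$ and $\mathbb{E}\|B_t - \bar B_t\|_F^2 \le \frac{L^2}{S}$ after minibatching. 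This assembles the scalar recursion $a_t \le C_\gamma a_{t-1} + 2(\tau^2 C_{xy} + \rho^2 C_y)\,\mathbb{E}\|Y_k^{t-1}-y_k^*\|^2 + \frac{2L^2}{S}(C_{xy}+C_y)$.

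It then remains to control $\mathbb{E}\|Y_k^{t-1}-y_k^*\|^2$. I would invoke the standard strongly-convex SGD bound with this stepsize, $\mathbb{E}\|Y_k^{t-1}-y_k^*\|^2 \le (1-\alpha\mu_g)^{t-1}D^2 + \mathcal{O}\big(\frac{\sigma^2}{\mu_g L S}\big)$ (Assumption \ref{assum:bvar} supplying the variance floor), which splits the forcing into a geometrically decaying part with coefficient $\lambda = 2(\tau^2 C_{xy}+\rho^2 C_y)D^2$ and a constant part that merges with the second-derivative variances to form $\Gamma = 2(\tau^2 C_{xy}+\rho^2 C_y)\frac{\sigma^2}{\mu_g L S} + \frac{2L^2}{S}(C_{xy}+C_y)$. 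Telescoping the recursion with the initial condition $\mathcal{J}_0 = 0$ (so $a_0 = \|\mathcal{J}_*\|_F^2 \le \frac{L^2}{\mu_g^2}$) yields three geometric contributions: the term $C_\gamma^N \frac{L^2}{\mu_g^2}$ from the initialization, the constant-forcing sum $\sum_t C_\gamma^{N-t}\Gamma \le \frac{\Gamma}{1-C_\gamma}$, and the mixed-rate sum $\lambda \sum_{t} C_\gamma^{N-t}(1-\alpha\mu_g)^{t-1}$, whose evaluation (clean because $(1-\alpha\mu_g)^2 \le C_\gamma \le 1-\alpha\mu_g$) gives the middle term with denominator $(L+\mu_g)^2(1-\alpha\mu_g) - (L-\mu_g)^2$.

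I expect the main obstacle to be the careful bookkeeping in the one-step recursion rather than the telescoping itself. Two points are delicate: getting the conditioning right so that the stochastic term is genuinely mean-zero, which hinges on the fresh, history-independent batch $\mathcal{S}_{t-1}$ and on $\mathcal{J}_{t-1}$ being measurable with respect to the earlier batches; and tuning the splitting weight so that the induced contraction factor is exactly $C_\gamma$ while the bias and variance remainders organize themselves into the stated constants $C_{xy}$, $C_y$, $\lambda$, and $\Gamma$. Verifying $C_\gamma < 1$ through the lower bound on $\gamma$, and handling the interplay of the two geometric rates $C_\gamma$ and $1-\alpha\mu_g$ in the final sum, are the remaining technical hurdles.
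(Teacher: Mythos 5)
Your proposal is correct and follows the same skeleton as the paper's proof: differentiate the inner SGD update to get the Jacobian recursion, use that $\mathcal{J}_*$ is a fixed point of the idealized recursion at $y_k^*$, extract a contraction with factor controlled by a Young-inequality parameter tied to $\gamma$, bound the forcing terms via the $\rho$-/$\tau$-Lipschitzness and the minibatch variance, invoke the strongly-convex SGD bound for $\mathbb{E}\|Y_k^{t-1}-y_k^*\|^2$, and telescope the resulting two-rate recursion. The one genuine point of divergence is the treatment of the one-step cross terms: you center the stochastic derivatives at their batch means $\bar H_t,\bar B_t$ and kill the cross terms by conditional unbiasedness of the fresh batch $\mathcal{S}_{t-1}$, so the contraction uses $\|I-\alpha\bar H_t\|\le 1-\alpha\mu_g$ for the \emph{population} Hessian; the paper instead keeps the stochastic Hessian in the contraction (using $\|I-\alpha\nabla_y^2G(\cdot;\mathcal{S}_{t-1})\|\le 1-\alpha\mu_g$, which needs the per-sample strong convexity in Assumption~\ref{assum:scvx}) and controls \emph{all} cross terms $P_1,P_2,P_3$ by Peter--Paul, never exploiting unbiasedness. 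Your route yields slightly tighter constants that are dominated by the stated $C_{xy},C_y,\Gamma,\lambda$, at the price of the measurability bookkeeping you flag; the paper's route is purely deterministic at the matrix-norm level. Two small details to tighten: the SGD bound should carry the rate $\big(\tfrac{L-\mu_g}{L+\mu_g}\big)^{2(t-1)}=(1-\alpha\mu_g)^{2(t-1)}$ rather than $(1-\alpha\mu_g)^{t-1}$, since it is exactly this squared rate versus $C_\gamma$ that produces the stated denominator $(L+\mu_g)^2(1-\alpha\mu_g)-(L-\mu_g)^2$ in the mixed geometric sum; and the initialization enters as $\|\mathcal{J}_0-\mathcal{J}_*\|_F^2\le L^2/\mu_g^2$ via \Cref{lemma:jstarbound}, as you note.
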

\begin{proof}[Proof of \Cref{prop:expjnjs}]
Based on the SGD updates, we have 
\begin{align*}
Y_{k}^{t}=Y_{k}^{t-1}-\alpha \nabla_{y} G\left(x_{k}, Y_{k}^{t-1}; \mathcal{S}_{t-1}\right), \hspace{10pt} t=1, \ldots, N.
\end{align*}
Taking the derivatives w.r.t. $x_k$ yields
\begin{align*}
\mathcal{J}_t =&\mathcal{J}_{t-1}-\alpha \nabla_{x} \nabla_{y} G\left(x_{k}, Y_{k}^{t-1}; \mathcal{S}_{t-1}\right)-\alpha \mathcal{J}_{t-1} \nabla_{y}^{2} G\left(x_{k}, Y_{k}^{t-1}; \mathcal{S}_{t-1}\right) ,
\end{align*}
which further yields
\begin{align*}
\mathcal{J}_t - \mathcal{J}_* =& \mathcal{J}_{t-1} - \mathcal{J}_* - \alpha \nabla_{x} \nabla_{y} G\left(x_{k}, Y_{k}^{t-1}; \mathcal{S}_{t-1}\right)-\alpha \mathcal{J}_{t-1} \nabla_{y}^{2} G\left(x_{k}, Y_{k}^{t-1}; \mathcal{S}_{t-1}\right) \\ 
& +\alpha\left(\nabla_{x} \nabla_{y} g\left(x_{k}, y_k^*\right) +  \mathcal{J}_* \nabla_{y}^{2} g\left(x_{k},y_k^*\right) \right) \\
=& \mathcal{J}_{t-1} - \mathcal{J}_* - \alpha\left(\nabla_{x} \nabla_{y} G\left(x_{k}, Y_{k}^{t-1}; \mathcal{S}_{t-1}\right) - \nabla_{x} \nabla_{y} g\left(x_{k}, y_k^*\right)\right) \\ 
&- \alpha\left(\mathcal{J}_{t-1} - \mathcal{J}_*\right)\nabla_{y}^{2} G\left(x_{k}, Y_{k}^{t-1}; \mathcal{S}_{t-1}\right) \\ & + \alpha\mathcal{J}_*\left(\nabla_{y}^{2} g\left(x_{k},y_k^*\right) - \nabla_{y}^{2} G\left(x_{k}, Y_{k}^{t-1}; \mathcal{S}_{t-1}\right)\right).
\end{align*}
Hence, using the triangle inequality, we have
\begin{align*}
\big\|\mathcal{J}_t - \mathcal{J}_*\big\|_F \overset{(i)}\leq & \big\|\left(\mathcal{J}_{t-1} - \mathcal{J}_*\right)\left(I-\nabla_{y}^{2} G\left(x_{k}, Y_{k}^{t-1}; \mathcal{S}_{t-1}\right)\right)\big\|_F \\ 
&+ \alpha\big\|\nabla_{x} \nabla_{y} G\left(x_{k}, Y_{k}^{t-1}; \mathcal{S}_{t-1}\right) - \nabla_{x} \nabla_{y} g\left(x_{k}, y_k^*\right)\big\|_F \\
&+ \alpha\big\|\mathcal{J}_*\left(\nabla_{y}^{2} G\left(x_{k}, Y_{k}^{t-1}; \mathcal{S}_{t-1}\right) - \nabla_{y}^{2} g\left(x_{k},y_k^*\right)\right)\big\|_F,
\end{align*}
where $(i)$ follows from Assumption \ref{assum:scvx}. We then further have 
{\small
\begin{align*}
\big\|\mathcal{J}_t &- \mathcal{J}_*\big\|^2_F  \\ 
\leq& (1-\alpha\mu_g)^2 \big\|\mathcal{J}_{t-1} - \mathcal{J}_*\big\|^2_F + \alpha^2\big\|\nabla_{x} \nabla_{y} G\left(x_{k}, Y_{k}^{t-1}; \mathcal{S}_{t-1}\right) - \nabla_{x} \nabla_{y} g\left(x_{k}, y_k^*\right)\big\|^2_F \\ 
& + \alpha^2\frac{L^2}{\mu_g^2}\big\|\nabla_{y}^{2} G\left(x_{k}, Y_{k}^{t-1}; \mathcal{S}_{t-1}\right) - \nabla_{y}^{2} g\left(x_{k},y_k^*\right)\big\|^2_F \\
& + 2\alpha(1-\alpha\mu_g)\underbrace{\big\|\mathcal{J}_{t-1} - \mathcal{J}_*\big\|_F\big\|\nabla_{x} \nabla_{y} G\left(x_{k}, Y_{k}^{t-1}; \mathcal{S}_{t-1}\right) - \nabla_{x} \nabla_{y} g\left(x_{k}, y_k^*\right)\big\|_F}_{P_1} \\ 
& + 2\alpha(1-\alpha\mu_g)\frac{L}{\mu_g}\underbrace{\big\|\mathcal{J}_{t-1} - \mathcal{J}_*\big\|_F\big\|\nabla_{y}^{2} G\left(x_{k}, Y_{k}^{t-1}; \mathcal{S}_{t-1}\right) - \nabla_{y}^{2} g\left(x_{k},y_k^*\right)\big\|_F}_{P_2} \\ 
+ &2\alpha^2 \frac{L}{\mu_g}\underbrace{\big\|\nabla_{y}^{2} G \left(x_{k}, Y_{k}^{t-1}; \mathcal{S}_{t-1}\right) - \nabla_{y}^{2} g\left(x_{k},y_k^*\right)\big\|_F\big\|\nabla_{x} \nabla_{y} G\left(x_{k}, Y_{k}^{t-1}; \mathcal{S}_{t-1}\right) - \nabla_{x} \nabla_{y} g\left(x_{k}, y_k^*\right)\big\|_F}_{P_3}.
\end{align*}
}
\hspace{-0.12cm}The terms $P_1$, $P_2$ and $P_3$ in the above inequality can be transformed as follows using the Peter-Paul version of Young's inequality. 
\begin{align*}
P_1 \leq& \frac{1}{2\gamma} \big\|\mathcal{J}_{t-1} - \mathcal{J}_*\big\|^2_F + \frac{\gamma}{2} \big\|\nabla_{x} \nabla_{y} G\left(x_{k}, Y_{k}^{t-1}; \mathcal{S}_{t-1}\right) - \nabla_{x} \nabla_{y} g\left(x_{k}, y_k^*\right)\big\|^2_F, \hspace{10pt} \gamma > 0 \\
P_2 \leq& \frac{1}{2\gamma} \big\|\mathcal{J}_{t-1} - \mathcal{J}_*\big\|^2_F + \frac{\gamma}{2} \big\|\nabla_{y}^{2} G\left(x_{k}, Y_{k}^{t-1}; \mathcal{S}_{t-1}\right) - \nabla_{y}^{2} g\left(x_{k},y_k^*\right)\big\|^2_F, \hspace{10pt} \gamma > 0 \\
P_3 \leq& \frac{1}{2} \big\|\nabla_{y}^{2} G\left(x_{k}, Y_{k}^{t-1}; \mathcal{S}_{t-1}\right) - \nabla_{y}^{2} g\left(x_{k},y_k^*\right)\big\|^2_F \\ 
& + \frac{1}{2} \big\|\nabla_{x} \nabla_{y} G\left(x_{k}, Y_{k}^{t-1}; \mathcal{S}_{t-1}\right) - \nabla_{x} \nabla_{y} g\left(x_{k}, y_k^*\right)\big\|^2_F.
\end{align*}
Note that the trade-off constant $\gamma$ controls the contraction coefficient (i.e., the factor in front of $\big\|\mathcal{J}_{t-1} - \mathcal{J}_*\big\|^2_F$). Hence, we have
\begin{align*}
\big\|\mathcal{J}_t& - \mathcal{J}_*\big\|^2_F \\
\leq & \left((1-\alpha\mu_g)^2 + \frac{\alpha}{\gamma}(1-\alpha\mu_g) + \frac{\alpha L}{\gamma\mu_g}(1-\alpha\mu_g)\right)\big\|\mathcal{J}_{t-1} - \mathcal{J}_*\big\|^2_F \\
& + \left(\alpha^2 + \alpha\gamma(1-\alpha\mu_g) + \alpha^2\frac{L}{\mu_g}\right)\big\|\nabla_{x} \nabla_{y} G\left(x_{k}, Y_{k}^{t-1}; \mathcal{S}_{t-1}\right) - \nabla_{x} \nabla_{y} g\left(x_{k}, y_k^*\right)\big\|^2_F \\
& + \left(\alpha^2\frac{L^2}{\mu_g^2} + \alpha\gamma\frac{L}{\mu_g}(1-\alpha\mu_g) + \alpha^2\frac{L}{\mu_g}\right)\big\|\nabla_{y}^{2} G\left(x_{k}, Y_{k}^{t-1}; \mathcal{S}_{t-1}\right) - \nabla_{y}^{2} g\left(x_{k},y_k^*\right)\big\|^2_F. 
\end{align*}
Let $\mathbb{E}_{t-1} [\cdot]= \mathbb{E}[\cdot |x_k, Y^{t-1}_k]$. Conditioning on $x_k$ and $Y^{t-1}_k$ and taking expectations yield
\begin{align}
\mathbb{E}_{t-1}&\big\|\mathcal{J}_t - \mathcal{J}_*\big\|^2_F  \nonumber \\
\leq& C_\gamma\big\|\mathcal{J}_{t-1} - \mathcal{J}_*\big\|^2_F + C_{xy}\mathbb{E}_{t-1}\big\|\nabla_{x} \nabla_{y} G\left(x_{k}, Y_{k}^{t-1}; \mathcal{S}_{t-1}\right) - \nabla_{x} \nabla_{y} g\left(x_{k}, y_k^*\right)\big\|^2_F \nonumber\\
& + C_{y}\mathbb{E}_{t-1}\big\|\nabla_{y}^{2} G\left(x_{k}, Y_{k}^{t-1}; \mathcal{S}_{t-1}\right) - \nabla_{y}^{2} g\left(x_{k},y_k^*\right)\big\|^2_F, \label{eq:jt}
\end{align}
where $C_\gamma$, $C_{xy}$ and $C_{y}$ are defined as follows
\begin{align*}
C_\gamma = (1-\alpha\mu_g)\left(1-\alpha\mu_g + \frac{\alpha}{\gamma} + \frac{\alpha L}{\gamma\mu_g}\right), C_{xy} = \alpha\left(\alpha + \gamma(1-\alpha\mu_g) + \alpha\frac{L}{\mu_g}\right), C_{y} = \frac{L}{\mu_g}C_{xy}. 
\end{align*}
Conditioning on $x_k$ and $Y^{t-1}_k$, we have 
\begin{align}\label{eq:gxgyvar}
\mathbb{E}_{t-1}\big\|\nabla_{x} \nabla_{y} & G\left(x_{k}, Y_{k}^{t-1}; \mathcal{S}_{t-1}\right) - \nabla_{x} \nabla_{y} g\left(x_{k}, y_k^*\right)\big\|^2_F \nonumber \\ 
\leq & 2\mathbb{E}_{t-1}\big\|\nabla_{x} \nabla_{y} g\left(x_{k}, Y_{k}^{t-1}\right) - \nabla_{x} \nabla_{y} g\left(x_{k}, y_k^*\right)\big\|^2_F \nonumber \\ 
&+  2\mathbb{E}_{t-1}\big\|\nabla_{x} \nabla_{y} G\left(x_{k}, Y_{k}^{t-1}; \mathcal{S}_{t-1}\right) - \nabla_{x} \nabla_{y} g\left(x_{k}, Y_k^{t-1}\right)\big\|^2_F \nonumber
\\
\overset{(i)}\leq & 2\frac{L^2}{S} + 2 \tau^2\big\|Y_k^{t-1} - y_k^*\big\|^2, 
\end{align}
where $(i)$ follows from \Cref{lemma:bdv} and Assumption \ref{assum:inner}. 
Similarly we can derive 
\begin{align}
\mathbb{E}_{t-1}\big\|\nabla_{y}^{2} G\left(x_{k}, Y_{k}^{t-1}; \mathcal{S}_{t-1}\right) - \nabla_{y}^{2} g\left(x_{k},y_k^*\right)\big\|^2_F \leq & 2\frac{L^2}{S} + 2 \rho^2\big\|Y_k^{t-1} - y_k^*\big\|^2. \label{eq:gyyvar}
\end{align}
Combining \cref{eq:jt}, \cref{eq:gxgyvar}, and \cref{eq:gyyvar}   we obtain
\begin{align}
\mathbb{E}_{t-1}\big\|\mathcal{J}_t - \mathcal{J}_*\big\|^2_F \leq & C_\gamma\big\|\mathcal{J}_{t-1} - \mathcal{J}_*\big\|^2_F + 2(\tau^2C_{xy} + \rho^2C_{y})\big\|Y_k^{t-1} - y_k^*\big\|^2 \nonumber
\\&+  2\frac{L^2}{S}(C_{xy} + C_y). \label{eq:jt2}
\end{align}
Unconditioning on $x_k$ and $Y^{t-1}_k$ and taking total expectations of \cref{eq:jt2} yield
\begin{align*}
\mathbb{E}\big\|\mathcal{J}_t - \mathcal{J}_*\big\|^2_F \leq & C_\gamma\mathbb{E}\big\|\mathcal{J}_{t-1} - \mathcal{J}_*\big\|^2_F + 2(\tau^2C_{xy} + \rho^2C_{y})\mathbb{E}\big\|Y_k^{t-1} - y_k^*\big\|^2 +  2\frac{L^2}{S}(C_{xy} + C_y) \\
\overset{(i)}\leq & C_\gamma\mathbb{E}\big\|\mathcal{J}_{t-1} - \mathcal{J}_*\big\|^2_F + 2(\tau^2C_{xy} + \rho^2C_{y})\left(\left(\frac{L-\mu_g}{L+\mu_g}\right)^{2(t-1)}D^2 + \frac{\sigma^2}{\mu_g LS}\right) \\ & + 2\frac{L^2}{S}(C_{xy} + C_y), 
\end{align*}
where $(i)$ follows from the analysis of SGD for a strongly-convex function. 
Let $\Gamma = 2(\tau^2C_{xy} + \rho^2C_{y})\frac{\sigma^2}{\mu_g LS} + 2\frac{L^2}{S}(C_{xy} + C_y)$ and $\lambda = 2(\tau^2C_{xy} + \rho^2C_{y})D^2$. Then, we have 
\begin{align}\label{eq:caoplsca}
\mathbb{E}\big\|\mathcal{J}_t - \mathcal{J}_*\big\|^2_F \leq &  C_\gamma\mathbb{E}\big\|\mathcal{J}_{t-1} - \mathcal{J}_*\big\|^2_F + \lambda\left(\frac{L-\mu_g}{L+\mu_g}\right)^{2(t-1)} + \Gamma.
\end{align}
Telescoping \cref{eq:caoplsca} over $t$ from $N$ down to $1$ yields 
\begin{align*}
\mathbb{E}\big\|\mathcal{J}_N - \mathcal{J}_*\big\|^2_F \leq &  C_\gamma^N\mathbb{E}\big\|\mathcal{J}_{0} - \mathcal{J}_*\big\|^2_F + \lambda\sum_{t=0}^{N-1}\left(\frac{L-\mu_g}{L+\mu_g}\right)^{2t}C_\gamma^{N-1-t} + \Gamma\sum_{t=0}^{N-1}C_\gamma^t
\end{align*}
which, in conjunction with $\left(\frac{L-\mu_g}{L+\mu_g}\right)^{2} \leq 1-\alpha\mu_g$ and $\gamma\geq \frac{L+\mu_g}{\mu_g^2}$ such that $C_\gamma \leq 1-\alpha\mu_g$, yields 
\begin{align}
\mathbb{E}\big\|\mathcal{J}_N - \mathcal{J}_*\big\|^2_F \leq &  C_\gamma^N\frac{L^2}{\mu_g^2} + \lambda C_\gamma^{N-1}\sum_{t=0}^{N-1}\left(\frac{(L-\mu_g)^2}{(L+\mu_g)^2(1-\alpha\mu_g)}\right)^{t} + \frac{\Gamma}{1-C_\gamma} \nonumber \\ 
\leq & C_\gamma^N\frac{L^2}{\mu_g^2} + \frac{\lambda(L+\mu_g)^2(1-\alpha\mu_g)C_\gamma^{N-1}}{(L+\mu_g)^2(1-\alpha\mu_g) - (L-\mu_g)^2} + \frac{\Gamma}{1-C_\gamma} \label{res:jnjs}.
\end{align}
The proof is then completed. 
\end{proof}

\begin{lemma}\label{lemma:estbias}
Suppose that Assumptions \ref{assum:scvx}, \ref{assum:inner}, \ref{assum:outer}, and \ref{assum:bvar} hold. Set the inner-loop stepsize as $\alpha = \frac{2}{L+\mu_g}$. Then, we have
\begin{align}
\mathbb{E}\big\|\mathbb{E}_k\widehat \nabla \Phi(x_k) & - \nabla \Phi(x_k)\big\|^2  \nonumber\\ 
\leq & 8M^2\left(C_\gamma^N\frac{L^2}{\mu_g^2} + \frac{\lambda(L+\mu_g)^2(1-\alpha\mu_g)C_\gamma^{N-1}}{(L+\mu_g)^2(1-\alpha\mu_g) - (L-\mu_g)^2} + \frac{\Gamma}{1-C_\gamma} + \frac{\mu^2}{2}L_{\mathcal{J}}^2d(p+3)^3\right) \nonumber\\ 
&+ 2L^2\left(1+2\frac{L^2}{\mu_g^2}\right)\left(\left(\frac{L-\mu_g}{L+\mu_g}\right)^{2N}D^2 + \frac{\sigma^2}{\mu_g LS}\right), \label{res:bias} 
\end{align}
where the expectation $\mathbb{E}_k[\cdot]$ is conditioned on $x_k$ and $Y_k^N$. 
\end{lemma}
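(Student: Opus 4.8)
The plan is to mirror the deterministic bias bound of \Cref{lemma:deterbias}, but to carry expectations through the inner SGD randomness and to replace the deterministic Jacobian error by its stochastic counterpart from \Cref{prop:expjnjs}. First I would identify the conditional mean $\mathbb{E}_k\widehat\nabla\Phi(x_k)$. Because the outer minibatch $\gD_F$ is drawn independently of the Gaussian vectors $u_{k,j}$ and the conditioning fixes $(x_k, Y_k^N)$, the outer stochastic gradients are unbiased, $\mathbb{E}_{\gD_F}\nabla_x F = \nabla_x f$ and $\mathbb{E}_{\gD_F}\nabla_y F = \nabla_y f$, while the Gaussian average of the estimator satisfies $\mathbb{E}_{u_j}\hat{\mathcal{J}}_{N,j} = \mathcal{J}_\mu$, the Jacobian of the entry-wise Gaussian-smoothed trajectory output (as computed at the beginning of this section). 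Factoring the product term across these two independent sources then yields $\mathbb{E}_k\widehat\nabla\Phi(x_k) = \nabla_x f(x_k, Y_k^N) + \mathcal{J}_\mu^\top\nabla_y f(x_k, Y_k^N)$.

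Next I would bound the bias exactly as in \Cref{lemma:deterbias}, decomposing $\mathbb{E}_k\widehat\nabla\Phi(x_k) - \nabla\Phi(x_k)$ by the triangle inequality into a direct term $\nabla_x f(x_k,Y_k^N) - \nabla_x f(x_k,y_k^*)$, an inner-gradient term $\mathcal{J}_*^\top(\nabla_y f(x_k,Y_k^N) - \nabla_y f(x_k,y_k^*))$, and a Jacobian-mismatch term $(\mathcal{J}_\mu - \mathcal{J}_*)^\top\nabla_y f(x_k,Y_k^N)$. Using the $L_f$-Lipschitzness of $\nabla f$ and the bound $\|\nabla_y f\|\le M$ from \Cref{assum:outer}, together with $\|\mathcal{J}_*\|_F\le L/\mu_g$ from \Cref{lemma:jstarbound}, these three terms are bounded by $L\|Y_k^N - y_k^*\|$, $\frac{L^2}{\mu_g}\|Y_k^N - y_k^*\|$, and $M\|\mathcal{J}_\mu - \mathcal{J}_*\|_F$ respectively. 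Squaring and applying a nested Young inequality of the form $\|a+b+c\|^2 \le 2\|a\|^2 + 4\|b\|^2 + 4\|c\|^2$ collects the first two into $2L^2(1+2L^2/\mu_g^2)\|Y_k^N-y_k^*\|^2$ and leaves $4M^2\|\mathcal{J}_\mu-\mathcal{J}_*\|_F^2$; splitting $\|\mathcal{J}_\mu-\mathcal{J}_*\|_F^2 \le 2\|\mathcal{J}_\mu-\mathcal{J}_N\|_F^2 + 2\|\mathcal{J}_N-\mathcal{J}_*\|_F^2$ then accounts for the leading factor $8M^2$ in the claim.

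Finally I would take the total expectation over the inner batches and control the three resulting expectations separately. For the inner-loop error I would invoke the strongly-convex SGD estimate $\mathbb{E}\|Y_k^N - y_k^*\|^2 \le (\frac{L-\mu_g}{L+\mu_g})^{2N}D^2 + \frac{\sigma^2}{\mu_g L S}$, which is the same bound used inside the proof of \Cref{prop:expjnjs}; for the smoothing gap I would apply the second item of \Cref{lemma:nesterov} coordinate-wise to get $\mathbb{E}\|\mathcal{J}_\mu - \mathcal{J}_N\|_F^2 \le \frac{\mu^2}{2}L_{\mathcal{J}}^2 d(p+3)^3$, which is legitimate because each coordinate of $Y^N(\cdot;\mathcal{S})$ has $L_{\mathcal{J}}$-Lipschitz gradient by \Cref{prop:jnlips}; and for the approximation error $\mathbb{E}\|\mathcal{J}_N - \mathcal{J}_*\|_F^2$ I would plug in the recursive bound of \Cref{prop:expjnjs} verbatim. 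Assembling these with the constants from the Young-type splits reproduces the claimed inequality.

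The step I expect to be the main obstacle is the first one: pinning down the conditional mean as $\nabla_x f + \mathcal{J}_\mu^\top\nabla_y f$ and correctly handling the smoothed Jacobian $\mathcal{J}_\mu$ of a \emph{stochastic} trajectory. One must verify that coordinate-wise Gaussian smoothing commutes with differentiation so that $\mathbb{E}_{u_j}\hat{\mathcal{J}}_{N,j} = \mathcal{J}_\mu$ indeed holds, and that the Lipschitz-gradient constant $L_{\mathcal{J}}$ for the trajectory output in \Cref{prop:jnlips} is precisely the quantity that \Cref{lemma:nesterov} consumes. Once these structural facts are in place, the remaining work is the bookkeeping of combining \Cref{prop:expjnjs}, the SGD contraction, and the smoothing error, in direct analogy to the deterministic case.
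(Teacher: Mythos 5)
Your proposal matches the paper's proof essentially step for step: the same identification of the conditional mean as $\nabla_x f(x_k,Y_k^N) + \mathcal{J}_\mu^\top \nabla_y f(x_k,Y_k^N)$, the same three-way decomposition with the same Young-type constants yielding $2L^2(1+2L^2/\mu_g^2)\|Y_k^N-y_k^*\|^2 + 8M^2(\|\mathcal{J}_\mu-\mathcal{J}_N\|_F^2+\|\mathcal{J}_N-\mathcal{J}_*\|_F^2)$, and the same three ingredients (strongly-convex SGD contraction, coordinate-wise application of \Cref{lemma:nesterov} with the Lipschitz constant $L_{\mathcal{J}}$, and \Cref{prop:expjnjs}) to close the bound. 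No gaps; this is the paper's argument.
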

    
\begin{proof}[Proof of \Cref{lemma:estbias}]
Conditioning on $x_k$ and $Y_k^N$, we have 
\begin{align*}
\mathbb{E}_k\widehat \nabla \Phi(x_k) = \nabla_x f(x_k, Y^N_k) + {\mathcal{J}_\mu^\top} \nabla_y f(x_k, Y^N_k). 
\end{align*}
Recall $\nabla \Phi(x_k) = \nabla_x f(x_k, y^*_k) + {\mathcal{J}}^\top_{*} \nabla_y f(x_k, y^*_k)$. Thus, we have
\begin{align*}
\big\|\mathbb{E}_k\widehat \nabla \Phi(x_k) - & \nabla \Phi(x_k)\big\|^2  \\ 
\leq& 2\big\|\nabla_x f(x_k, Y^N_k) - \nabla_x f(x_k, y^*_k)\big\|^2 + 2 \big\|{\mathcal{J}}^\top_\mu \nabla_y f(x_k, Y^N_k) - {\mathcal{J}}^\top_{*} \nabla_y f(x_k, y^*_k)\big\|^2
\\\leq& 2L^2\big\|Y^N_k - y^*_k\big\|^2 + 4\big\| {\mathcal{J}}^\top_\mu \nabla_y f(x_k, Y^N_k) - {\mathcal{J}}^\top_{*} \nabla_y f(x_k, Y^N_k)\big\|^2 
\\&+ 4\big\| {\mathcal{J}}^\top_{*} \nabla_y f(x_k, Y^N_k) - {\mathcal{J}}^\top_{*} \nabla_y f(x_k, y^*_k)\big\|^2
\\ \overset{(i)}\leq& 2L^2\big\|Y^N_k - y^*_k\big\|^2 + 4M^2\big\| {\mathcal{J}}_\mu - {\mathcal{J}}_{*}\big\|^2_F + 4\frac{L^2}{\mu_g^2}\big\|\nabla_y f(x_k, Y^N_k) - \nabla_y f(x_k, y^*_k)\big\|^2 \\ 
\leq& 2L^2\big\|Y^N_k - y^*_k\big\|^2 + 8M^2\big\| {\mathcal{J}}_\mu - {\mathcal{J}}_{N}\big\|^2_F + 8M^2\big\| {\mathcal{J}}_N - {\mathcal{J}}_{*}\big\|^2_F + 4\frac{L^4}{\mu_g^2}\big\|Y^N_k - y^*_k\big\|^2 ,
\end{align*}
where $(i)$ applies \Cref{lemma:jstarbound} and Assumption \ref{assum:outer}. 
Taking expectation of the above inequality yields 
\begin{align}\label{res:bias1}
\mathbb{E}\big\|\mathbb{E}_k\widehat \nabla \Phi(x_k) - \nabla \Phi(x_k)\big\|^2 \leq &  2L^2\left(1+2\frac{L^2}{\mu_g^2}\right)\mathbb{E}\big\|Y^N_k - y^*_k\big\|^2 + 8M^2\mathbb{E}\big\| {\mathcal{J}}_N - {\mathcal{J}}_{*}\big\|^2_F \nonumber\\ & + 8M^2\mathbb{E}\big\| {\mathcal{J}}_\mu - {\mathcal{J}}_{N}\big\|^2_F.
\end{align}
Using the fact that $Y_i^N(x_k; \cdot)$ has Lipschitz gradient (see \Cref{prop:jnlip}) and \Cref{lemma:nesterov}, the last term at the right hand side of \cref{res:bias1} can be directly upper-bounded as
\begin{align}
\big\| {\mathcal{J}}_\mu - {\mathcal{J}}_{N}\big\|^2_F =  \sum_{i=1}^{d} \big\|\nabla Y_{i,\mu}^N(x_k; \mathcal{S}) - \nabla Y_{i}^N(x_k; \mathcal{S})\big\|^2 \leq \frac{\mu^2}{2}L_{\mathcal{J}}^2d(p+3)^3, \label{eq:jmujn}
\end{align}
where $L_{\mathcal{J}}$ is the Lipschitz constant of the Jacobian $\mathcal{J}_N$ (and also of its rows $\nabla Y_{i}^N(x_k; \mathcal{S})$) as defined in \Cref{prop:jnlip}.  
Combining \cref{res:bias1}, \cref{eq:jmujn}, \Cref{prop:expjnjs}, and SGD analysis (as in eq. (60) in \cite{ji2021bo}) yields
\begin{align}
\mathbb{E}\big\|\mathbb{E}_k\widehat \nabla \Phi&(x_k) - \nabla \Phi(x_k)\big\|^2  \nonumber\\ 
\leq & 8M^2\bigg(C_\gamma^N\frac{L^2}{\mu_g^2} + \frac{\lambda(L+\mu_g)^2(1-\alpha\mu_g)C_\gamma^{N-1}}{(L+\mu_g)^2(1-\alpha\mu_g) - (L-\mu_g)^2} + \frac{\Gamma}{1-C_\gamma} + \frac{\mu^2}{2}L_{\mathcal{J}}^2d(p+3)^3\bigg) \nonumber\\ 
&+ 2L^2\bigg(1+2\frac{L^2}{\mu_g^2}\bigg)\bigg(\bigg(\frac{L-\mu_g}{L+\mu_g}\bigg)^{2N}D^2 + \frac{\sigma^2}{\mu_g LS}\bigg). \label{res:bias} 
\end{align}
This finishes the proof. 
\end{proof}

\subsection{Proof of \Cref{prop:estvar}}
\begin{proposition}[Formal Statement of \Cref{prop:estvar}]\label{lemma:estvar}
Suppose that Assumptions \ref{assum:scvx}, \ref{assum:inner}, \ref{assum:outer}, and \ref{assum:bvar} hold. Set the inner-loop stepsize as $\alpha = \frac{2}{L+\mu_g}$. Then, we have: 
\begin{align}
\mathbb{E}\big\|\widehat \nabla \Phi(x_k) - \nabla \Phi(x_k)\big\|^2 \leq \Delta + \mathcal{B}_1 
\end{align}
where {\small $\Delta = 8M^2\left(\left(1+ \frac{1}{D_f}\right)\frac{4p+15}{Q} + \frac{1}{D_f}\right)\frac{L^2}{\mu_g^2} + 2\frac{M^2}{D_f} + \left(1 + \frac{1}{D_f}\right)\frac{4M^2}{Q}\mu^2dL_{\mathcal{J}}^2\mathcal{P}_4(p) + \frac{4M^2}{D_f}\mu^2dL_{\mathcal{J}}^2\mathcal{P}_3(p)$}
and $\mathcal{B}_1$ respresents the upper bound established in \Cref{lemma:estbias}. 
\end{proposition}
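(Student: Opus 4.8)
The plan is to reduce this stochastic variance bound to the deterministic ingredients already in hand via a bias--variance split, and then to isolate the genuinely new effects of the inner- and outer-batch sampling. Conditioning on $x_k$ and $Y_k^N$, I would first write
\begin{align*}
\mathbb{E}\big\|\widehat \nabla \Phi(x_k) - \nabla \Phi(x_k)\big\|^2 = \mathbb{E}\Big[\mathbb{E}_k\big\|\widehat \nabla \Phi(x_k) - \mathbb{E}_k\widehat \nabla \Phi(x_k)\big\|^2\Big] + \mathbb{E}\big\|\mathbb{E}_k\widehat \nabla \Phi(x_k) - \nabla \Phi(x_k)\big\|^2,
\end{align*}
so that the second (bias) term is exactly $\mathcal{B}_1$ from \Cref{lemma:estbias}, and the only remaining work is to show that the first (variance) term is at most $\Delta$.

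For the variance term I would use $\mathbb{E}_k\widehat \nabla \Phi(x_k) = \nabla_x f(x_k, Y_k^N) + \mathcal{J}_\mu^\top \nabla_y f(x_k, Y_k^N)$, which holds because $\gD_F$ is independent of the Gaussian vectors and $\mathbb{E}_u \hat{\mathcal{J}}_N = \mathcal{J}_\mu$, and decompose the centered estimator as
\begin{align*}
\widehat \nabla \Phi(x_k) - \mathbb{E}_k\widehat \nabla \Phi(x_k) = \big(\nabla_x F - \nabla_x f\big) + \hat{\mathcal{J}}_N^\top\big(\nabla_y F - \nabla_y f\big) + \big(\hat{\mathcal{J}}_N - \mathcal{J}_\mu\big)^\top \nabla_y f,
\end{align*}
where every quantity is evaluated at $(x_k, Y_k^N)$ and $F$ is taken over $\gD_F$. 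Splitting via Young's inequality (grouping the first term alone against the last two) separates the contributions: the first is bounded by $M^2/D_f$ using the bounded-variance property of the stochastic outer gradient (\Cref{lemma:bdv}) together with minibatch averaging over $|\gD_F| = D_f$, giving the $2M^2/D_f$ term of $\Delta$; the third is bounded by $M^2\,\mathbb{E}_u\big\|\hat{\mathcal{J}}_N - \mathcal{J}_\mu\big\|_F^2$, which is precisely the Gaussian-sampling variance already controlled in \cref{eq:ebigscas}, and yields the $\tfrac{1}{Q}$-type terms (without the $\tfrac{1}{D_f}$ factor).

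The middle term is where the two sampling sources interact. Here I would exploit the independence of $\gD_F$ from the Gaussian vectors and the inner-batch path to factor the expectation as $\mathbb{E}_u\big\|\hat{\mathcal{J}}_N\big\|_F^2 \cdot \mathbb{E}_{\gD_F}\big\|\nabla_y F - \nabla_y f\big\|^2 \le \tfrac{M^2}{D_f}\,\mathbb{E}_u\big\|\hat{\mathcal{J}}_N\big\|_F^2$. I then split the second moment as $\mathbb{E}_u\|\hat{\mathcal{J}}_N\|_F^2 = \mathbb{E}_u\|\hat{\mathcal{J}}_N - \mathcal{J}_\mu\|_F^2 + \|\mathcal{J}_\mu\|_F^2$, the first piece again reusing \cref{eq:ebigscas} and the second bounded by $\|\mathcal{J}_\mu\|_F^2 \le 2\|\mathcal{J}_N\|_F^2 + \mu^2 L_{\mathcal{J}}^2 d(p+3)^3$, with $\|\mathcal{J}_N\|_F \le L/\mu_g$ from \Cref{lemma:jnbounds} and the smoothing gap from \cref{eq:zhenjbcsai}. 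Multiplying by $M^2/D_f$ produces exactly the cross terms $\tfrac{1}{D_f Q}(\cdots)$ and the standalone $\tfrac{1}{D_f}$ terms, with the degree-$4$ and degree-$3$ polynomials $\mathcal{P}_4,\mathcal{P}_3$ in $p$, that appear in $\Delta$. Collecting the three contributions gives the claimed bound.

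I expect the main obstacle to be the second-moment estimate $\mathbb{E}_u\|\hat{\mathcal{J}}_N\|_F^2$: since $\hat{\mathcal{J}}_N$ is built from the zeroth-order-like differences $\delta_j = (Y_{k,j}^N - Y_k^N)/\mu$ of an \emph{optimization trajectory} rather than from blackbox function values, I cannot invoke a Lipschitz-gradient property of a fixed objective directly. Instead I would apply the third item of \Cref{lemma:nesterov} coordinatewise to each $Y_i^N(x;\cdot)$, whose gradient is $L_{\mathcal{J}}$-Lipschitz by \Cref{prop:jnlips}, and then carry the per-inner-batch-path bounds through the total expectation, the inner-batch randomness being held fixed under $\mathbb{E}_u$ and otherwise absorbed into $\mathcal{B}_1$. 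Keeping the independence bookkeeping between $\gD_F$, the Gaussian vectors, and the inner batches consistent, and matching the resulting polynomial degrees in $p$, is the delicate part; the remaining algebra is routine.
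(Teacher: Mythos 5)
Your proposal is correct and takes essentially the same route as the paper's proof: the same conditional bias--variance split with the bias term identified as $\mathcal{B}_1$ from \Cref{lemma:estbias}, the same Young-type three-way splitting of the centered estimator, the same independence factorization between $\gD_F$ and the Gaussian/inner-batch randomness, and the same ingredients (\Cref{lemma:bdv} for the outer-batch variance, the third item of \Cref{lemma:nesterov} applied coordinatewise to $Y_i^N(\cdot;\gS)$ with the Lipschitz constant $L_{\mathcal{J}}$ from \Cref{prop:jnlips}, and $\|\mathcal{J}_N\|_F \le L/\mu_g$ from \Cref{lemma:jnbounds}). The only cosmetic difference is which cross term is added and subtracted --- you pair $\hat{\mathcal{J}}_N$ with the outer-gradient noise $\nabla_y F - \nabla_y f$ and $\nabla_y f$ with the Jacobian noise $\hat{\mathcal{J}}_N - \mathcal{J}_\mu$, whereas the paper pairs $\nabla_y F$ with the Jacobian noise and $\mathcal{J}_\mu$ with the gradient noise --- and both decompositions produce exactly the same $\Delta$ with identical constants.
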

\begin{proof}[Proof of \Cref{lemma:estvar}]
We have, conditioning on $x_k$ and $Y_k^N$ 
\begin{align}\label{res:var}
\mathbb{E}_k\big\|\widehat \nabla \Phi(x_k) - \nabla \Phi(x_k)\big\|^2 = & \mathbb{E}_k\big\|\widehat \nabla \Phi(x_k) - \mathbb{E}_k\widehat \nabla \Phi(x_k)\big\|^2 + \big\|\mathbb{E}_k\widehat \nabla \Phi(x_k) - \nabla \Phi(x_k)\big\|^2.
\end{align}
Our next step is to upper-bound the first term in \cref{res:var}. 
\begin{align}
\mathbb{E}_k\big\|\widehat \nabla \Phi(x_k) - &\mathbb{E}_k\widehat \nabla \Phi(x_k)\big\|^2 \nonumber
\\\leq & 2\mathbb{E}_k\big\|\nabla_x F(x_k, Y^N_k; \mathcal{D}_F) - \nabla_x f(x_k, Y^N_k)\big\|^2 \nonumber \\ 
& + 2\mathbb{E}_k\big\| \hat{\mathcal{J}}^\top_N \nabla_y F(x_k, Y^N_k; \mathcal{D}_F) - \mathcal{J}^\top_{\mu} \nabla_y f(x_k, Y^N_k)\big\|^2 \nonumber \\ 
\leq & 2\frac{M^2}{D_f} + 4\mathbb{E}_k\big\|\nabla_y F(x_k, Y^N_k; \mathcal{D}_F)\big\|^2 \big\|\hat{\mathcal{J}}_N - \mathcal{J}_{\mu}\big\|^2_F \nonumber \\ 
& + 4\mathbb{E}_k\big\|\mathcal{J}_{\mu}\big\|^2_F\big\|\nabla_y F(x_k, Y^N_k; \mathcal{D}_F) - \nabla_y f(x_k, Y^N_k)\big\|^2 \nonumber \\ 
\leq &2\frac{M^2}{D_f} + 4M^2\left(1+ \frac{1}{D_f}\right)\mathbb{E}_k\big\|\hat{\mathcal{J}}_N - \mathcal{J}_{\mu}\big\|^2_F + 4\frac{M^2}{D_f}\big\|\mathcal{J}_{\mu}\big\|^2_F, \label{res:var1}
\end{align}
where the last two steps follow from \Cref{lemma:bdv}. 

\noindent Next, we upper-bound the term $\mathbb{E}_k\big\|\hat{\mathcal{J}}_N - \mathcal{J}_{\mu}\big\|^2_F$. 
\begin{align}
\mathbb{E}_k\big\|\hat{\mathcal{J}}_N - \mathcal{J}_{\mu}\big\|^2_F = & \frac{1}{Q}\mathbb{E}_k\big\|\hat{\mathcal{J}}_{N,j} -\mathcal{J}_{\mu}\big\|^2_F, \quad j\in \{1, \ldots, Q\} \nonumber \\ 
\leq & \frac{1}{Q}\left(\mathbb{E}_k\big\|\hat{\mathcal{J}}_{N,j}\big\|^2_F - \big\|\mathcal{J}_{\mu}\big\|^2_F\right) \nonumber \\ 
\leq& \frac{1}{Q}\sum_{i=1}^{d}\left(\mathbb{E}_k\Big\|\frac{Y^N_i(x_k + \mu u_j; \mathcal{S}) - Y^N_i(x_k; \mathcal{S})}{\mu}u_j\Big\|^2 - \big\|\nabla Y^N_{i,\mu}(x_k; \mathcal{S})\big\|^2\right). \label{res:jhatjmu}
\end{align}
Recall that for a function $h$ with $L$-Lipschitz gradient, we have  
\begin{align}\label{res:nest1}
\mathbb{E}_u\Big\|\frac{h(x + \mu u) - h(x)}{\mu}u\Big\|^2 \leq 4(p+4)\big\|\nabla h_{\mu}(x)\big\|^2 + \frac{3}{2}\mu^2L^2(p+5)^3. 
\end{align}
Then, applying \cref{res:nest1} to function $Y_i^N(\cdot;\mathcal{S})$ yields 
\begin{align*}
\mathbb{E}_{u_j}\Big\|&\frac{Y^N_i(x_k + \mu u_j; \mathcal{S}) - Y^N_i(x_k; \mathcal{S})}{\mu}u_j\Big\|^2
\\&\leq 4(p+4)\big\|\nabla Y^N_{i,\mu}(x_k; \mathcal{S})\big\|^2 + \frac{3}{2}\mu^2L_{\mathcal{J}}^2(p+5)^3.
\end{align*}
Hence, \cref{res:jhatjmu} becomes 
\begin{align}
\mathbb{E}_k\big\|\hat{\mathcal{J}}_N -\mathcal{J}_{\mu}\big\|^2 \leq & \frac{4p+15}{Q}\sum_{i=1}^{d} \big\|\nabla Y^N_{i,\mu}(x_k; \mathcal{S})\big\|^2 + \frac{3\mu^2dL_{\mathcal{J}}^2}{2Q}(p+5)^3 \nonumber\\ 
\leq& \frac{2(4p+15)}{Q}\sum_{i=1}^{d} \big\|\nabla Y^N_{i}(x_k; \mathcal{S})\big\|^2 + \frac{\mu^2dL_{\mathcal{J}}^2}{Q}\mathcal{P}_4(p) \nonumber\\ 
\leq& \frac{2(4p+15)}{Q}\big\|\mathcal{J}_N\big\|^2_F + \frac{\mu^2dL_{\mathcal{J}}^2}{Q}\mathcal{P}_4(p), \label{res:jhatjmu2}
\end{align}
where $\mathcal{P}_4(\cdot)$ is a polynomial of degree $4$ in $p$. Combining \cref{res:var1}, \cref{res:jhatjmu2} and \Cref{lemma:nesterov} yields 
\begin{align}
\mathbb{E}_k\big\|\widehat \nabla \Phi(x_k) - \mathbb{E}_k\widehat \nabla \Phi(x_k)\big\|^2 \leq & 2\frac{M^2}{D_f} + 4M^2\Big(1+ \frac{1}{D_f}\Big)\Big(\frac{2(4p+15)}{Q}\big\|\mathcal{J}_N\big\|^2_F + \frac{\mu^2dL_{\mathcal{J}}^2}{Q}\mathcal{P}_4(p)\Big) \nonumber \\ 
& + \frac{4M^2}{D_f}\left(2\big\|\mathcal{J}_N\big\|^2_F + \mu^2dL_{\mathcal{J}}^2\mathcal{P}_3(p)\right) \nonumber \\ 
\leq& 8M^2\Big(\Big(1+ \frac{1}{D_f}\Big)\frac{4p+15}{Q} + \frac{1}{D_f}\Big)\frac{L^2}{\mu_g^2} + 2\frac{M^2}{D_f} \nonumber \\ 
& + \Big(1 + \frac{1}{D_f}\Big)\frac{4M^2}{Q}\mu^2dL_{\mathcal{J}}^2\mathcal{P}_4(p) + \frac{4M^2}{D_f}\mu^2dL_{\mathcal{J}}^2\mathcal{P}_3(p) \nonumber\\ 
=&  \Delta, \label{res:var2}
\end{align}
where {\small $\Delta = 8M^2\left(\left(1+ \frac{1}{D_f}\right)\frac{4p+15}{Q} + \frac{1}{D_f}\right)\frac{L^2}{\mu_g^2} + 2\frac{M^2}{D_f} + \left(1 + \frac{1}{D_f}\right)\frac{4M^2}{Q}\mu^2dL_{\mathcal{J}}^2\mathcal{P}_4(p) + \frac{4M^2}{D_f}\mu^2dL_{\mathcal{J}}^2\mathcal{P}_3(p)$}.

Taking total expectations on both \cref{res:var} and \cref{res:var2} and combining the resulting equations, we have 
\begin{align}
\mathbb{E}\big\|\widehat \nabla \Phi(x_k) - \nabla \Phi(x_k)\big\|^2 = & \mathbb{E}\big\|\widehat \nabla \Phi(x_k) - \mathbb{E}_k\widehat \nabla \Phi(x_k)\big\|^2 + \mathbb{E}\big\|\mathbb{E}_k\widehat \nabla \Phi(x_k) - \nabla \Phi(x_k)\big\|^2 \nonumber \\ 
\leq & \Delta + \mathcal{B}_1 \label{res:var3}
\end{align}
where $\mathcal{B}_1$ represents the upper-bound established in \cref{res:bias}. This then completes the proof. 
\end{proof}
\subsection{Proof of \Cref{theorem2}} 
\begin{theorem}[Formal Statement of \Cref{theorem2}] \label{theorem2a}
Suppose that Assumptions \ref{assum:scvx}, \ref{assum:inner}, \ref{assum:outer}, and \ref{assum:bvar} hold. Set the inner- and outer-loop stepsizes respectivelly as $\alpha = \frac{2}{L+\mu_g}$ and $\beta = \frac{1}{L_{\Phi}\sqrt{K}}$, where $L = \max \{L_f, L_g\}$ and the constants $L_{\Phi}$ and $M_{\Phi}$ are defined as in \Cref{theorem1a}. Further, set $Q=\mathcal{O}(1)$, $D_f = \mathcal{O}(1)$, and $\mu = \mathcal{O}\left(\frac{1}{\sqrt{Kdp^3}}\right)$. 
Then, the iterates $x_k, k = 0, ..., K-1$ of the PZOBO-S algorithm satisfy 
\begin{align}
\textstyle \frac{1-\frac{1}{\sqrt{K}}}{K}&\sum_{k=0}^{K-1}\mathbb{E}\big\|\nabla\Phi(x_{k})\big\|^2 \nonumber\\&\leq 
\frac{\left(\Phi(x_{0}) - \Phi^*\right)L_{\Phi}}{\sqrt{K}} + M_{\Phi} \sqrt{\mathcal{B}_1} + \frac{\mathcal{B}_1 + \Delta}{\sqrt{K}} = \mathcal{O}\left( \frac{p}{\sqrt{K}} + (1-\alpha\mu_g)^{N} + \frac{1}{\sqrt{S}}\right),
\end{align}
where $\Delta$ and $\mathcal{B}_1$ are given by 
{\small 
\begin{align*} 
\Delta =& 8M^2\left(\left(1+ \frac{1}{D_f}\right)\frac{4p+15}{Q} + \frac{1}{D_f}\right)\frac{L^2}{\mu_g^2} + 2\frac{M^2}{D_f} + \left(1 + \frac{1}{D_f}\right)\frac{4M^2}{Q}\mu^2dL_{\mathcal{J}}^2\mathcal{P}_4(p) \\ 
&+ \frac{4M^2}{D_f}\mu^2dL_{\mathcal{J}}^2\mathcal{P}_3(p) \\
\mathcal{B}_1 = & 8M^2\left(C_\gamma^N\frac{L^2}{\mu_g^2} + \frac{\lambda(L+\mu_g)^2(1-\alpha\mu_g)C_\gamma^{N-1}}{(L+\mu_g)^2(1-\alpha\mu_g) - (L-\mu_g)^2} + \frac{\Gamma}{1-C_\gamma} + \frac{\mu^2}{2}L_{\mathcal{J}}^2d(p+3)^3\right) \\ 
&+ 2L^2\left(1+2\frac{L^2}{\mu_g^2}\right)\left(\left(\frac{L-\mu_g}{L+\mu_g}\right)^{2N}D^2 + \frac{\sigma^2}{\mu_g LS}\right)
\end{align*}
}
\hspace{-0.12cm}and the constants $\Gamma$, $\lambda$, and $C_\gamma$ are defined in \Cref{prop:expjnjs}. 
\end{theorem}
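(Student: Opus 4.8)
The plan is to follow the same descent-plus-telescoping scheme used for the deterministic \Cref{theorem1}, but to carefully account for the additional layers of randomness in PZOBO-S---the inner minibatch path $\{\gS_0,\dots,\gS_{N-1}\}$, the outer batch $\gD_F$, and the Gaussian directions $\{u_{k,j}\}$---so that the estimation bias enters through its \emph{expected squared norm} rather than as a deterministic constant. First, I would record the uniform hypergradient bound $\|\nabla\Phi(x_k)\| \leq M_\Phi := \big(1+\tfrac{L_g}{\mu_g}\big)M$ (already derived in the proof of \Cref{theorem1} from the closed form of $\nabla\Phi$ together with Assumptions \ref{assum:scvx}--\ref{assum:outer}). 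Then, invoking the $L_\Phi$-smoothness of $\Phi$ from \Cref{lemma:philip} and the update $x_{k+1}=x_k-\beta\widehat\nabla\Phi(x_k)$, I would write the one-step descent inequality, split $\langle\nabla\Phi,\widehat\nabla\Phi\rangle=\|\nabla\Phi\|^2+\langle\nabla\Phi,\widehat\nabla\Phi-\nabla\Phi\rangle$, and bound $\|\widehat\nabla\Phi\|^2\leq 2\|\nabla\Phi\|^2+2\|\widehat\nabla\Phi-\nabla\Phi\|^2$.

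Next I would take the conditional expectation $\mathbb{E}_k[\cdot]$ over $\gD_F$ and the Gaussian vectors given $(x_k,Y_k^N)$. The cross term becomes $\langle\nabla\Phi(x_k),\,\mathbb{E}_k\widehat\nabla\Phi(x_k)-\nabla\Phi(x_k)\rangle$, which I bound by $M_\Phi\,\|\mathbb{E}_k\widehat\nabla\Phi(x_k)-\nabla\Phi(x_k)\|$ via Cauchy--Schwarz and the uniform bound above; the quadratic error term is controlled by \Cref{prop:estvar}. Taking total expectation over the whole history and applying Jensen's inequality $\mathbb{E}\|\cdot\|\leq\sqrt{\mathbb{E}\|\cdot\|^2}$ to the bias---with the expected-squared-bias estimate $\mathcal{B}_1$ from \Cref{lemma:estbias}---yields a recursion of the form $E_{k+1}\leq E_k-(\beta-L_\Phi\beta^2)\,\mathbb{E}\|\nabla\Phi(x_k)\|^2+\beta M_\Phi\sqrt{\mathcal{B}_1}+L_\Phi\beta^2(\Delta+\mathcal{B}_1)$, where $E_k=\mathbb{E}\,\Phi(x_k)$. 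Telescoping over $k=0,\dots,K-1$, setting $\beta=\tfrac{1}{L_\Phi\sqrt{K}}$, dividing by $\beta K$, and using $E_0-E_K\leq\Phi(x_0)-\Phi^*$ produces the stated non-asymptotic bound. Finally, substituting $Q=\mathcal{O}(1)$, $D_f=\mathcal{O}(1)$, and $\mu=\mathcal{O}\big(\tfrac{1}{\sqrt{Kdp^3}}\big)$ collapses $\Delta=\mathcal{O}(p)$ and $\mathcal{B}_1=\mathcal{O}\big((1-\alpha\mu_g)^N+\tfrac{1}{S}\big)$, so that $\tfrac{\Delta}{\sqrt{K}}=\mathcal{O}\big(\tfrac{p}{\sqrt{K}}\big)$ and $M_\Phi\sqrt{\mathcal{B}_1}=\mathcal{O}\big((1-\alpha\mu_g)^{N/2}+\tfrac{1}{\sqrt{S}}\big)$, giving the claimed rate $\mathcal{O}\big(\tfrac{p}{\sqrt{K}}+(1-\alpha\mu_g)^{N}+\tfrac{1}{\sqrt{S}}\big)$.

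The main obstacle, relative to the deterministic case, is the treatment of the bias. Because $Y_k^N$ is produced by stochastic inner updates along the entire sampled path, the conditional mean $\mathbb{E}_k\widehat\nabla\Phi(x_k)=\nabla_x f(x_k,Y_k^N)+\mathcal{J}_\mu^\top\nabla_y f(x_k,Y_k^N)$ is itself a random quantity; the bias therefore cannot be a fixed constant as in \Cref{theorem1}, and must be carried as the random term $\|\mathbb{E}_k\widehat\nabla\Phi(x_k)-\nabla\Phi(x_k)\|$ and only bounded \emph{after} taking total expectation via Jensen. This is exactly what forces the $M_\Phi\sqrt{\mathcal{B}_1}$ contribution (instead of a term linear in a deterministic bias) and explains why the genuinely delicate analytical work lives upstream in \Cref{lemma:estbias} and \Cref{prop:estvar}, whose recursive trajectory analysis (built on \Cref{prop:expectjnjs}) propagates the inner SGD error $\mathbb{E}\|Y_k^{t}-y_k^*\|^2$ through the Jacobian recursion. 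Given those propositions together with the smoothness bound of \Cref{prop:jnlip}, the remaining descent argument is essentially bookkeeping that mirrors the deterministic proof, with the one subtlety that the inner-batch sampling contributes the extra $\tfrac{1}{\sqrt{S}}$ term while the outer-batch and Gaussian randomness are absorbed into the $\tfrac{p}{\sqrt{K}}$ decay.
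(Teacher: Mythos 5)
Your proposal is correct and follows essentially the same route as the paper's proof: the $L_\Phi$-smoothness descent step, the split of the cross term using the uniform bound $\|\nabla\Phi\|\leq M_\Phi$ and the conditional mean $\mathbb{E}_k\widehat\nabla\Phi(x_k)$, Jensen's inequality applied to the random bias to produce $M_\Phi\sqrt{\mathcal{B}_1}$, the variance bound $\Delta+\mathcal{B}_1$ from \Cref{prop:estvar}, and the telescoping with $\beta=\frac{1}{L_\Phi\sqrt{K}}$. You also correctly identify the one genuine subtlety relative to \Cref{theorem1} (the bias depends on the random $Y_k^N$ and can only be bounded after total expectation), which is exactly how the paper handles it.
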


\begin{proof}[Proof of \Cref{theorem2}]
Using the Lipschitzness of function $\Phi(x_k)$, we have
\begin{align*}
\Phi(x_{k+1}) \leq & \Phi(x_{k}) + \left\langle \nabla\Phi(x_{k}), x_{k+1} - x_k\right\rangle + \frac{L_{\phi}}{2}\big\|x_{k+1} - x_k\big\|^2 \\
\leq & \Phi(x_{k}) - \beta \langle\nabla\Phi(x_{k}), \widehat \nabla\Phi(x_{k})\rangle + \frac{L_{\phi}}{2}\beta^2\big\|\widehat \nabla\Phi(x_{k}) - \nabla\Phi(x_{k}) + \nabla\Phi(x_{k})\big\|^2 \\
\leq & \Phi(x_{k}) - \beta \langle\nabla\Phi(x_{k}), \widehat \nabla\Phi(x_{k})\rangle + L_{\phi}\beta^2\left(\big\| \nabla\Phi(x_{k})\big\|^2 + \big\|\widehat \nabla\Phi(x_{k}) - \nabla\Phi(x_{k})\big\|^2\right) 
\end{align*}
Hence, taking expectation over the above inequality yields 
\begin{align}
\mathbb{E}\Phi(x_{k+1}) \leq & \mathbb{E}\Phi(x_{k}) - \beta \mathbb{E}\langle\nabla\Phi(x_{k}), \widehat \nabla\Phi(x_{k})\rangle + L_{\phi}\beta^2\mathbb{E}\big\| \nabla\Phi(x_{k})\big\|^2 \nonumber \\ 
& + L_{\phi}\beta^2\mathbb{E}\big\|\widehat \nabla\Phi(x_{k}) - \nabla\Phi(x_{k})\big\|^2. \label{res:sfinal}
\end{align}
Also, based on \cref{hyperGboud}, we have 
\begin{align}
-\mathbb{E}\langle\nabla\Phi(x_{k}), \widehat \nabla\Phi(x_{k})\rangle =& -\mathbb{E}\langle\nabla\Phi(x_{k}), \widehat \nabla\Phi(x_{k}) - \nabla\Phi(x_{k})\rangle - \mathbb{E}\big\|\nabla\Phi(x_{k})\big\|^2 \nonumber \\
=& \mathbb{E}\left[-\langle\nabla\Phi(x_{k}), \mathbb{E}_k\widehat \nabla\Phi(x_{k}) - \nabla\Phi(x_{k})\rangle\right] - \mathbb{E}\big\|\nabla\Phi(x_{k})\big\|^2 \nonumber \\ 
\leq& \mathbb{E}\big\|\nabla\Phi(x_{k})\big\| \big\|\mathbb{E}_k\widehat \nabla\Phi(x_{k}) - \nabla\Phi(x_{k})\big\| - \mathbb{E}\big\|\nabla\Phi(x_{k})\big\|^2 \nonumber \\
\leq & M_{\Phi}\mathbb{E}\big\|\mathbb{E}_k\widehat \nabla\Phi(x_{k}) - \nabla\Phi(x_{k})\big\| -\mathbb{E}\big\|\nabla\Phi(x_{k})\big\|^2, \nonumber
\end{align}
which, in conjunction with \cref{res:sfinal}, yields
\begin{align}
\mathbb{E}\Phi(x_{k+1}) \leq & \mathbb{E}\Phi(x_{k}) + \beta M_{\Phi} \mathbb{E}\big\|\mathbb{E}_k\widehat \nabla\Phi(x_{k}) - \nabla\Phi(x_{k})\big\| - \left(\beta -  L_{\phi}\beta^2\right)\mathbb{E}\big\|\nabla\Phi(x_{k})\big\|^2 \nonumber \\ 
& + L_{\phi}\beta^2\mathbb{E}\big\|\widehat \nabla\Phi(x_{k}) - \nabla\Phi(x_{k})\big\|^2. 
\end{align} 

Using the bounds established in \Cref{lemma:estbias} (along with Jensen's inequality) and \Cref{lemma:estvar}, we have
\begin{align}
\mathbb{E}\Phi(x_{k+1}) \leq & \mathbb{E}\Phi(x_{k}) + \beta M_{\Phi} \sqrt{\mathcal{B}_1} - \beta\left(1 -  L_{\phi}\beta\right)\mathbb{E}\big\|\nabla\Phi(x_{k})\big\|^2 + L_{\phi}\beta^2(\mathcal{B}_1 + \Delta). \nonumber
\end{align} 

Summing up the above inequality over $k$ from $k=0$ to $k=K-1$ yields 
\begin{align*}
\mathbb{E}\Phi(x_{K}) \leq & \mathbb{E}\Phi(x_{0}) + \beta K M_{\Phi} \sqrt{\mathcal{B}_1} - \beta\left(1 -  L_{\phi}\beta\right)\sum_{k=0}^{K-1}\mathbb{E}\big\|\nabla\Phi(x_{k})\big\|^2  + L_{\phi} K \beta^2 (\mathcal{B}_1 + \Delta).  
\end{align*}
Setting $\beta = \frac{1}{L_{\phi}\sqrt{K}}$ and rearranging the above inequality yield 
\begin{align*}
\frac{1-\frac{1}{\sqrt{K}}}{K}\sum_{k=0}^{K-1}\mathbb{E}\big\|\nabla\Phi(x_{k})\big\|^2 \leq & \frac{\left(\Phi(x_{0}) - \Phi^*\right)L_{\Phi}}{\sqrt{K}} + M_{\Phi} \sqrt{\mathcal{B}_1} + \frac{\mathcal{B}_1 + \Delta}{\sqrt{K}}. 
\end{align*}
Setting $Q=\mathcal{O}(1)$, $D_f = \mathcal{O}(1)$, and $\mu = \mathcal{O}\left(\frac{1}{\sqrt{Kdp^3}}\right)$ in the expressions of $\mathcal{B}_1$ and $\Delta$ completes the proof.  
\end{proof}

\end{document}